\documentclass{article} 
\usepackage{iclr2027_conference,times}
\usepackage[T1]{fontenc} 

\usepackage{amsmath,amsfonts,bm}

\def\eqref#1{equation~\ref{#1}}

\def\1{\bm{1}}

\DeclareMathAlphabet{\mathsfit}{\encodingdefault}{\sfdefault}{m}{sl}
\SetMathAlphabet{\mathsfit}{bold}{\encodingdefault}{\sfdefault}{bx}{n}

\usepackage{hyperref}
\usepackage{url}

\usepackage{hyperref}       
\usepackage{url}            
\usepackage{booktabs}       
\usepackage{amsfonts}       
\usepackage{nicefrac}       
\usepackage{microtype}      
\usepackage{xcolor}         
\usepackage{amsmath,amssymb,amsthm}
\usepackage{multirow}
\usepackage[table]{xcolor}
\usepackage{graphicx}
\usepackage{booktabs}
\usepackage{multirow}
\usepackage{subcaption}
\usepackage{todonotes}

\usepackage{algorithm}
\usepackage{algpseudocode}

\definecolor{oursgray}{RGB}{240,240,248}
\newtheorem{theorem}{Theorem}
\newtheorem{proposition}[theorem]{Proposition}

\title{Correcting Guided Diffusion Trajectories\\ with Spectral Alignment}

\author{Gihoon Kim and Taesup Kim \\
Graduate School of Data Science, Seoul National University
}

\iclrfinalcopy
\usepackage{etoolbox}
\makeatletter
\patchcmd{\@maketitle}{Published as a conference paper at ICLR 2027}{Preprint}
  {}{\PackageError{arxiv}{Could not replace the ICLR publication header}{}}
\makeatother
\begin{document}

\maketitle

\begin{abstract}
The practical success of conditional image generation hinges on fine-grained differences in condition alignment and visual fidelity. Classifier-free guidance (CFG) is central to this success, but its lack of an explicit criterion makes it difficult to assess whether the guided trajectory is progressing as intended. To address this gap, we show that spectral alignment provides a principled criterion for understanding guidance behavior and improving guided diffusion sampling through adaptive correction. Our analysis identifies the spectra of intermediate states as an indicator of consistency with the expected spectral evolution of the forward process. Based on this observation, we introduce \textit{Spectral Correction Guidance}, a method that corrects deviations from an analytic reference spectrum during sampling. The proposed method is training-free and applicable across diffusion backbones and conditional generation tasks without modifying the underlying model. Experiments demonstrate consistent gains in preference-based metrics over baseline guidance methods in text-to-image generation and improved generation quality over CFG on ImageNet. These improvements persist across a range of guidance scales and with fewer denoising steps. Our analyses and ablations provide insight into guidance behavior and how the proposed method affects generation quality.
\end{abstract}

\section{Introduction} 
\label{sec:intro}

Denoising diffusion models~\citep{sohl2015deep,ho2020denoising,song2020denoising,song2020score,dhariwal2021diffusion} have shown strong generative capabilities across diverse domains~\citep{ho2022video,liu2023audioldm,chi2025diffusion}, including image synthesis~\citep{nichol2021improved,karras2022elucidating,peebles2023scalable}. They exhibit stable convergence during training while modeling the reverse of a forward noising process that perturbs clean data toward a noise distribution. The learned reverse process captures rich data distributions, enabling sample generation through denoising from high to low noise levels. Building on this capability, diffusion models support open-ended conditional generation tasks such as text-to-image synthesis~\citep{rombach2022high,saharia2022photorealistic,podell2023sdxl,chen2023pixart,chen2024pixart,xie2024sana,esser2024scaling}.

Classifier-free guidance (CFG)~\citep{ho2022classifier} has become a standard technique for steering sampling toward the target condition in diffusion models. As illustrated in Figures~\ref{fig:teaser_cond} and~\ref{fig:teaser_cfg}, conditional sampling without guidance often fails to capture the target condition, whereas CFG improves both condition alignment and perceptual quality. These benefits depend on how guidance is applied along the denoising trajectory~\citep{hong2023improving,karras2024guiding,sadat2024no}. However, standard CFG uses a fixed guidance scale, limiting its ability to provide appropriate guidance at each denoising step~\citep{kynkaanniemi2024applying}.

Recent guidance methods aim to enhance the benefits of CFG while mitigating its adverse effects. They adjust the guidance strength across timesteps~\citep{kynkaanniemi2024applying, malarz2025classifier, jin2025stage}, modify the direction of guidance with geometric constraints~\citep{sadat2024eliminating, chung2024cfg++, kwon2025tcfg, jin2025angle}, or modulate guidance within predefined frequency bands~\citep{zhang2024frecas,sadat2025guidance,song2026rethinking}. However, these benefits may not extend beyond the settings these methods are designed to address, such as high-guidance regimes. In Section~\ref{sec:t2i_generation}, experimental results show that these methods yield limited gains across a range of generation settings, where vanilla CFG remains a strong baseline.

In this paper, we argue that these limitations arise from reliance on constraints or suppression rules without a criterion to guide correction along the denoising trajectory. However, defining such a criterion is challenging in the conventional spatial domain, since coordinate-based representations encode local structures and reflect sample-specific characteristics. To address this difficulty, we identify alignment between the power spectra of guided intermediate states and the expected spectra under the forward process as a key indicator of effective guidance. This provides a complementary perspective, capturing global patterns of diffusion dynamics along the denoising trajectory.
    


Building on this observation, we propose \textit{Spectral Correction Guidance}, a training-free method for correcting guided denoising trajectories according to the spectral dynamics of the diffusion process. We derive a criterion from the power-law spectral statistics of natural images~\citep{ruderman1997origins}, yielding a distributional reference that is not tied to individual samples or specific failure patterns. Based on this criterion, our method corrects deviations from the expected spectral behavior in intermediate states. The proposed correction enables adaptive adjustments across frequencies and timesteps without relying on a prescribed correction pattern.


We further validate our method through extensive experiments on diverse conditional image generation tasks. In text-to-image generation, the proposed correction improves preference-based metrics across multiple backbones and guidance settings, indicating better text alignment and visual quality. In class-conditional generation, our method achieves lower FID~\citep{heusel2017gans} and higher Inception Score~\citep{salimans2016improved} than CFG, even with fewer sampling steps. These results demonstrate that correcting spectral deviations during sampling can improve generation quality without additional training. Ablation studies examine how correction strength and the timestep range over which correction is applied affect generation outputs. We also discuss how the proposed method relates to existing guidance approaches from a spectral perspective.


\begin{figure}[t]
    \centering
    \begin{subfigure}[b]{0.28\linewidth}
        \centering
        \caption{No CFG}
        \includegraphics[width=1.0\linewidth]{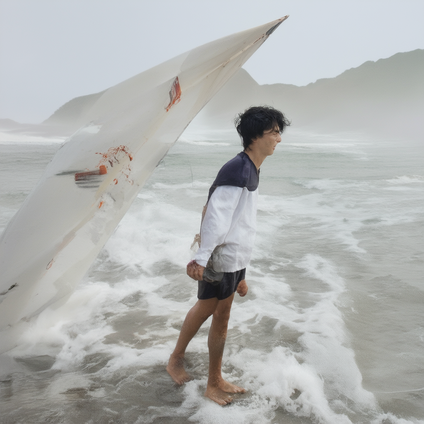}
        \vspace{0.15em}
        {\scriptsize Spectral distance = 0.014}
        \label{fig:teaser_cond}
    \end{subfigure}
    \hspace{0.015\linewidth}
    \begin{subfigure}[b]{0.28\linewidth}
        \centering
        \caption{CFG}
        \includegraphics[width=1.0\linewidth]{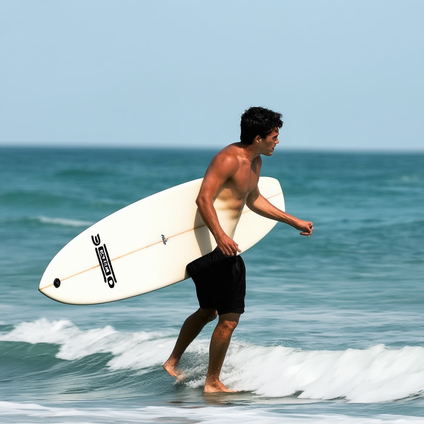}
        \vspace{0.15em}
        {\scriptsize Spectral distance = 0.006}
        \label{fig:teaser_cfg}
    \end{subfigure}
    \hspace{0.015\linewidth}
    \begin{subfigure}[b]{0.28\linewidth}
        \centering
        \caption{Ours}
        \includegraphics[width=1.0\linewidth]{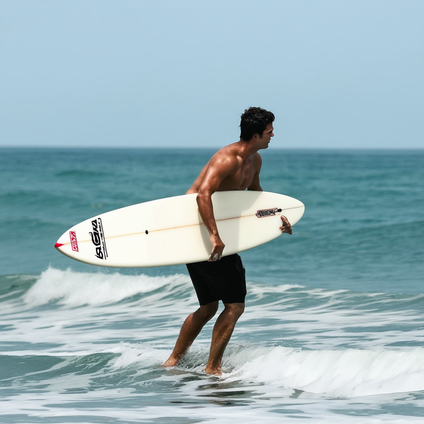}
        \vspace{0.15em}
        {\scriptsize Spectral distance = 0.002}
        \label{fig:teaser_ours}
    \end{subfigure}
    \vspace{-4pt}
    \caption{For the prompt ``\texttt{a male surfer in black shorts carrying a white board}'' on SD3.5~\citep{esser2024scaling}, sampling without CFG fails to form a reliable conditional image.
    CFG produces a more plausible image but introduces an artifact where the surfboard overlaps with the person.
    Our method better preserves the intended composition and improves visual coherence. It also yields the lowest spectral distance, which we analyze in Section~\ref{sec:spectral_dynamics}.}
    \label{fig:teaser}
    \vspace{-8pt}
\end{figure}

\vspace{-4pt}
\section{Background}
\label{sec:background}
\vspace{-4pt}
\paragraph{Denoising Diffusion Models.}



Diffusion models define a predefined forward noising process $q$ and a learned reverse generative process $p_\theta$~\citep{luo2022understanding, lai2025principles}.
The forward process is given by
$
q(z_{1:T}\mid z_0) := \prod_{t=1}^{T} q(z_t\mid z_{t-1}),
$
where $z$ denotes either a data sample or its latent representation.
Under an appropriate noise schedule, the state at time $t$ can be sampled from $q(z_t\mid z_0)$ as
\begin{equation}
z_t = \alpha_t z_0 + \sigma_t \epsilon,
\quad \epsilon \sim \mathcal{N}(0, I),
\label{eq:zt_def}
\end{equation}
where $\alpha_t$ and $\sigma_t$ are time-dependent scaling factors. The forward process in Eq.~\ref{eq:zt_def} induces the marginal distribution
\begin{equation}
q(z_t) = \int q(z_t \mid z_0)\, q(z_0)\, dz_0,
\label{eq:forward_marginal}
\end{equation}
where $q(z_0)$ denotes the data distribution. The learned reverse process $p_\theta$ parameterizes transitions $p_\theta(z_t\mid z_{t+1})$, which model the distribution of the less noisy state $z_t$ given the noisier state $z_{t+1}$. Accordingly, the reverse transition defines the marginal at time $t$ as
\begin{equation}
p_\theta(z_t) = \int p_\theta(z_t\mid z_{t+1})\, p_\theta(z_{t+1})\, dz_{t+1},
\label{eq}
\end{equation}
where $p_\theta(z_t)$ denotes the distribution induced by reverse sampling at time $t$. By matching the reverse marginals $p_\theta(z_t)$ to the corresponding forward marginals $q(z_t)$, the diffusion model approximates the data distribution $q(z_0)$. 



In practice, the generative process can be parameterized in different ways,
including noise prediction~\citep{ho2020denoising}, velocity prediction~\citep{liu2022flow,lipman2022flow}, and score-based modeling~\citep{song2019generative,song2020score}.
Across these variants, the models share the same objective of reversing the
noising process, as formalized by the above notation.

\paragraph{Classifier-free Guidance (CFG).}
CFG is introduced as an alternative to classifier guidance~\citep{dhariwal2021diffusion}, which uses the classifier gradient $\nabla_{z_t}\log p(c \mid z_t)$ to steer sampling toward the target condition $c$.
With a noise prediction model $\epsilon_\theta$, classifier guidance modifies the denoising prediction as
\begin{equation}
\epsilon_{\mathrm{CG}}(z_t,t,c)
=
\epsilon_\theta(z_t,t)
-
s \, \sigma_t \nabla_{z_t}\log p(c \mid z_t),
\label{eq:classifier_guidance}
\end{equation}
where $s$ is the classifier guidance scale.
From Bayes' rule, the classifier gradient can be expressed as
$\nabla_{z_t}\log p(c \mid z_t) = \nabla_{z_t}\log p(z_t \mid c) - \nabla_{z_t}\log p(z_t)$.
Substituting this identity into Eq.~\ref{eq:classifier_guidance}
and using Tweedie's formula~\citep{efron2011tweedie}
to relate score terms to noise predictions yields the standard CFG form
\begin{equation}
\epsilon_{\mathrm{CFG}}(z_t, t, c)
=
\epsilon_\theta(z_t, t, \emptyset)
+
w \bigl(\epsilon_\theta(z_t, t, c)
- \epsilon_\theta(z_t, t, \emptyset)\bigr),
\label{eq:cfg}
\end{equation}
where $w$ denotes the guidance scale and $\emptyset$ denotes the null condition.



Eq.~\ref{eq:cfg} has been widely adopted in modern diffusion models
due to its effectiveness in improving conditional generation quality.
Nevertheless, determining appropriate guidance remains challenging in practice,
as it remains difficult to assess whether the magnitude $w$ and the direction
$\epsilon_\theta(z_t,t,c)-\epsilon_\theta(z_t,t,\emptyset)$
of the guidance term are appropriate along the denoising trajectory.

\begin{figure}[t]
    \centering
    \begin{subfigure}[b]{0.30\linewidth}
        \centering
        \captionsetup{labelformat=empty}
        \caption{\quad (a) Spectral Distance}
        \label{fig:cfg_spectral_distance}
        \includegraphics[width=0.9\linewidth]{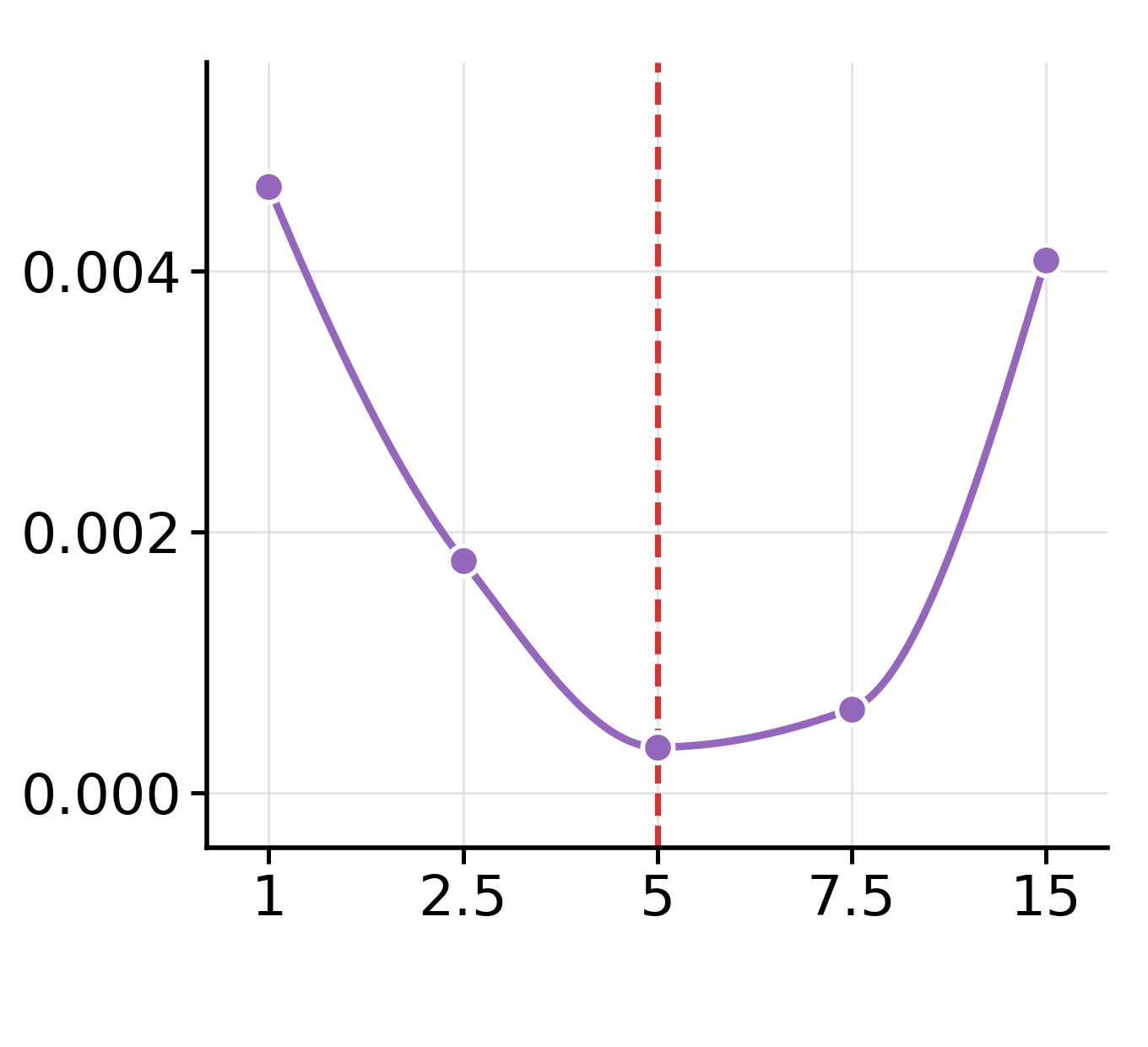}
    \end{subfigure}%
    \hfill
    \begin{subfigure}[b]{0.30\linewidth}
        \centering
        \captionsetup{labelformat=empty}
        \caption{\quad (b) FID}
        \label{fig:cfg_fid}
        \includegraphics[width=0.9\linewidth]{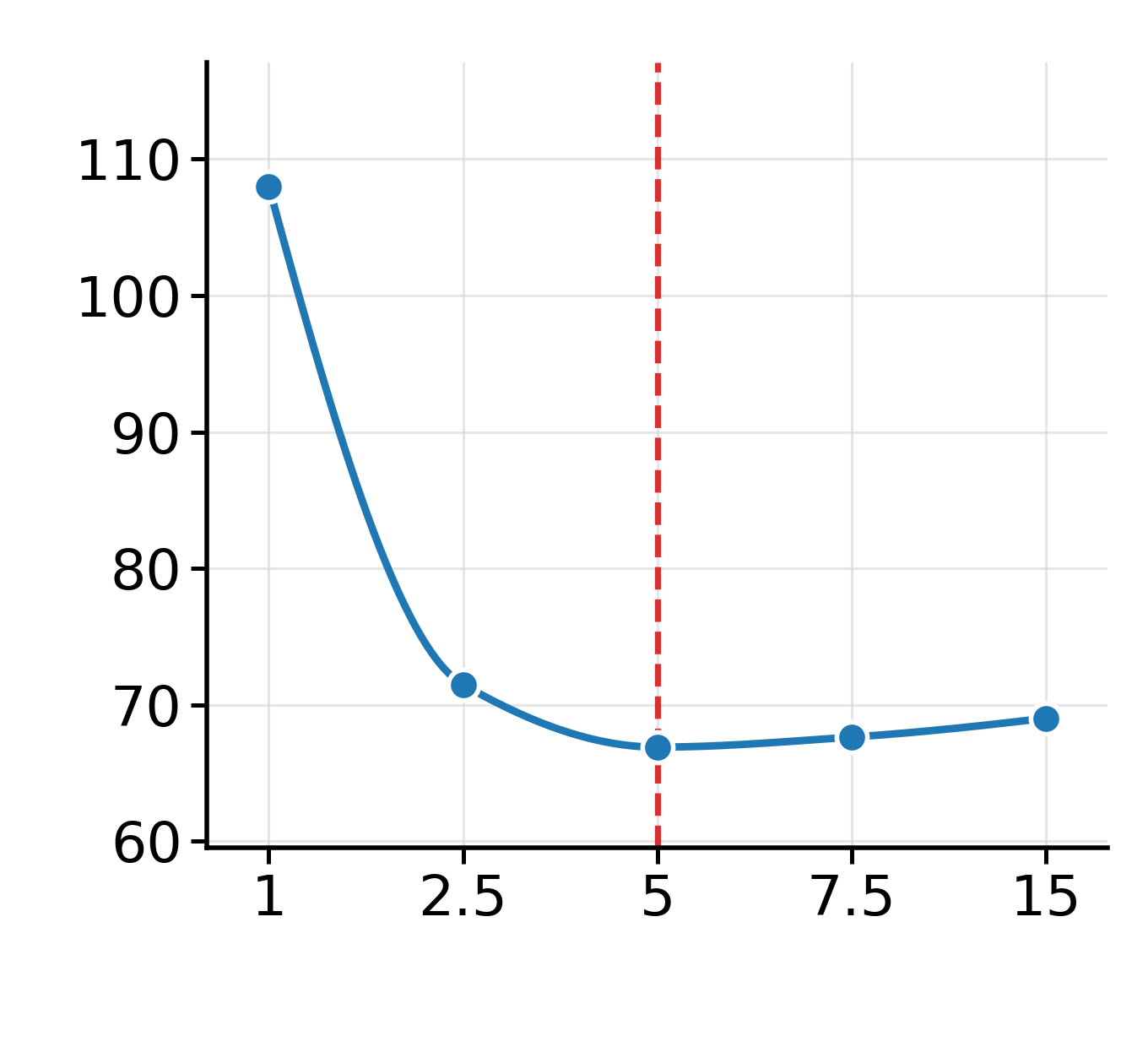}
    \end{subfigure}%
    \hfill
    \begin{subfigure}[b]{0.30\linewidth}
        \centering
        \captionsetup{labelformat=empty}
        \caption{\quad (c) IS}
        \label{fig:cfg_is}
        \includegraphics[width=0.9\linewidth]{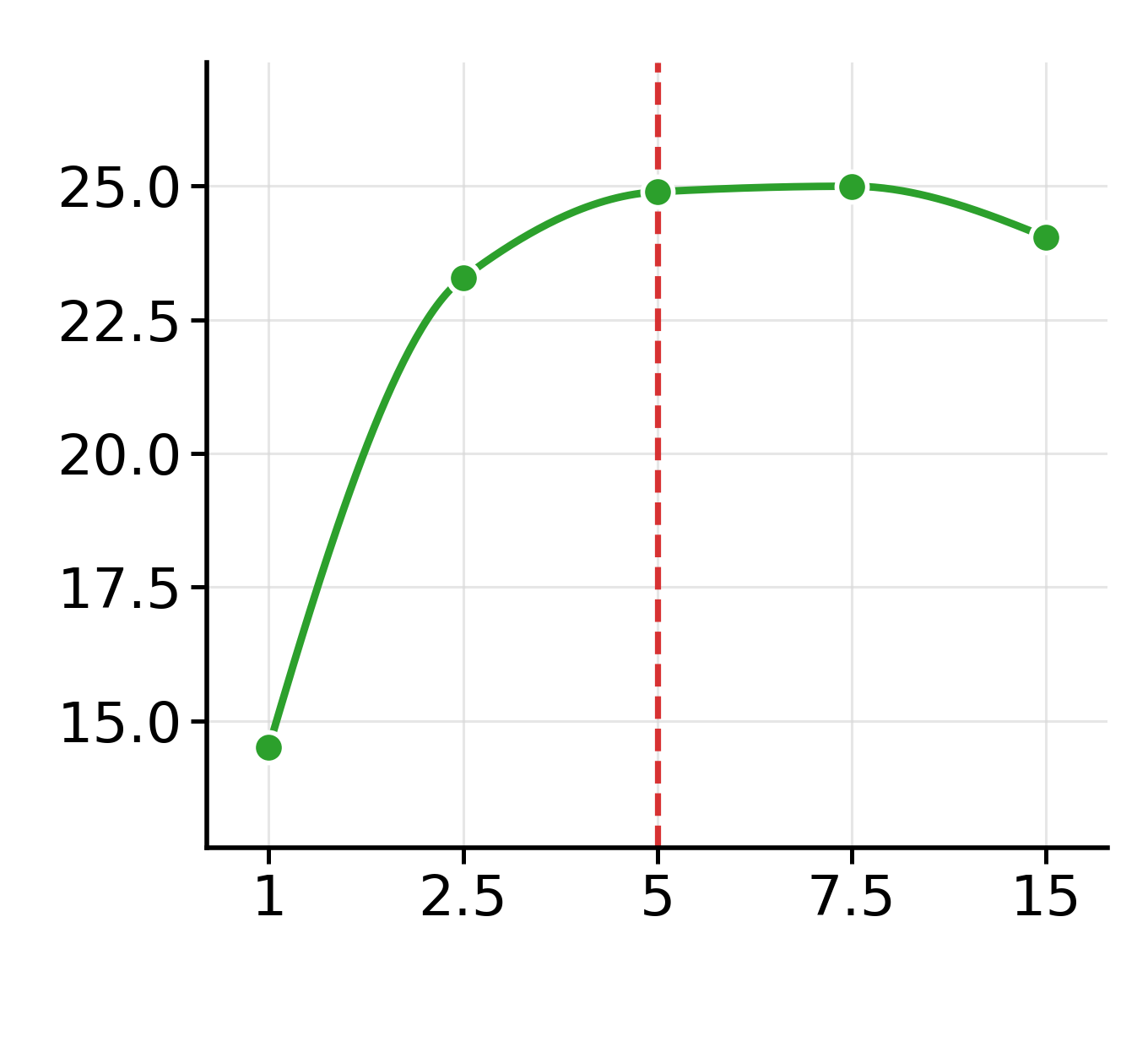}
    \end{subfigure}
    \vspace{-4pt}
    \caption{Effect of guidance scale on spectral distance defined in Eq.~\ref{eq:spectral_distance}, FID, and IS. The x-axis is the guidance scale and the y-axis is the metric value. Guidance scale values are uniformly spaced for visual clarity. The red dashed line marks the default guidance scale.}
    \label{fig:guidance_metric_correlation}
    \vspace{-6pt}
\end{figure}

\section{Spectral Dynamics in the Diffusion Process}
\label{sec:spectral_dynamics}

In this section, we examine how the guided diffusion trajectory evolves during sampling from a spectral perspective. To generate plausible samples from the data distribution, the reverse marginals $p_\theta(z_t)$ defined in Eq.~\ref{eq} should remain close to the corresponding forward marginals $q(z_t)$ in Eq.~\ref{eq:forward_marginal} across timesteps. We thus examine the spectral statistics of intermediate states to assess this marginal alignment. The power spectra of intermediate states provide a tractable summary of how their energy is distributed across frequency components. Agreement between their expected power spectra is a necessary condition for matching the forward and reverse marginals.

We compare the expected power spectra of forward and reverse states across guidance scales under the same noise schedule, as detailed in Appendix~\ref{app:implementation-details}. 
Let $\hat{z}_t=\mathcal{F}(z_t)$ be the Fourier representation of $z_t$, and define its power spectrum as $P_t(\omega)=|\hat{z}_t(\omega)|^2$,
where $\omega>0$ denotes the radial frequency index.
The expected power spectra of forward and reverse states are
$\mathbb{E}_{z_t \sim q(z_t)}[P_t(\omega)]$ and
$\mathbb{E}_{z_t \sim p_\theta(z_t)}[P_t(\omega)]$, respectively.
We quantify their discrepancy across timesteps and radial frequencies using the spectral distance defined as
\begin{equation}
\label{eq:spectral_distance}
D_{\mathrm{spec}}
=
\mathbb{E}_{t,\omega}
\left[
\left(
\log \mathbb{E}_{z_t\sim p_{\theta}(z_t)}[P_t(\omega)]
-
\log \mathbb{E}_{z_t\sim q(z_t)}[P_t(\omega)]
\right)^2
\right].
\end{equation}
Figure~\ref{fig:cfg_spectral_distance} shows the spectral distance at different guidance scales in Eq.~\ref{eq:cfg}.
Moderate guidance reduces this distance relative to conditional sampling without guidance at scale $1$.
However, excessive guidance increases the spectral distance, indicating that the reverse trajectory deviates further from the expected spectral behavior.
Figures~\ref{fig:cfg_fid} and~\ref{fig:cfg_is} further show that spectral discrepancy is correlated with generation quality. Lower spectral discrepancy corresponds to lower FID and higher IS, indicating stronger generative performance. These observations suggest that spectral alignment can indicate whether guided sampling remains consistent with the expected diffusion dynamics. However, such corresponding forward states are unavailable during sampling, making it difficult to correct the denoising trajectory across timesteps and noise schedules.

\begin{figure*}[t]
    \centering

    \raisebox{0.2em}{%
    \begin{subfigure}[b]{0.15\textwidth}
        \centering
        
        \begin{subfigure}[b]{\linewidth}
            \centering
            \makebox[\linewidth][c]{%
                \raisebox{1.75\height}{\rotatebox[origin=c]{90}{Ours}}\hspace{0.4em}%
                \includegraphics[width=0.9\linewidth]{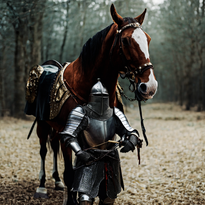}%
            }
            \label{fig:spectral_generated_ours}
        \end{subfigure}

        \vspace{-1em}

        \begin{subfigure}[b]{\linewidth}
            \centering
            \makebox[\linewidth][c]{%
                \raisebox{1.75\height}{\rotatebox[origin=c]{90}{CFG}}\hspace{0.4em}%
                \includegraphics[width=0.9\linewidth]{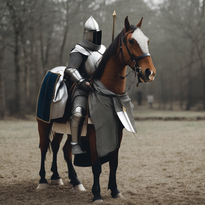}%
            }
            \label{fig:spectral_generated_cfg}
        \end{subfigure}
    \end{subfigure}
    }
    \hspace{0.006\textwidth}
    \begin{subfigure}[b]{0.26\textwidth}
        \centering
        \caption*{Baseline $S_t(\omega)$}
        \includegraphics[width=\linewidth]{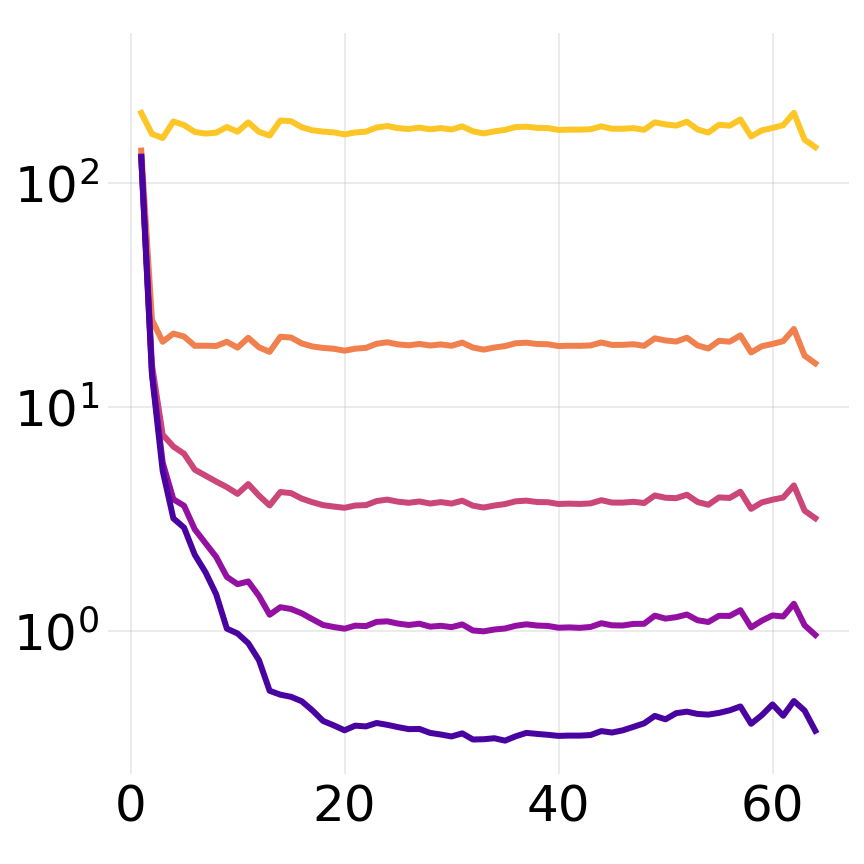}
        \label{fig:spectral_plot_cfg}
    \end{subfigure}
    \hspace{0.006\textwidth}
    \begin{subfigure}[b]{0.26\textwidth}
        \centering
        \caption*{Corrected $\tilde{S}_t(\omega)$}
        \includegraphics[width=\linewidth]{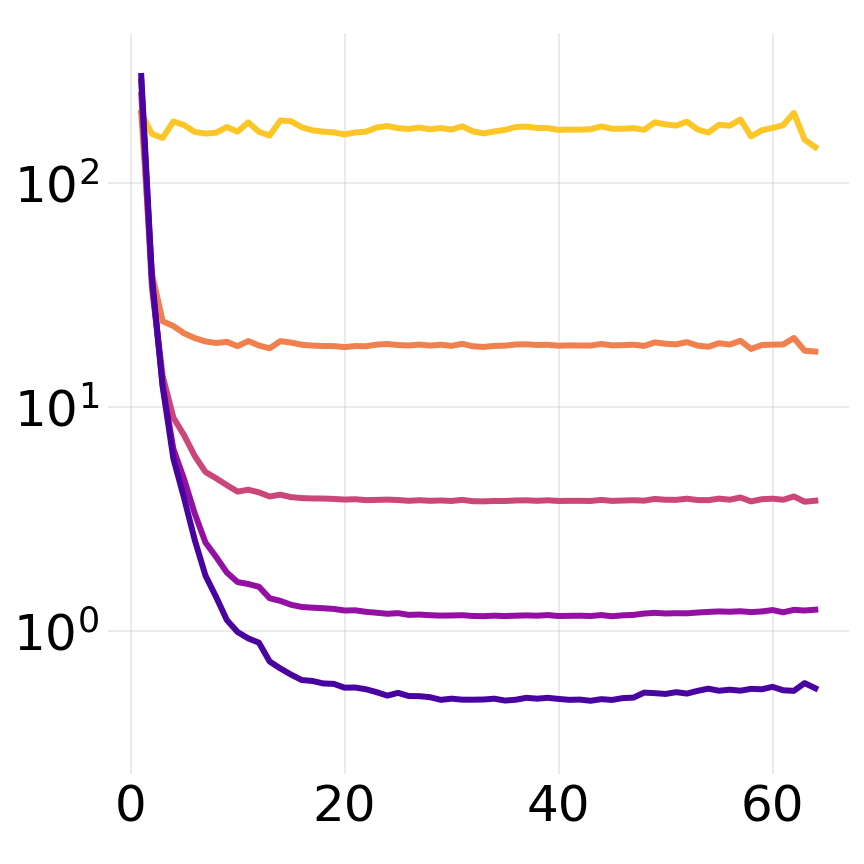}
        \label{fig:spectral_plot_ours}
    \end{subfigure}
    \hspace{0.006\textwidth}
    \begin{subfigure}[b]{0.26\textwidth}
        \centering
        \caption*{Target $T_t(\omega)$}
        \includegraphics[width=\linewidth]{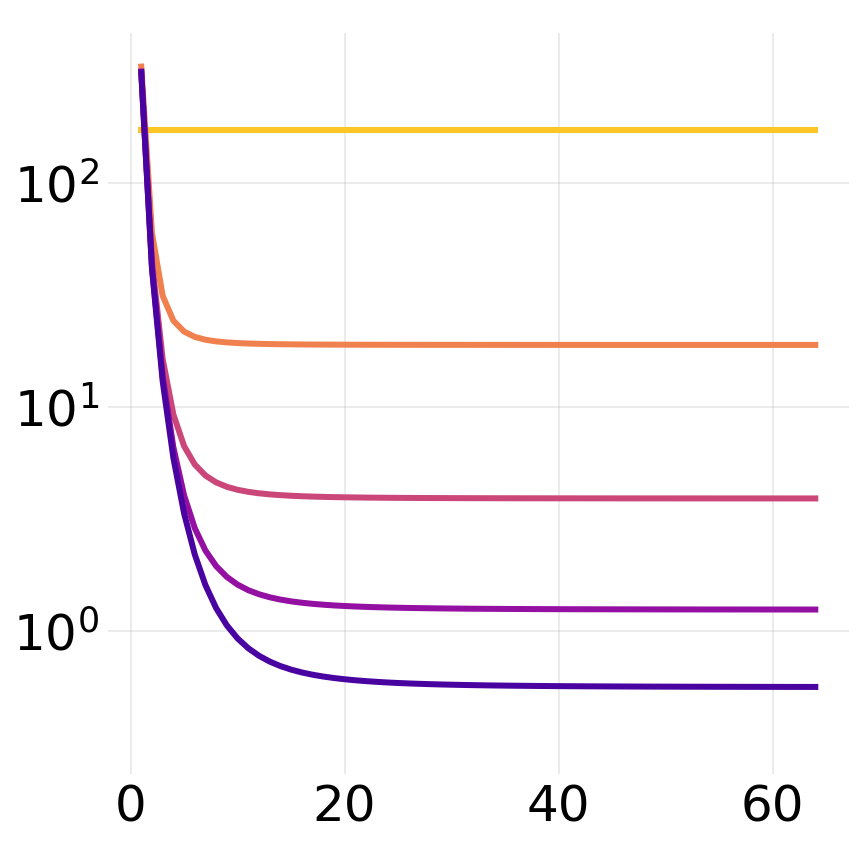}
        \label{fig:spectral_plot_ref}
    \end{subfigure}
    \vspace{-8pt}
    \caption{
Qualitative and spectral comparison for the prompt ``\texttt{a man holding a horse dressed like a knight}''.
Generated images are included for visual reference, and the power spectra are shown over denoising timesteps, with radial frequency bins on the $x$-axis and power on the $y$-axis.
The corrected spectrum $\tilde{S}_t(\omega)$ follows the analytic target $T_t(\omega)$ more closely than the baseline spectrum $S_t(\omega)$, reducing artifacts and improving visual quality in the generated image.
}
    \label{fig:spectral_qual_plot}
    \vspace{-4pt}
\end{figure*}

\vspace{-4pt}
\section{Spectral Correction for Classifier-Free Guidance}
\label{sec:method_spec}
\vspace{-4pt}

To enable correction without access to corresponding forward states, we derive an analytic reference from the statistical structure of natural image spectra.
Using the forward process in Eq.~\ref{eq:zt_def} and the spectral properties of Gaussian noise, we extend this reference to the expected power spectra of intermediate states along the forward diffusion trajectory.


\begin{proposition}
\label{prop:power_law_intermediate}
Following the power-law model of natural image spectra~\citep{ruderman1997origins},
suppose that the expected power spectrum of $z_0$ takes the form\footnote{We extend the standard power-law form with a constant offset $c$, where setting $c=0$ recovers the conventional form.}
\begin{equation}
\label{eq:Ruderman}
\mathbb{E}[P_0(\omega)] = C \omega^{-\beta} + c,
\quad \omega > 0,\; C, \beta > 0,\; c \ge 0.
\end{equation}
By definition, white Gaussian noise has a flat power spectrum,
\[
\mathbb{E}[P_\epsilon(\omega)] = c_\epsilon,
\quad \omega > 0,\; c_\epsilon \ge 0.
\]
For the forward process $z_t=\alpha_t z_0+\sigma_t\epsilon$ in Eq.~\ref{eq:zt_def},
$z_0 \perp \epsilon$ and $\mathbb{E}[\epsilon]=0$, so the cross term vanishes in expectation.
Therefore, the expected power spectrum of $z_t$ has the form
\begin{equation}
\label{eq:proposition}
\mathbb{E}[P_t(\omega)] = \lambda_t \omega^{-\beta} + \gamma_t,
\quad \omega > 0,\; \lambda_t, \gamma_t \ge 0,\; \beta > 0,
\end{equation}
where $\lambda_t$ and $\gamma_t$ are determined by
$\alpha_t,\sigma_t$ and the constants in Eq.~\ref{eq:Ruderman}.
\end{proposition}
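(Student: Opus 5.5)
The plan is to compute $\mathbb{E}[P_t(\omega)]$ directly from the forward process in Eq.~\ref{eq:zt_def}, using the linearity of the Fourier transform. Write $\hat z_t(\omega)=\alpha_t\hat z_0(\omega)+\sigma_t\hat\epsilon(\omega)$ and expand the squared modulus:
\begin{equation*}
P_t(\omega)=\alpha_t^2|\hat z_0(\omega)|^2+\sigma_t^2|\hat\epsilon(\omega)|^2+2\alpha_t\sigma_t\,\mathrm{Re}\bigl(\hat z_0(\omega)\overline{\hat\epsilon(\omega)}\bigr).
\end{equation*}
I then take the expectation term by term. Here I treat $P_t(\omega)$ either at a single frequency or as an average over the radial shell of index $\omega$. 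The shell-average version follows by linearity once the pointwise identity holds.

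For the cross term, I use $z_0\perp\epsilon$ to factor the expectation:
\begin{equation*}
\mathbb{E}\bigl[\hat z_0(\omega)\overline{\hat\epsilon(\omega)}\bigr]=\mathbb{E}[\hat z_0(\omega)]\,\overline{\mathbb{E}[\hat\epsilon(\omega)]}.
\end{equation*}
Since $\mathbb{F}$ is linear, $\mathbb{E}[\hat\epsilon]=\mathcal{F}(\mathbb{E}[\epsilon])=0$, so the cross term vanishes. The only point needing care is integrability: $\mathbb{E}|\hat z_0(\omega)|<\infty$. This follows from the finiteness of $\mathbb{E}[P_0(\omega)]$ assumed in Eq.~\ref{eq:Ruderman} together with Cauchy--Schwarz. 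This is the one step I would state explicitly.

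The remaining two terms use the hypotheses as given: $\mathbb{E}[P_0(\omega)]=C\omega^{-\beta}+c$ and $\mathbb{E}[P_\epsilon(\omega)]=c_\epsilon$. Substituting gives
\begin{equation*}
\mathbb{E}[P_t(\omega)]=\alpha_t^2C\,\omega^{-\beta}+\alpha_t^2c+\sigma_t^2c_\epsilon .
\end{equation*}
I then set $\lambda_t=\alpha_t^2C\ge0$ and $\gamma_t=\alpha_t^2c+\sigma_t^2c_\epsilon\ge0$. Both are nonnegative because $C>0$ and $c,c_\epsilon\ge0$. The exponent $\beta$ is unchanged, which yields Eq.~\ref{eq:proposition}.

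I do not expect a real obstacle here. The main thing to get right is the cross term: it needs independence and zero mean, but it does not require $z_0$ to have zero mean. I would also note that the spectrum of $\epsilon$ is flat for $\omega>0$, with $c_\epsilon=\sigma^2$ times the normalization constant of the DFT. That fact is only invoked as the stated hypothesis.
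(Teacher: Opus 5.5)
Your proposal is correct and follows the paper's proof essentially line by line: linearity of the Fourier transform, expansion of $|\alpha_t\hat z_0(\omega)+\sigma_t\hat\epsilon(\omega)|^2$, vanishing of the cross term by independence and $\mathbb{E}[\epsilon]=0$, and the identification $\lambda_t=\alpha_t^2C$, $\gamma_t=\alpha_t^2c+\sigma_t^2c_\epsilon$. Your integrability check via Cauchy--Schwarz and your remark that the identity passes to radial shell averages by linearity are small refinements that the paper leaves implicit.
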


Proposition~\ref{prop:power_law_intermediate} shows that the expected power spectrum at each timestep can be specified in an analytic functional form.
This formulation allows us to compute the reference spectrum for assessing spectral alignment without simulating the corresponding forward states.



Based on the criterion in Eq.~\ref{eq:proposition}, we formulate a spectral correction for guided denoising.
Let $S_t(\omega)$ denote the power spectrum of the current reverse state $z_t$,
and let $T_t(\omega)$ denote the expected power spectrum given by this criterion, which serves as the correction target at timestep $t$.
We correct the current spectrum by interpolating between $S_t(\omega)$ and $T_t(\omega)$ in the log-spectral domain:
\begin{equation}
\log \tilde{S}_t(\omega)
=
(1-\lambda)\log S_t(\omega)
+
\lambda \log T_t(\omega),
\label{eq:log_spectral_correction}
\end{equation}
where $\lambda\in[0,1]$ controls the correction strength. The use of log space accommodates the wide dynamic range of spectral power.
Eq.~\ref{eq:log_spectral_correction} can be equivalently written as
\begin{equation}
\tilde{S}_t(\omega)
=
S_t(\omega)
\left(\frac{T_t(\omega)}{S_t(\omega)}\right)^\lambda.
\label{eq:spectral_correction}
\end{equation}
Eq.~\ref{eq:spectral_correction} shows that the correction rescales the current spectrum by the spectral ratio $R_t(\omega)=T_t(\omega)/S_t(\omega)$ with strength $\lambda$. This provides an adaptive correction that increases power when $R_t(\omega)>1$ and decreases power when $R_t(\omega)<1$.

Algorithm~\ref{alg:spectrum_ratio_simple} summarizes how the correction is incorporated into the denoising process. The proposed correction preserves the phase of the current latent and modifies only its Fourier magnitude to match $\tilde{S}_t(\omega)$. This adjustment enables correction of spectral discrepancies while retaining the spatial information encoded in the Fourier phase. Figure~\ref{fig:spectral_qual_plot} visualizes this correction process, showing that our method aligns the corrected spectrum $\tilde{S}_t(\omega)$ with the target $T_t(\omega)$ in both overall energy level and frequency-dependent shape. Our method therefore employs the expected spectral evolution of the forward process as a reference for correcting guided denoising trajectories.

In practice, we estimate the coefficients in Eq.~\ref{eq:Ruderman} from the spectral statistics of real image samples~\citep{lin2014microsoft} rather than through model-specific performance tuning.
We perform this estimation in the $z_0$ space of each backbone without selecting samples to match its training data~\citep{deng2009imagenet,schuhmann2022laion}.
We compute radial power spectra by averaging two-dimensional Fourier power over radial frequency bins. Given the estimated coefficients, Eq.~\ref{eq:proposition} determines the target spectrum $T_t(\omega)$ at each timestep through the corresponding noise schedule. To retain fine-grained frequency components~\citep{karras2021alias}, we relax the correction in the late denoising stage as discussed in Section~\ref{sec:abl}. This procedure applies a common correction principle across diffusion models with different spectral statistics and noise schedules.



\vspace{-4pt}
\section{A Spectral View of Existing Guidance Methods} 
\label{sec:spectral_view_guidance}
\vspace{-4pt}
In this section, we discuss related guidance strategies.
From a spectral perspective, we also examine how these strategies control distinct aspects of the guidance signal during denoising.

\vspace{-7pt}
\paragraph{Scheduled guidance scaling.} Scheduled guidance scaling replaces the constant guidance scale $w$ with a predefined timestep-dependent schedule. \citet{kynkaanniemi2024applying} restrict guidance to selected denoising intervals, while \citet{malarz2025classifier} and \citet{jin2025stage} adjust the guidance strength according to predefined functional forms.
By Parseval's theorem, adjusting the guidance scale at each timestep also modulates the Fourier-domain energy of the guidance term. In this sense, our method provides fine-grained scaling through spectral correction at each timestep and frequency.
\vspace{-7pt}

\paragraph{Geometry-aware guidance.} Geometry-aware guidance methods constrain the magnitude~\citep{jin2025angle} or direction~\citep{sadat2024eliminating} of guided updates, or adjust sampling to reduce deviations from the data manifold~\citep{chung2024cfg++,kwon2025tcfg}.
Our method shares the goal of keeping guided trajectories consistent with the data distribution, but approaches it through spectral alignment with the forward marginals encountered during training.
\vspace{-7pt}

\paragraph{Frequency-based guidance.} Frequency-based methods modulate guidance using predefined frequency partitions~\citep{zhang2024frecas}, applying different guidance strengths to separate bands~\citep{sadat2025guidance} or suppressing selected frequencies~\citep{song2026rethinking}. Although these methods also consider spectral information, their adjustments follow band-specific rules. In contrast, our method adapts the correction at each frequency based on spectral deviations from an analytic reference.

\begin{table*}[t]
\centering
\caption{Quantitative comparison of guidance methods on text-to-image diffusion models, including HPSv3. Our method achieves the strongest preference-based performance across models. Among existing guidance strategies, standard CFG remains the most competitive baseline in most cases.}
\vspace{-4pt}
\label{tab:t2i_cfg_ours_comparison}
\setlength{\tabcolsep}{7pt}
\renewcommand{\arraystretch}{1.15}
\resizebox{0.93\textwidth}{!}{
\begin{tabular}{llcccccccc}
\toprule
Model & $w$ & Method & HPSv3$\uparrow$ & ImageReward$\uparrow$ & PickScore$\uparrow$ & HPSv2$\uparrow$ & CLIP-T$\uparrow$ & FID$\downarrow$ & sFID$\downarrow$ \\
\midrule\midrule
\multirow{5}{*}{SDXL}
& \multirow{5}{*}{5.0}
& \cellcolor{gray!20}Ours
& \cellcolor{gray!20}\textbf{8.437}
& \cellcolor{gray!20}\textbf{0.731}
& \cellcolor{gray!20}\textbf{22.43}
& \cellcolor{gray!20}\textbf{0.279}
& \cellcolor{gray!20}\underline{0.264}
& \cellcolor{gray!20}\textbf{17.02}
& \cellcolor{gray!20}\textbf{47.94}\\
& & CFG
& \underline{8.096} & \underline{0.700} & \underline{22.41} & \underline{0.274} & \textbf{0.265} & 17.62 & 49.27 \\
& & CFG++
& 7.962 & 0.684 & 22.36 & 0.273 & \textbf{0.265} & \underline{17.46} & \underline{49.11} \\
& & TV-CFG
& 6.952 & 0.514 & 22.03 & 0.263 & \underline{0.264} & 18.94 & 50.14 \\
& & LF-CFG
& 7.045 & 0.566 & 21.95 & 0.267 & 0.262 & 18.00 & 52.75 \\
\midrule
\multirow{5}{*}{PixArt-$\alpha$}
& \multirow{5}{*}{4.5}
& \cellcolor{gray!20}Ours
& \cellcolor{gray!20}\textbf{9.957}
& \cellcolor{gray!20}\textbf{0.876}
& \cellcolor{gray!20}\textbf{22.63}
& \cellcolor{gray!20}\textbf{0.299}
& \cellcolor{gray!20}\underline{0.260}
& \cellcolor{gray!20}\textbf{27.95}
& \cellcolor{gray!20}66.35 \\
& & CFG
& \underline{9.571} & 0.845 & \underline{22.55} & \underline{0.298} & \underline{0.260} & 28.30 & 66.43 \\
& & CFG++
& 6.664 & 0.346 & 21.69 & 0.263 & 0.245 & \underline{28.27} & \underline{56.85} \\
& & TV-CFG
& 9.510 & \underline{0.864} & 22.43 & \underline{0.298} & \textbf{0.261} & 29.97 & 66.61 \\
& & LF-CFG
& 3.371 & 0.120 & 20.97 & 0.246 & 0.245 & 33.77 & \textbf{54.13} \\
\midrule
\multirow{5}{*}{SD3.5}
& \multirow{5}{*}{4.5}
& \cellcolor{gray!20}Ours
& \cellcolor{gray!20}\textbf{9.471}
& \cellcolor{gray!20}\textbf{0.986}
& \cellcolor{gray!20}\textbf{22.44}
& \cellcolor{gray!20}\textbf{0.284}
& \cellcolor{gray!20}\underline{0.268}
& \cellcolor{gray!20}20.24
& \cellcolor{gray!20}49.07\\
& & CFG
& \underline{9.406} & \underline{0.979} & \underline{22.41} & \textbf{0.284} & \textbf{0.269} & 20.31 & 48.79 \\
& & CFG++
& -4.826 & -1.082 & 19.70 & 0.174 & 0.229 & 74.49 & 66.35 \\
& & TV-CFG
& 8.618 & 0.812 & 22.06 & \underline{0.275} & 0.266 & \textbf{15.98} & \underline{44.22} \\
& & LF-CFG
& 8.631 & 0.889 & 22.13 & \underline{0.275} & \underline{0.268} & \underline{19.93} & \textbf{12.61} \\
\bottomrule
\end{tabular}
\vspace{-4pt}
}
\end{table*}

\vspace{-4pt}
\section{Experiments and Results}
\label{sec:exp_res}
\vspace{-4pt}
\paragraph{Experimental setup.}
We evaluate our method on text-to-image and class-conditional image generation.
For text-to-image generation, we compare against standard CFG~\citep{ho2022classifier} and representative methods from Section~\ref{sec:spectral_view_guidance}: TV-CFG~\citep{jin2025stage}, CFG++~\citep{chung2024cfg++}, and LF-CFG~\citep{song2026rethinking}.
We use 10K prompts sampled from the COCO validation set~\citep{lin2014microsoft}.
To assess generality across backbones, we conduct experiments with SDXL~\citep{podell2023sdxl}, PixArt-$\alpha$~\citep{chen2023pixart}, and SD3.5~\citep{esser2024scaling}.
We evaluate each model at its default guidance scale and at a higher guidance scale.
For class-conditional generation, we use DiT-XL/2~\citep{peebles2023scalable} on ImageNet~\citep{deng2009imagenet} and evaluate robustness across different numbers of sampling steps.
We generate 50 samples per ImageNet class, resulting in 50K samples in total.
Unless otherwise specified, all methods use the same prompts, labels, sampling budget, and guidance scales, and follow the recommended settings for each method.



\paragraph{Evaluation metrics.}
For text-to-image generation, we assess human preference using HPSv3~\citep{Ma_2025_ICCV}, ImageReward~\citep{xu2023imagereward}, PickScore~\citep{kirstain2023pick}, and HPSv2~\citep{wu2023human}, text-image alignment using CLIP-T~\citep{radford2021learning}, and distributional matching using FID~\citep{heusel2017gans} and sFID~\citep{nash2021generating}.
We complement these metrics with a human preference study.
For class-conditional generation, we follow the ADM evaluation protocol~\citep{dhariwal2021diffusion} and report FID, sFID, Inception Score (IS)~\citep{salimans2016improved}, Precision, and Recall to evaluate generation quality and distributional coverage across sampling budgets measured by the number of function evaluations (NFEs).

\begin{figure*}[!htbp]
    \centering
    \setlength{\tabcolsep}{2pt}
    \renewcommand{\arraystretch}{1.0}
    \scriptsize

    \begin{tabular}{c ccccc}
        & (a) Ours & (b) CFG & (c) CFG++ & (d) TV-CFG & (e) LF-CFG \\[0.25em]

        \multirow{4}{*}{\rotatebox{90}{\textbf{SDXL}}}
        & \includegraphics[width=0.18\linewidth]{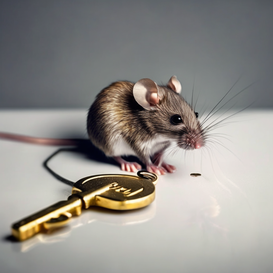}
        & \includegraphics[width=0.18\linewidth]{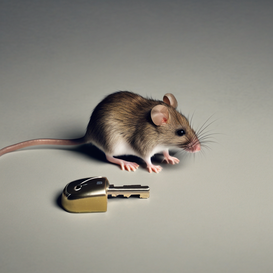}
        & \includegraphics[width=0.18\linewidth]{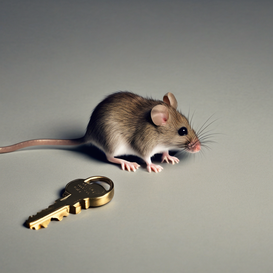}
        & \includegraphics[width=0.18\linewidth]{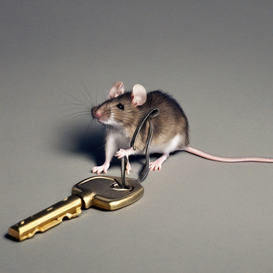}
        & \includegraphics[width=0.18\linewidth]{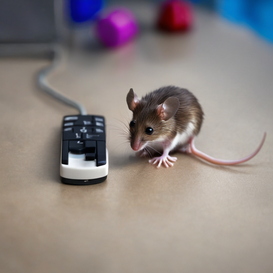} \\[-0.1em]
        & \multicolumn{5}{c}{\scriptsize \textbf{Prompt:} ``A mouse is sitting on a surface next to a key.''} \\[0.45em]
        & \includegraphics[width=0.18\linewidth]{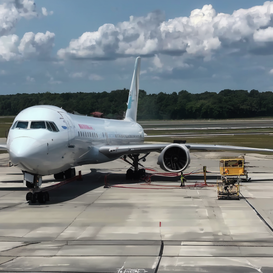}
        & \includegraphics[width=0.18\linewidth]{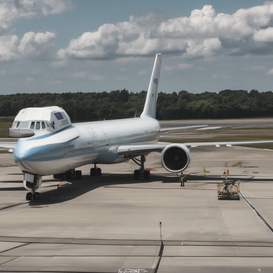}
        & \includegraphics[width=0.18\linewidth]{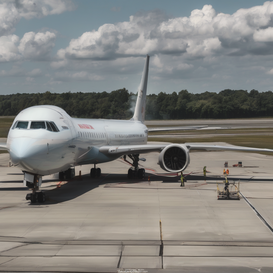}
        & \includegraphics[width=0.18\linewidth]{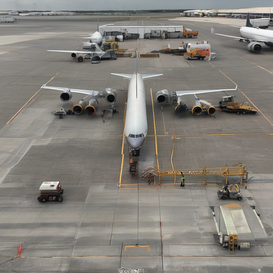}
        & \includegraphics[width=0.18\linewidth]{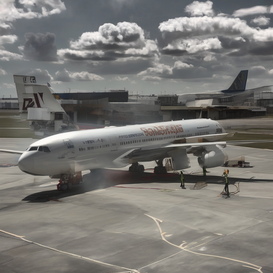} \\[-0.1em]
        & \multicolumn{5}{c}{\scriptsize \textbf{Prompt:} ``An older passenger jet is being readied for takeoff.''} \\[0.8em]

        \multirow{4}{*}{\rotatebox{90}{\textbf{PixArt-$\alpha$}}}
        & \includegraphics[width=0.18\linewidth]{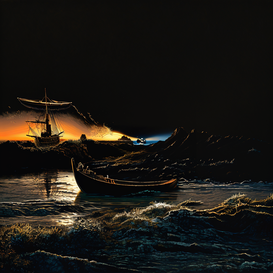}
        & \includegraphics[width=0.18\linewidth]{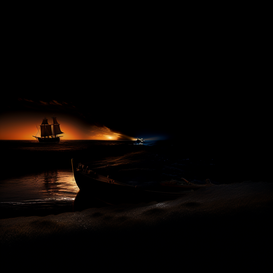}
        & \includegraphics[width=0.18\linewidth]{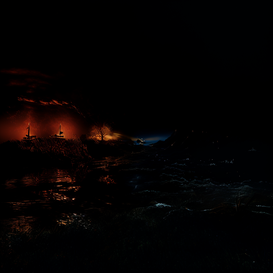}
        & \includegraphics[width=0.18\linewidth]{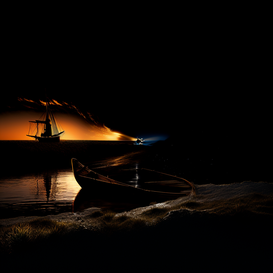}
        & \includegraphics[width=0.18\linewidth]{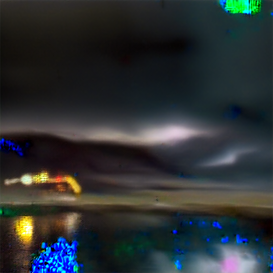} \\[-0.1em]
        & \multicolumn{5}{c}{\scriptsize \textbf{Prompt:} ``The bow of a ship on land with another on the edge of the water.''} \\[0.45em]
        & \includegraphics[width=0.18\linewidth]{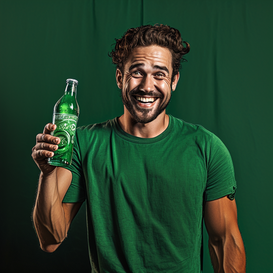}
        & \includegraphics[width=0.18\linewidth]{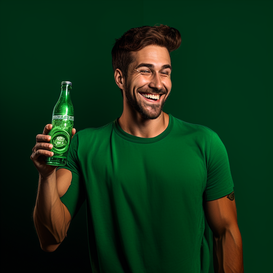}
        & \includegraphics[width=0.18\linewidth]{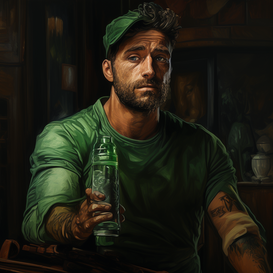}
        & \includegraphics[width=0.18\linewidth]{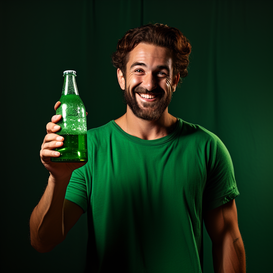}
        & \includegraphics[width=0.18\linewidth]{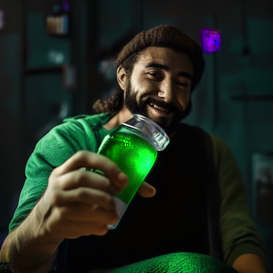} \\[-0.1em]
        & \multicolumn{5}{c}{\scriptsize \textbf{Prompt:} ``A young man holding up a green bottle as he drinks.''} \\[0.8em]

        \multirow{4}{*}{\rotatebox{90}{\textbf{SD3.5}}}
        & \includegraphics[width=0.18\linewidth]{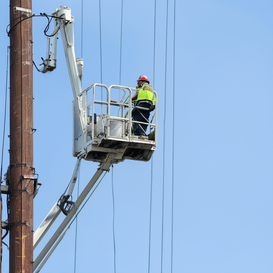}
        & \includegraphics[width=0.18\linewidth]{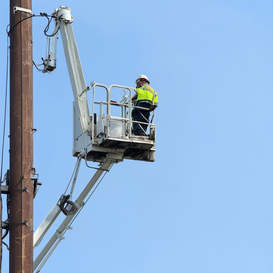}
        & \includegraphics[width=0.18\linewidth]{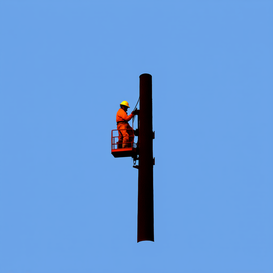}
        & \includegraphics[width=0.18\linewidth]{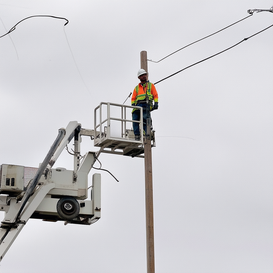}
        & \includegraphics[width=0.18\linewidth]{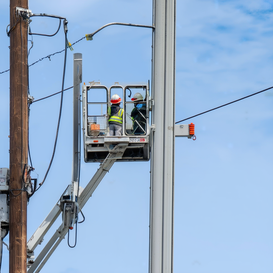} \\[-0.1em]
        & \multicolumn{5}{c}{\scriptsize \textbf{Prompt:} ``A Utility Pole Worker standing inside of the boom of a bucket truck.''} \\[0.45em]
        & \includegraphics[width=0.18\linewidth]{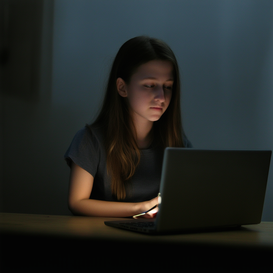}
        & \includegraphics[width=0.18\linewidth]{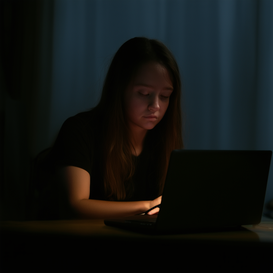}
        & \includegraphics[width=0.18\linewidth]{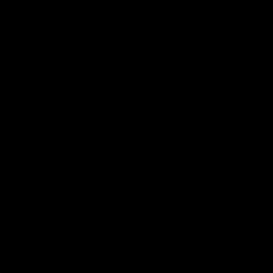}
        & \includegraphics[width=0.18\linewidth]{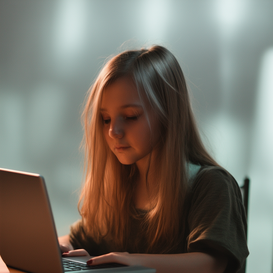}
        & \includegraphics[width=0.18\linewidth]{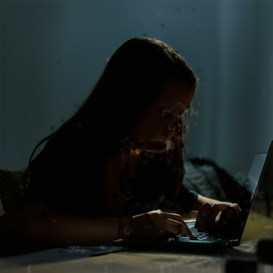} \\[-0.1em]
        & \multicolumn{5}{c}{\scriptsize \textbf{Prompt:} ``A girl working on a laptop in dim light.''}
    \end{tabular}
    \vspace{-0.2em}
    \caption{Qualitative comparison of guidance methods across text-to-image diffusion models.
    Our method better preserves the intended composition and improves visual coherence.}
    \label{fig:qualitative_all_models}


    \begingroup
    \setlength{\tabcolsep}{3pt}
    \renewcommand{\arraystretch}{0.92}
    \scriptsize
    \captionof{table}{Quantitative comparison between CFG and our method at higher guidance scales. Our method improves preference-based metrics across all three backbones.}
    \label{tab:model_high_guidance}
    \vspace{-0.15em}
    \resizebox{0.72\linewidth}{!}{
    \begin{tabular}{lllcccccc}
    \toprule
    Model & $w$ & Method & ImageReward$\uparrow$ & PickScore$\uparrow$ & HPSv2$\uparrow$ & CLIP-T$\uparrow$ & FID$\downarrow$ & sFID$\downarrow$ \\
    \midrule\midrule
    \multirow{2}{*}{SDXL}
    & \multirow{2}{*}{10.0}
    & Ours\cellcolor{gray!20}
    & \cellcolor{gray!20}\textbf{0.860}
    & \cellcolor{gray!20}\textbf{22.61}
    & \cellcolor{gray!20}\textbf{0.290}
    & \cellcolor{gray!20}\textbf{0.268}
    & \cellcolor{gray!20}\textbf{19.01}
    & \cellcolor{gray!20}\textbf{48.88} \\
    & & CFG
    & 0.842 & 22.60 & 0.288 & \textbf{0.268} & 19.49 & 50.03 \\
    \midrule
    \multirow{2}{*}{PixArt-$\alpha$}
    & \multirow{2}{*}{9.0}
    & Ours\cellcolor{gray!20}
    & \cellcolor{gray!20}\textbf{0.871}
    & \cellcolor{gray!20}\textbf{22.41}
    & \cellcolor{gray!20}\textbf{0.298}
    & \cellcolor{gray!20}\textbf{0.261}
    & \cellcolor{gray!20}\textbf{32.00}
    & \cellcolor{gray!20}72.09 \\
    & & CFG
    & 0.844 & 22.37 & 0.296 & \textbf{0.261} & 32.19 & \textbf{71.59} \\
    \midrule
    \multirow{2}{*}{SD3.5}
    & \multirow{2}{*}{14.0}
    & Ours\cellcolor{gray!20}
    & \cellcolor{gray!20}\textbf{1.055}
    & \cellcolor{gray!20}\textbf{22.28}
    & \cellcolor{gray!20}\textbf{0.292}
    & \cellcolor{gray!20}\textbf{0.270}
    & \cellcolor{gray!20}\textbf{22.71}
    & \cellcolor{gray!20}\textbf{12.93} \\
    & & CFG
    & 0.657 & 21.70 & 0.266 & 0.267 & 24.17 & 21.86 \\
    \bottomrule
    \end{tabular}
    }
    \endgroup
\end{figure*}


\vspace{-4pt}
\subsection{Experimental Results}
\label{sec:t2i_generation}
\vspace{-4pt}
\paragraph{Text-to-image generation.}
Table~\ref{tab:t2i_cfg_ours_comparison} presents quantitative results comparing our method with guidance baselines under their standard guidance settings. Standard CFG outperforms the other baseline methods on most preference-based metrics across the three backbones, indicating limited gains from existing guidance strategies at standard guidance scales. Our method improves on this strong baseline, achieving the highest HPSv3, ImageReward, and PickScore across all three models, along with the best or joint-best HPSv2 scores.
Even on SD3.5, where standard CFG already achieves strong preference scores, our method yields additional gains. For distributional metrics, our method achieves the lowest FID and sFID on SDXL and the lowest FID on PixArt-$\alpha$. Although TV-CFG and LF-CFG achieve lower FID and sFID on SD3.5, this comes with lower preference scores.



Figure~\ref{fig:qualitative_all_models} provides qualitative support for the trends in Table~\ref{tab:t2i_cfg_ours_comparison}.
In the SDXL examples, most methods produce plausible images, although LF-CFG shows weaker prompt alignment.
CFG introduces a visible artifact around the front part of the airplane, which is alleviated by our method and CFG++.
For PixArt-$\alpha$, although other guidance variants provide some improvements, our method shows clearer gains in overall appearance and brightness balance.
In the SD3.5 examples, CFG++ produces over-smoothed images, consistent with its lower quantitative scores.
The remaining methods achieve similar overall quality, but our method produces more distinct fine-grained details.


Furthermore, we evaluate our method under different guidance regimes. Table~\ref{tab:model_high_guidance} demonstrates that our method improves over CFG at higher guidance scales. The improvement is clear for SD3.5, where our method maintains strong image quality and alignment while CFG becomes more sensitive to the increased guidance scale. As shown in Figure~\ref{fig:cfg_ours_sd35_high}, the larger guidance scale can lead to over-saturated images or weaker prompt following, whereas our method mitigates these effects and recovers more faithful compositions. Table~\ref{tab:scale_abl} also shows that our method improves under lower guidance, while existing guidance methods show limited gains.
These results indicate that the proposed correction provides stable guidance control across models and guidance scales. We provide a human preference study, multi-seed experiments and runtime measurements in the appendix.

\begin{figure*}[t]
    \centering
    \setlength{\tabcolsep}{2pt}
    \renewcommand{\arraystretch}{1.0}

    \begin{tabular}{cccc}
        CFG & Ours & CFG & Ours \\[-0.1em]
        \includegraphics[width=0.24\linewidth]{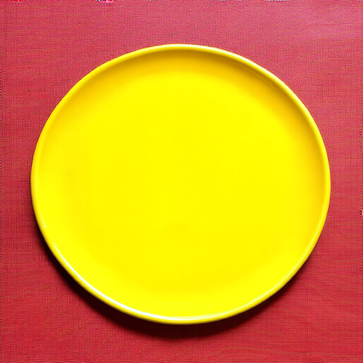} &
        \includegraphics[width=0.24\linewidth]{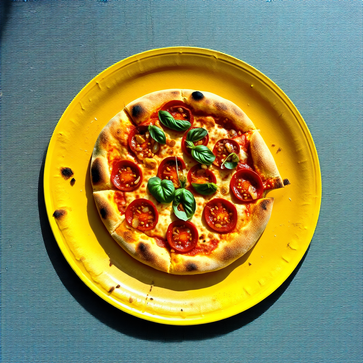} &
        \includegraphics[width=0.24\linewidth]{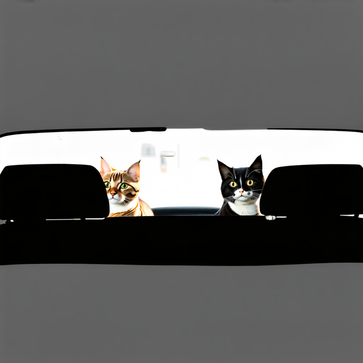} &
        \includegraphics[width=0.24\linewidth]{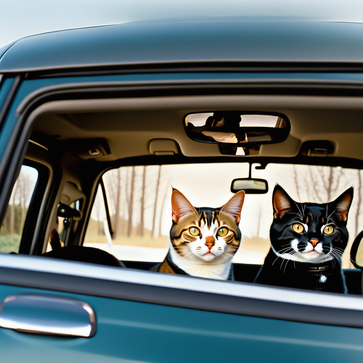} \\[-0.1em]

        \multicolumn{2}{c}{\scriptsize \textbf{Prompt:} ``\texttt{A small loaded pizza on a yellow plate}''} &
        \multicolumn{2}{c}{\scriptsize \textbf{Prompt:} ``\texttt{a pair of cats riding inside a car}''}
    \end{tabular}

    \caption{Qualitative comparison between CFG and our method on SD3.5 at a high guidance scale of $w=14$. Our method corrects over-saturated and over-simplified outputs produced by CFG.}
    \label{fig:cfg_ours_sd35_high}
    \vspace{-10pt}
\end{figure*}

\begin{figure*}[h]
\centering

\begin{minipage}[t]{0.50\textwidth}
    \vspace{0pt}
    \centering
    \resizebox{\linewidth}{!}{
    \setlength{\tabcolsep}{0.5pt}
    \renewcommand{\arraystretch}{0.95}
    \begin{tabular}{c@{\hspace{4pt}}ccc}
        & Chameleon & Jellyfish & Space shuttle \\[0.15em]

        \raisebox{1.0\height}{\rotatebox{90}{Ours}} &
        \includegraphics[width=0.32\linewidth]{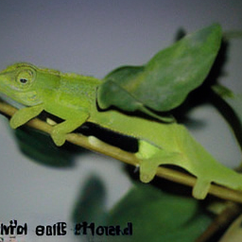} &
        \includegraphics[width=0.32\linewidth]{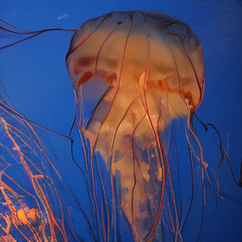} &
        \includegraphics[width=0.32\linewidth]{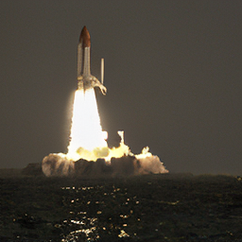} \\[0.15em]
        
        \raisebox{1.0\height}{\rotatebox{90}{CFG}} &
        \includegraphics[width=0.32\linewidth]{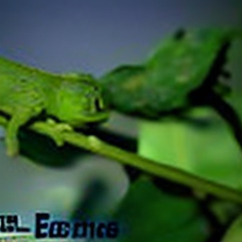} &
        \includegraphics[width=0.32\linewidth]{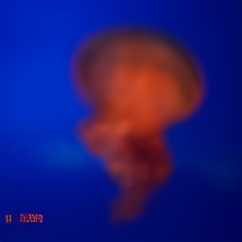} &
        \includegraphics[width=0.32\linewidth]{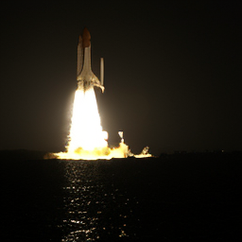}
    \end{tabular}
    }
    \vspace{-0.7em}
    \captionof{figure}{Qualitative examples for ImageNet class-conditional generation with DiT-XL/2.}
    \label{fig:class_cond_qual}
\end{minipage}
\hfill
\begin{minipage}[t]{0.48\textwidth}
    \vspace{0pt}
    \centering
    \captionof{table}{Class-conditional ImageNet generation results with DiT-XL/2. Results include the default 250-step setting and reduced sampling budgets measured by the number of function evaluations (NFE).}
    \label{tab:nfe_ablation}
    \resizebox{\linewidth}{!}{
    \begin{tabular}{llccccc}
    \toprule
    NFE & Method & FID$\downarrow$ & sFID$\downarrow$ & Precision$\uparrow$ & Recall$\uparrow$ & IS$\uparrow$ \\
    \midrule\midrule

    \multirow{2}{*}{250}
    & Ours\cellcolor{gray!20}
    & \cellcolor{gray!20}\textbf{2.62}
    & \cellcolor{gray!20}\textbf{4.92}
    & \cellcolor{gray!20}\textbf{0.842}
    & \cellcolor{gray!20}0.550
    & \cellcolor{gray!20}\textbf{62.92} \\
    & CFG
    & 2.89
    & 7.42
    & 0.823
    & \textbf{0.565}
    & 61.77 \\
    \midrule

    \multirow{2}{*}{100}
    & Ours\cellcolor{gray!20}
    & \cellcolor{gray!20}\textbf{2.78}
    & \cellcolor{gray!20}\textbf{5.65}
    & \cellcolor{gray!20}\textbf{0.843}
    & \cellcolor{gray!20}0.553
    & \cellcolor{gray!20}\textbf{63.13} \\
    & CFG
    & 3.07
    & 7.64
    & 0.827
    & \textbf{0.557}
    & 61.84 \\
    \midrule

    \multirow{2}{*}{50}
    & Ours\cellcolor{gray!20}
    & \cellcolor{gray!20}\textbf{2.88}
    & \cellcolor{gray!20}\textbf{6.29}
    & \cellcolor{gray!20}\textbf{0.840}
    & \cellcolor{gray!20}0.552
    & \cellcolor{gray!20}\textbf{62.52} \\
    & CFG
    & 3.09
    & 7.54
    & 0.825
    & \textbf{0.554}
    & 61.54 \\
    \midrule

    \multirow{2}{*}{25}
    & Ours\cellcolor{gray!20}
    & \cellcolor{gray!20}\textbf{3.07}
    & \cellcolor{gray!20}\textbf{6.78}
    & \cellcolor{gray!20}\textbf{0.830}
    & \cellcolor{gray!20}0.545
    & \cellcolor{gray!20}\textbf{61.47} \\
    & CFG
    & 3.20
    & 7.43
    & 0.824
    & \textbf{0.548}
    & 60.79 \\
    \bottomrule
    \end{tabular}
    }
\end{minipage}
\vspace{-10pt}
\end{figure*}

\paragraph{Class-conditional image generation.}
To evaluate our method across conditional generation tasks, we apply the proposed correction to class-conditional generation with DiT-XL/2 on ImageNet. As shown in Table~\ref{tab:nfe_ablation}, our method achieves strong performance with the default 250 denoising steps. In particular, it reduces sFID from 7.42 to 4.92 and improves FID, Precision, and IS. Figure~\ref{fig:class_cond_qual} shows that these quantitative gains are reflected in image quality. In the chameleon example, the proposed correction leads to a clearer depiction of the target class. The improvement is also visible in the jellyfish and space shuttle examples, where our method produces brighter and sharper images than the blurrier or darker CFG samples.

The proposed method maintains strong performance across sampling budgets, with NFEs ranging from 250 to 25. Even with 50 NFEs, our method achieves an FID of 2.88 and an sFID of 6.29, matching the FID and improving upon the sFID of CFG with 250 NFEs. At 25 NFEs, our method remains competitive with CFG using 100 NFEs. Recall is slightly lower than CFG across budgets, indicating a trade-off between coverage and the observed gains in precision and fidelity. These results demonstrate that the proposed correction improves generation quality across sampling budgets and extends to conditional generation tasks beyond text-to-image synthesis.

\vspace{-7pt}
\subsection{Ablation Studies}
\label{sec:abl}
\vspace{-7pt}
Here, we study how correction strength affects generation results and examine how to make effective use of the proposed correction in the denoising process.

\begin{figure*}[t]
    \centering
    \begin{subfigure}[t]{0.23\linewidth}
        \centering
        \caption{CFG}
        \label{fig:lambda_ablation_cfg}
        \includegraphics[width=\linewidth]{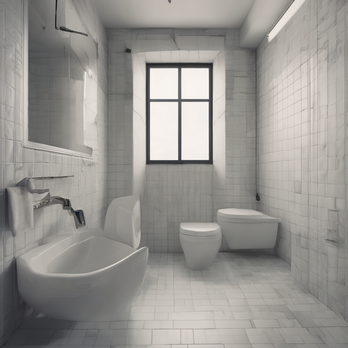}
    \end{subfigure}
    \hfill
    \begin{subfigure}[t]{0.23\linewidth}
        \centering
        \caption{$\lambda=0.005$}
        \label{fig:lambda_ablation_0005}
        \includegraphics[width=\linewidth]{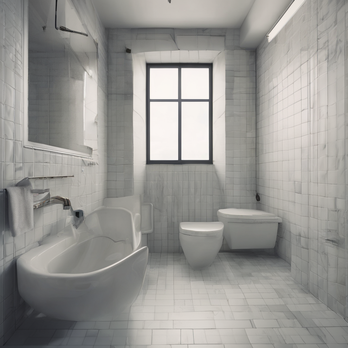}
    \end{subfigure}
    \hfill
    \begin{subfigure}[t]{0.23\linewidth}
        \centering
        \caption{$\lambda=0.05$}
        \label{fig:lambda_ablation_005}
        \includegraphics[width=\linewidth]{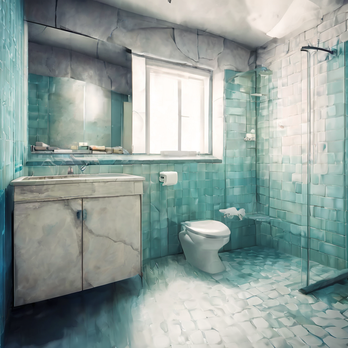}
    \end{subfigure}
    \hfill
    \begin{subfigure}[t]{0.23\linewidth}
        \centering
        \caption{$\lambda=0.5$}
        \label{fig:lambda_ablation_05}
        \includegraphics[width=\linewidth]{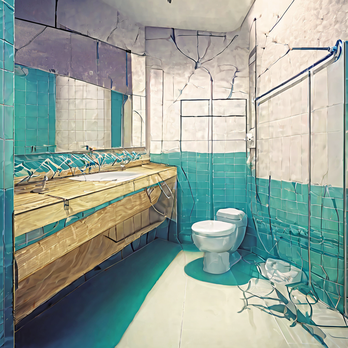}
    \end{subfigure}

    \vspace{0.3em}
    {\small \textbf{Prompt:} ``\texttt{A bathroom that is empty and not in use}''}
    \vspace{-4pt}
    \caption{Ablation on the correction strength $\lambda$.
Increasing $\lambda$ improves visual fidelity and prompt alignment up to a point, but excessive correction introduces artifacts.}
    \label{fig:lambda_ablation}
    \vspace{-8pt}
\end{figure*}

\paragraph{Effect of correction strength.}
The ablation shows that an appropriate correction strength improves visual fidelity and prompt alignment.
As shown in Figure~\ref{fig:lambda_ablation_0005}, a small correction leaves the output close to standard guidance with little visible effect, suggesting that weak spectral modification has a limited effect on the guided trajectory. At moderate strengths, Figure~\ref{fig:lambda_ablation_005} shows richer colors and clearer details, indicating that stronger correction better restores spectral components affected by guidance.
However, excessive correction introduces artifacts and degrades perceptual quality as shown in Figure~\ref{fig:lambda_ablation_05}. These results suggest that the correction strength controls the balance between correcting deviations from the expected spectrum and preserving individual sample characteristics.

\begin{figure*}[t]
    \centering
    \setlength{\tabcolsep}{2pt}
    \renewcommand{\arraystretch}{1.0}
    \begin{tabular}{cccccc}
        $\rho=0.25$ & $\rho=0.5$ & $\rho=0.75$ &
        $\rho=0.25$ & $\rho=0.5$ & $\rho=0.75$ \\[0.25em]

        \includegraphics[width=0.15\linewidth]{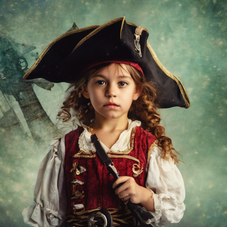} &
        \includegraphics[width=0.15\linewidth]{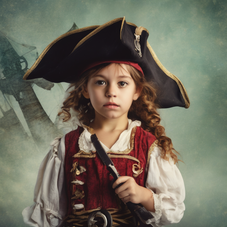} &
        \includegraphics[width=0.15\linewidth]{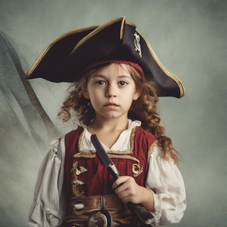} &
        \includegraphics[width=0.15\linewidth]{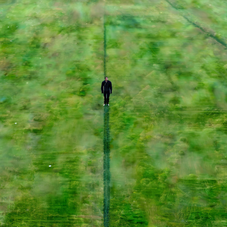} &
        \includegraphics[width=0.15\linewidth]{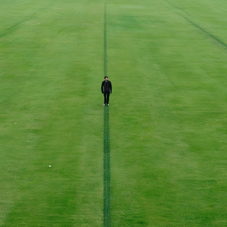} &
        \includegraphics[width=0.15\linewidth]{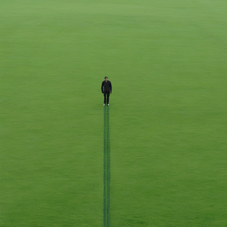} \\[-0.1em]

        \multicolumn{3}{c}{\scriptsize \textbf{Prompt:} ``\texttt{A young girl pretending to be a pirate}''} &
        \multicolumn{3}{c}{\scriptsize \textbf{Prompt:} ``\texttt{a person standing on a large green field}''}
    \end{tabular}
    \vspace{-4pt}
    \caption{Ablation on the relaxation parameter $\rho$. Smaller $\rho$ applies correction for longer during denoising, while larger $\rho$ relaxes it earlier. Intermediate relaxation yields the best balance between spectral correction and artifact reduction.}
    \label{fig:rho_ablation}
    \vspace{-10pt}
\end{figure*}

\vspace{-7pt}
\paragraph{Correction relaxation.}
The previous ablation shows that increasing correction strength improves fidelity and prompt alignment but can introduce artifacts.
We thus study how to relax the correction across the denoising process to retain its benefits while reducing artifacts.
Motivated by the role of late denoising steps in fine-grained detail formation~\citep{karras2021alias}, we vary the relaxation parameter $\rho$, which controls when the frequency-domain constraint is relaxed during denoising.
Figure~\ref{fig:rho_ablation} shows that applying correction until the end can retain artifacts, whereas applying it only in the early stage weakens its effect.
An intermediate relaxation point provides the best balance between spectral correction and natural detail formation. We therefore adopt this setting in our final method. 
The relaxation analysis clarifies where spectral correction improves generation quality and where its continued application can degrade quality~\citep{kynkaanniemi2024applying, jin2025stage}.


\vspace{-7pt}
\section{Conclusion}
\vspace{-7pt}


In this paper, we demonstrated that spectral alignment can serve as a practical criterion for understanding guidance behavior and improving guided diffusion sampling through principled correction. This perspective provides a basis for assessing whether guided trajectories remain consistent with the spectral dynamics induced by the forward process. We derived an analytic reference from natural image statistics and used it to correct spectral deviations along guided denoising trajectories. Building on this criterion, Spectral Correction Guidance is a training-free method applicable to diverse diffusion backbones and conditional generation tasks without modifying the underlying model. 
Experiments show that the proposed method improves generation quality across model architectures, guidance strengths, and reduced sampling budgets, while prior guidance strategies provide limited gains in common guidance settings. This work highlights how diffusion dynamics and domain-specific statistics can be combined to establish a principled criterion for trajectory correction.

\bibliography{iclr2027_conference}

@article{lipman2022flow,
  title={Flow matching for generative modeling},
  author={Lipman, Yaron and Chen, Ricky TQ and Ben-Hamu, Heli and Nickel, Maximilian and Le, Matt},
  journal={arXiv preprint arXiv:2210.02747},
  year={2022}
}

@article{liu2022flow,
  title={Flow straight and fast: Learning to generate and transfer data with rectified flow},
  author={Liu, Xingchao and Gong, Chengyue and Liu, Qiang},
  journal={arXiv preprint arXiv:2209.03003},
  year={2022}
}

@article{ho2020denoising,
  title={Denoising diffusion probabilistic models},
  author={Ho, Jonathan and Jain, Ajay and Abbeel, Pieter},
  journal={Advances in neural information processing systems},
  volume={33},
  pages={6840--6851},
  year={2020}
}

@article{efron2011tweedie,
  title={Tweedie’s formula and selection bias},
  author={Efron, Bradley},
  journal={Journal of the American Statistical Association},
  volume={106},
  number={496},
  pages={1602--1614},
  year={2011},
  publisher={Taylor \& Francis}
}

@article{song2020denoising,
  title={Denoising diffusion implicit models},
  author={Song, Jiaming and Meng, Chenlin and Ermon, Stefano},
  journal={arXiv preprint arXiv:2010.02502},
  year={2020}
}

@article{song2019generative,
  title={Generative modeling by estimating gradients of the data distribution},
  author={Song, Yang and Ermon, Stefano},
  journal={Advances in neural information processing systems},
  volume={32},
  year={2019}
}

@article{luo2022understanding,
  title={Understanding diffusion models: A unified perspective},
  author={Luo, Calvin},
  journal={arXiv preprint arXiv:2208.11970},
  year={2022}
}

@article{lai2025principles,
  title={The principles of diffusion models},
  author={Lai, Chieh-Hsin and Song, Yang and Kim, Dongjun and Mitsufuji, Yuki and Ermon, Stefano},
  journal={arXiv preprint arXiv:2510.21890},
  year={2025}
}

@article{song2020score,
  title={Score-based generative modeling through stochastic differential equations},
  author={Song, Yang and Sohl-Dickstein, Jascha and Kingma, Diederik P and Kumar, Abhishek and Ermon, Stefano and Poole, Ben},
  journal={arXiv preprint arXiv:2011.13456},
  year={2020}
}

@inproceedings{sohl2015deep,
  title={Deep unsupervised learning using nonequilibrium thermodynamics},
  author={Sohl-Dickstein, Jascha and Weiss, Eric and Maheswaranathan, Niru and Ganguli, Surya},
  booktitle={International conference on machine learning},
  pages={2256--2265},
  year={2015},
  organization={pmlr}
}

@article{dhariwal2021diffusion,
  title={Diffusion models beat gans on image synthesis},
  author={Dhariwal, Prafulla and Nichol, Alexander},
  journal={Advances in neural information processing systems},
  volume={34},
  pages={8780--8794},
  year={2021}
}

@inproceedings{nichol2021improved,
  title={Improved denoising diffusion probabilistic models},
  author={Nichol, Alexander Quinn and Dhariwal, Prafulla},
  booktitle={International conference on machine learning},
  pages={8162--8171},
  year={2021},
  organization={PMLR}
}

@article{karras2022elucidating,
  title={Elucidating the design space of diffusion-based generative models},
  author={Karras, Tero and Aittala, Miika and Aila, Timo and Laine, Samuli},
  journal={Advances in neural information processing systems},
  volume={35},
  pages={26565--26577},
  year={2022}
}

@inproceedings{peebles2023scalable,
  title={Scalable diffusion models with transformers},
  author={Peebles, William and Xie, Saining},
  booktitle={Proceedings of the IEEE/CVF international conference on computer vision},
  pages={4195--4205},
  year={2023}
}

@article{liu2023audioldm,
  title={Audioldm: Text-to-audio generation with latent diffusion models},
  author={Liu, Haohe and Chen, Zehua and Yuan, Yi and Mei, Xinhao and Liu, Xubo and Mandic, Danilo and Wang, Wenwu and Plumbley, Mark D},
  journal={arXiv preprint arXiv:2301.12503},
  year={2023}
}

@article{ho2022video,
  title={Video diffusion models},
  author={Ho, Jonathan and Salimans, Tim and Gritsenko, Alexey and Chan, William and Norouzi, Mohammad and Fleet, David J},
  journal={Advances in neural information processing systems},
  volume={35},
  pages={8633--8646},
  year={2022}
}

@article{chi2025diffusion,
  title={Diffusion policy: Visuomotor policy learning via action diffusion},
  author={Chi, Cheng and Xu, Zhenjia and Feng, Siyuan and Cousineau, Eric and Du, Yilun and Burchfiel, Benjamin and Tedrake, Russ and Song, Shuran},
  journal={The International Journal of Robotics Research},
  volume={44},
  number={10-11},
  pages={1684--1704},
  year={2025},
  publisher={Sage Publications Sage UK: London, England}
}

@article{saharia2022photorealistic,
  title={Photorealistic text-to-image diffusion models with deep language understanding},
  author={Saharia, Chitwan and Chan, William and Saxena, Saurabh and Li, Lala and Whang, Jay and Denton, Emily L and Ghasemipour, Kamyar and Gontijo Lopes, Raphael and Karagol Ayan, Burcu and Salimans, Tim and others},
  journal={Advances in neural information processing systems},
  volume={35},
  pages={36479--36494},
  year={2022}
}

@inproceedings{rombach2022high,
  title={High-resolution image synthesis with latent diffusion models},
  author={Rombach, Robin and Blattmann, Andreas and Lorenz, Dominik and Esser, Patrick and Ommer, Bj{\"o}rn},
  booktitle={Proceedings of the IEEE/CVF conference on computer vision and pattern recognition},
  pages={10684--10695},
  year={2022}
}

@article{podell2023sdxl,
  title={Sdxl: Improving latent diffusion models for high-resolution image synthesis},
  author={Podell, Dustin and English, Zion and Lacey, Kyle and Blattmann, Andreas and Dockhorn, Tim and M{\"u}ller, Jonas and Penna, Joe and Rombach, Robin},
  journal={arXiv preprint arXiv:2307.01952},
  year={2023}
}

@inproceedings{esser2024scaling,
  title={Scaling rectified flow transformers for high-resolution image synthesis},
  author={Esser, Patrick and Kulal, Sumith and Blattmann, Andreas and Entezari, Rahim and M{\"u}ller, Jonas and Saini, Harry and Levi, Yam and Lorenz, Dominik and Sauer, Axel and Boesel, Frederic and others},
  booktitle={Forty-first international conference on machine learning},
  year={2024}
}

@article{chen2023pixart,
  title={{PixArt}-$\alpha$: Fast Training of Diffusion Transformer for Photorealistic Text-to-Image Synthesis},
  author={Chen, Junsong and Yu, Jincheng and Ge, Chongjian and Yao, Lewei and Xie, Enze and Wu, Yue and Wang, Zhongdao and Kwok, James and Luo, Ping and Lu, Huchuan and others},
  journal={arXiv preprint arXiv:2310.00426},
  year={2023}
}

@inproceedings{chen2024pixart,
  title={Pixart-$\sigma$: Weak-to-strong training of diffusion transformer for 4k text-to-image generation},
  author={Chen, Junsong and Ge, Chongjian and Xie, Enze and Wu, Yue and Yao, Lewei and Ren, Xiaozhe and Wang, Zhongdao and Luo, Ping and Lu, Huchuan and Li, Zhenguo},
  booktitle={European Conference on Computer Vision},
  pages={74--91},
  year={2024},
  organization={Springer}
}

@article{xie2024sana,
  title={Sana: Efficient high-resolution image synthesis with linear diffusion transformers},
  author={Xie, Enze and Chen, Junsong and Chen, Junyu and Cai, Han and Tang, Haotian and Lin, Yujun and Zhang, Zhekai and Li, Muyang and Zhu, Ligeng and Lu, Yao and others},
  journal={arXiv preprint arXiv:2410.10629},
  year={2024}
}

@article{ho2022classifier,
  title={Classifier-free diffusion guidance},
  author={Ho, Jonathan and Salimans, Tim},
  journal={arXiv preprint arXiv:2207.12598},
  year={2022}
}

@inproceedings{sadat2024eliminating,
  title={Eliminating oversaturation and artifacts of high guidance scales in diffusion models},
  author={Sadat, Seyedmorteza and Hilliges, Otmar and Weber, Romann M},
  booktitle={The Thirteenth International Conference on Learning Representations},
  year={2025}
}

@article{kynkaanniemi2024applying,
  title={Applying guidance in a limited interval improves sample and distribution quality in diffusion models},
  author={Kynk{\"a}{\"a}nniemi, Tuomas and Aittala, Miika and Karras, Tero and Laine, Samuli and Aila, Timo and Lehtinen, Jaakko},
  journal={Advances in Neural Information Processing Systems},
  volume={37},
  pages={122458--122483},
  year={2024}
}

@article{karras2024guiding,
  title={Guiding a diffusion model with a bad version of itself},
  author={Karras, Tero and Aittala, Miika and Kynk{\"a}{\"a}nniemi, Tuomas and Lehtinen, Jaakko and Aila, Timo and Laine, Samuli},
  journal={Advances in Neural Information Processing Systems},
  volume={37},
  pages={52996--53021},
  year={2024}
}

@inproceedings{hong2023improving,
  title={Improving sample quality of diffusion models using self-attention guidance},
  author={Hong, Susung and Lee, Gyuseong and Jang, Wooseok and Kim, Seungryong},
  booktitle={Proceedings of the IEEE/CVF International Conference on Computer Vision},
  pages={7462--7471},
  year={2023}
}

@article{sadat2024no,
  title={No training, no problem: Rethinking classifier-free guidance for diffusion models},
  author={Sadat, Seyedmorteza and Kansy, Manuel and Hilliges, Otmar and Weber, Romann M},
  journal={arXiv preprint arXiv:2407.02687},
  year={2024}
}

@inproceedings{song2026rethinking,
  title={Rethinking oversaturation in classifier-free guidance via low frequency},
  author={Song, Kaiyu and Tong, Baoshun and Lai, Hanjiang},
  booktitle={ICASSP 2026-2026 IEEE International Conference on Acoustics, Speech and Signal Processing (ICASSP)},
  pages={10107--10111},
  year={2026},
  organization={IEEE}
}

@article{sadat2025guidance,
  title={Guidance in the frequency domain enables high-fidelity sampling at low cfg scales},
  author={Sadat, Seyedmorteza and Vontobel, Tobias and Salehi, Farnood and Weber, Romann M},
  journal={arXiv preprint arXiv:2506.19713},
  year={2025}
}

@article{chung2024cfg++,
  title={Cfg++: Manifold-constrained classifier free guidance for diffusion models},
  author={Chung, Hyungjin and Kim, Jeongsol and Park, Geon Yeong and Nam, Hyelin and Ye, Jong Chul},
  journal={arXiv preprint arXiv:2406.08070},
  year={2024}
}

@inproceedings{kwon2025tcfg,
  title={Tcfg: Tangential damping classifier-free guidance},
  author={Kwon, Mingi and Kim, Shin seong and Jeong, Jaeseok and Hsiao, Yi Ting and Uh, Youngjung},
  booktitle={Proceedings of the Computer Vision and Pattern Recognition Conference},
  pages={2620--2629},
  year={2025}
}

@inproceedings{jin2025angle,
  title={Angle Domain Guidance: Latent Diffusion Requires Rotation Rather Than Extrapolation},
  author={Jin, Cheng and Xiao, Zhenyu and Liu, Chutao and Gu, Yuantao},
  booktitle={International Conference on Machine Learning},
  year={2025},
  organization={PMLR}
}

@article{jin2025stage,
  title={Stage-wise dynamics of classifier-free guidance in diffusion models},
  author={Jin, Cheng and Shi, Qitan and Gu, Yuantao},
  journal={arXiv preprint arXiv:2509.22007},
  year={2025}
}

@article{malarz2025classifier,
  title={Classifier-free Guidance with Adaptive Scaling},
  author={Malarz, Dawid and Kasymov, Artur and Zi{\k{e}}ba, Maciej and Tabor, Jacek and Spurek, Przemys{\l}aw},
  journal={arXiv preprint arXiv:2502.10574},
  year={2025}
}

@article{ruderman1997origins,
  title   = {Origins of scaling in natural images},
  author  = {Ruderman, Daniel L.},
  journal = {Vision Research},
  year    = {1997}
}

@inproceedings{heusel2017gans,
  title     = {GANs Trained by a Two Time-Scale Update Rule Converge to a Local Nash Equilibrium},
  author    = {Heusel, Martin and Ramsauer, Hubert and Unterthiner, Thomas and Nessler, Bernhard and Hochreiter, Sepp},
  booktitle = {Advances in Neural Information Processing Systems},
  year      = {2017}
}

@inproceedings{salimans2016improved,
  title     = {Improved Techniques for Training GANs},
  author    = {Salimans, Tim and Goodfellow, Ian and Zaremba, Wojciech and Cheung, Vicki and Radford, Alec and Chen, Xi},
  booktitle = {Advances in Neural Information Processing Systems},
  year      = {2016}
}

@inproceedings{xu2023imagereward,
  title={ImageReward: learning and evaluating human preferences for text-to-image generation},
  author={Xu, Jiazheng and Liu, Xiao and Wu, Yuchen and Tong, Yuxuan and Li, Qinkai and Ding, Ming and Tang, Jie and Dong, Yuxiao},
  booktitle={Proceedings of the 37th International Conference on Neural Information Processing Systems},
  pages={15903--15935},
  year={2023}
}

@article{kirstain2023pick,
  title={Pick-a-pic: An open dataset of user preferences for text-to-image generation},
  author={Kirstain, Yuval and Polyak, Adam and Singer, Uriel and Matiana, Shahbuland and Penna, Joe and Levy, Omer},
  journal={Advances in neural information processing systems},
  volume={36},
  pages={36652--36663},
  year={2023}
}

@article{wu2023human,
  title={Human Preference Score v2: A Solid Benchmark for Evaluating Human Preferences of Text-to-Image Synthesis},
  author={Wu, Xiaoshi and Hao, Yiming and Sun, Keqiang and Chen, Yixiong and Zhu, Feng and Zhao, Rui and Li, Hongsheng},
  journal={arXiv preprint arXiv:2306.09341},
  year={2023}
}

@inproceedings{radford2021learning,
  title={Learning transferable visual models from natural language supervision},
  author={Radford, Alec and Kim, Jong Wook and Hallacy, Chris and Ramesh, Aditya and Goh, Gabriel and Agarwal, Sandhini and Sastry, Girish and Askell, Amanda and Mishkin, Pamela and Clark, Jack and others},
  booktitle={International conference on machine learning},
  pages={8748--8763},
  year={2021},
  organization={PmLR}
}

@article{nash2021generating,
  title={Generating images with sparse representations},
  author={Nash, Charlie and Menick, Jacob and Dieleman, Sander and Battaglia, Peter W},
  journal={arXiv preprint arXiv:2103.03841},
  year={2021}
}

@inproceedings{lin2014microsoft,
  title={Microsoft coco: Common objects in context},
  author={Lin, Tsung-Yi and Maire, Michael and Belongie, Serge and Hays, James and Perona, Pietro and Ramanan, Deva and Doll{\'a}r, Piotr and Zitnick, C Lawrence},
  booktitle={European conference on computer vision},
  pages={740--755},
  year={2014},
  organization={Springer}
}

@inproceedings{deng2009imagenet,
  title={Imagenet: A large-scale hierarchical image database},
  author={Deng, Jia and Dong, Wei and Socher, Richard and Li, Li-Jia and Li, Kai and Fei-Fei, Li},
  booktitle={2009 IEEE conference on computer vision and pattern recognition},
  pages={248--255},
  year={2009},
  organization={Ieee}
}

@article{schuhmann2022laion,
  title={Laion-5b: An open large-scale dataset for training next generation image-text models},
  author={Schuhmann, Christoph and Beaumont, Romain and Vencu, Richard and Gordon, Cade and Wightman, Ross and Cherti, Mehdi and Coombes, Theo and Katta, Aarush and Mullis, Clayton and Wortsman, Mitchell and others},
  journal={Advances in neural information processing systems},
  volume={35},
  pages={25278--25294},
  year={2022}
}

@article{karras2021alias,
  title={Alias-free generative adversarial networks},
  author={Karras, Tero and Aittala, Miika and Laine, Samuli and H{\"a}rk{\"o}nen, Erik and Hellsten, Janne and Lehtinen, Jaakko and Aila, Timo},
  journal={Advances in neural information processing systems},
  volume={34},
  pages={852--863},
  year={2021}
}

@inproceedings{papalampidi2025dynamic,
  title={Dynamic classifier-free diffusion guidance via online feedback},
  author={Papalampidi, Pinelopi and Wiles, Olivia and Ktena, Ira and Shtedritski, Aleksandar and Bugliarello, Emanuele and Kajic, Ivana and Albuquerque, Isabela and Nematzadeh, Aida},
  booktitle={International Conference on Learning Representations},
  year={2026}
}

@inproceedings{galashov2026learn,
  title={Learn to Guide Your Diffusion Model},
  author={Galashov, Alexandre and Pokle, Ashwini and Doucet, Arnaud and Gretton, Arthur and Delbracio, Mauricio and De Bortoli, Valentin},
  booktitle={International Conference on Learning Representations},
  year={2026}
}

@article{mallat1989theory,
  title={A Theory for Multiresolution Signal Decomposition: The Wavelet Representation},
  author={Mallat, SG},
  journal={IEEE Transactions on Pattern Analysis and Machine Intelligence},
  volume={11},
  number={7},
  pages={674--693},
  year={1989},
  publisher={IEEE Computer Society Washington, DC, USA}
}

@article{li2025back,
  title={Back to basics: Let denoising generative models denoise},
  author={Li, Tianhong and He, Kaiming},
  journal={arXiv preprint arXiv:2511.13720},
  year={2025}
}

@inproceedings{ma2024sit,
  title={Sit: Exploring flow and diffusion-based generative models with scalable interpolant transformers},
  author={Ma, Nanye and Goldstein, Mark and Albergo, Michael S and Boffi, Nicholas M and Vanden-Eijnden, Eric and Xie, Saining},
  booktitle={European Conference on Computer Vision},
  pages={23--40},
  year={2024},
  organization={Springer}
}

@article{zhang2024frecas,
  title={Frecas: Efficient higher-resolution image generation via frequency-aware cascaded sampling},
  author={Zhang, Zhengqiang and Li, Ruihuang and Zhang, Lei},
  journal={arXiv preprint arXiv:2410.18410},
  year={2024}
}

@InProceedings{Ma_2025_ICCV,
    author    = {Ma, Yuhang and Wu, Xiaoshi and Sun, Keqiang and Li, Hongsheng},
    title     = {HPSv3: Towards Wide-Spectrum Human Preference Score},
    booktitle = {Proceedings of the IEEE/CVF International Conference on Computer Vision (ICCV)},
    month     = {October},
    year      = {2025},
    pages     = {15086-15095}
}
\bibliographystyle{iclr2027_conference}

\newpage
\appendix
\section*{Appendix}

\section{Reproducibility Statement}
Section~\ref{sec:method_spec} presents the formulation of the proposed method, and Algorithm~\ref{alg:spectrum_ratio_simple} summarizes the correction procedure.
Appendix~\ref{app:proof-proposition-1} provides the proof of Proposition~\ref{prop:power_law_intermediate}.
Appendix~\ref{app:implementation-details} details spectral coefficient estimation, hyperparameter settings, experimental configurations, and computational resources to facilitate reproduction of the results.

\section{LLM Usage Statement}
We used generative AI tools in auxiliary roles to support and polish the writing of the manuscript and to assist with implementing and debugging experimental code. All AI-assisted outputs were reviewed by the authors.

\section{Ethics Statement}
\label{app:societal-impacts}

Generative modeling can support creative and practical applications, including image editing, design, and content creation.
At the same time, improvements in generation quality can increase the risk of misuse, including harmful synthetic content.
We therefore emphasize the need for responsible deployment policies.

\section{Limitations and Future Work}
\label{app:limitations}

The proposed method demonstrates that spectral alignment based on power spectra can provide a useful criterion for guiding diffusion trajectories and improving generation quality.
However, this criterion only captures marginal frequency-domain statistics and does not fully characterize the generated distribution.
Matching power spectra can align the overall spectral energy across frequencies, but it does not account for phase information or spatial structure.
Future work could extend spectral correction beyond power spectra by using wavelet representations~\citep{mallat1989theory} to capture localized frequency structure and by incorporating constraints that account for phase and individual sample structure.
Another limitation is that our correction is applied with a shared rule across samples and conditions.
In practice, the amount and type of guidance needed may depend on the input condition, prompt, or class.
Guidance strategies that adapt to the input condition~\citep{galashov2026learn,papalampidi2025dynamic} suggest a promising direction for extending spectral correction to account for differences across samples and conditions.
Combining spectral alignment with condition-aware guidance control could enable more flexible adjustment of the denoising trajectory.


\section{Additional Implementation Details}
\label{app:implementation-details}

\paragraph{Baseline implementation details.}
We implement each baseline according to its recommended setting whenever applicable.\footnote{
Backbone models: stabilityai/stable-diffusion-xl-base-1.0, PixArt-alpha/PixArt-XL-2-1024-MS, stabilityai/stable-diffusion-3.5-medium, facebook/DiT-XL-2-256, JiT-L/16, and SiT-XL/2-256.
}
For TV-CFG, we set the time-varying guidance schedule such that it peaks at the midpoint of the denoising process and has the same average guidance scale as the CFG guidance scale in the corresponding experiment.
For LF-CFG, we follow the original guideline and implement the low-frequency correction using upsampling and downsampling with an 8$\times$ scaling ratio and a low-frequency attenuation factor of $\rho=0.5$.
For CFG++, we use a guidance scale of $w/12.5$ based on the approximate scale correspondence reported in prior work~\citep{chung2024cfg++,jin2025angle}. Prompts are shown exactly as used during generation, including typographical errors in the original COCO captions.





\paragraph{Implementation details of our method.}
We apply the proposed correction at every denoising step, as described in Algorithm~\ref{alg:spectrum_ratio_simple}.
For text-to-image generation, we use 64 radial bins.
The correction strength is set to $\lambda=0.05$ for SDXL and PixArt-$\alpha$, and $\lambda=0.1$ for SD3.5.
The relaxation factor is set to $\rho=0.5$ for SDXL and SD3.5. For PixArt-$\alpha$ at the higher guidance scale, we use $\lambda=0.1$ and $\rho=0.5$, reflecting the need for stronger correction under stronger guidance.
We estimate the target spectrum using the same inference-time sampling schedule as the one used for generation. For the DiT-XL/2 class-conditional experiments, we use 32 radial bins and set $\lambda=0.01$ across all sampling budgets. We add a small constant $\varepsilon$ to the numerator and denominator of the spectrum ratio for numerical stability. 

To model the $z_0$ space of each model, we encode COCO val 50K images~\citep{lin2014microsoft} into the corresponding latent space and fit the power-law curve based on Eq.~\ref{eq:Ruderman}.
We note that this fitting step is not essential to the method. It is a practical choice for setting the reference spectrum in our experiments.
Since the reference spectrum is parameterized by only three scalar values, the corresponding power-law parameters can also be tuned without using external data. 

For the conceptual validation in Section~\ref{sec:spectral_dynamics}, we use 1,000 image--caption pairs from COCO val 50K. We use SDXL with 50 denoising steps, resolution $1024\times1024$, and fp16 inference. For each sample, we encode the real image into the SDXL VAE latent space and measure the spectrum along the forward noising trajectory. We then run reverse denoising from random Gaussian noise using the corresponding caption as the prompt, and measure the spectrum of the generated trajectory. We evaluate CFG scales $\{2.5, 5.0, 7.5, 15.0\}$, together with unconditional and conditional baselines. To obtain the reported curve, we first average the spectra over the 1,000 samples for each trajectory type, and then compute the mean squared error between the log-power spectra. Thus, the comparison is performed between aggregate spectra rather than as a pairwise per-sample distance. 

Our method is training-free and does not require additional model training.
All reported experiments were conducted on NVIDIA RTX 3090 GPUs.
For class-conditional ImageNet experiments, generating 50K samples was parallelized over 15 GPUs and took approximately 8 hours.
For text-to-image experiments, generating 10K samples was parallelized over 5 GPUs and took approximately 12 hours.


\begin{algorithm}[t]
\caption{Spectral Correction Guidance}
\label{alg:spectrum_ratio_simple}
\begin{algorithmic}[1]
\Require condition $y$, CFG scale $w$, denoising steps $T$, correction strength $\lambda$,
target spectra $\{T_t(\omega)\}_{t=0}^{T-1}$, radial bin count $B$
\State Initialize latent $z_T \sim \mathcal{N}(0,I)$
\For{$t = T,\dots,1$}
    \State $\epsilon_{\varnothing} \gets \epsilon_\theta(z_t,t,\varnothing)$,
    \quad
    $\epsilon_y \gets \epsilon_\theta(z_t,t,y)$
    \State $\hat{\epsilon}_t \gets \epsilon_{\varnothing}
    + w(\epsilon_y-\epsilon_{\varnothing})$
    \State $z_{t-1}
    \gets \mathrm{Update}(z_t,\hat{\epsilon}_t,t)$
    \State $\hat{z}_{t-1} \gets \mathcal{F}(z_{t-1})$
    \State Compute radial power spectrum $S_{t-1}(\omega_k)$ from $\hat{z}_{t-1}$ for $k=1,\dots,B$
    \State Compute corrected power spectrum:
    \Statex
    \[
    \tilde{S}_{t-1}(\omega_k)
    =
    S_{t-1}(\omega_k)
    \left(
    \frac{T_{t-1}(\omega_k)+\varepsilon}
         {S_{t-1}(\omega_k)+\varepsilon}
    \right)^{\lambda},
    \quad k=1,\dots,B
    \]
    \State Compute corrected Fourier magnitude:
    \Statex
    \[
    M_{t-1}(\omega_k)
    =
    \sqrt{\tilde{S}_{t-1}(\omega_k)},
    \quad k=1,\dots,B
    \]
    \State Set the Fourier magnitude to $M_{t-1}$ while preserving the phase of $\hat{z}_{t-1}$:
    \Statex
    \[
    |\hat{\tilde{z}}_{t-1}| = M_{t-1},
    \quad
    \angle \hat{\tilde{z}}_{t-1} = \angle \hat{z}_{t-1}
    \]
    \State $z_{t-1} \gets \mathcal{F}^{-1}(\hat{\tilde{z}}_{t-1})$
\EndFor
\State \Return $\mathrm{Decode}(z_0)$
\end{algorithmic}
\end{algorithm}

\section{Proof of Proposition~\ref{prop:power_law_intermediate}}
\label{app:proof-proposition-1}

\begin{proof}
Starting from the forward process in Eq.~\ref{eq:zt_def},
\[
z_t=\alpha_t z_0+\sigma_t\epsilon,
\]
we apply the Fourier transform. By linearity,
\[
\hat{z}_t(\omega)
=
\alpha_t \hat{z}_0(\omega)
+
\sigma_t \hat{\epsilon}(\omega).
\]
Since the power spectrum is defined as
\(P_t(\omega)=|\hat{z}_t(\omega)|^2\), we have
\[
P_t(\omega)
=
\left|
\alpha_t \hat{z}_0(\omega)
+
\sigma_t \hat{\epsilon}(\omega)
\right|^2 .
\]
Expanding the squared magnitude gives
\[
P_t(\omega)
=
\alpha_t^2 P_0(\omega)
+
\sigma_t^2 P_\epsilon(\omega)
+
\alpha_t\sigma_t C_t(\omega),
\]
where \(C_t(\omega)\) denotes the cross term between
\(\hat{z}_0(\omega)\) and \(\hat{\epsilon}(\omega)\).
Taking expectation on both sides yields
\[
\mathbb{E}[P_t(\omega)]
=
\alpha_t^2 \mathbb{E}[P_0(\omega)]
+
\sigma_t^2 \mathbb{E}[P_\epsilon(\omega)]
+
\alpha_t\sigma_t \mathbb{E}[C_t(\omega)].
\]
Since \(z_0\) and \(\epsilon\) are independent and \(\mathbb{E}[\epsilon]=0\), the cross term vanishes in expectation.
Therefore,
\begin{equation}
\label{eq:expected_pt_decomposition}
\mathbb{E}[P_t(\omega)]
=
\alpha_t^2 \mathbb{E}[P_0(\omega)]
+
\sigma_t^2 \mathbb{E}[P_\epsilon(\omega)] .
\end{equation}

Substituting the assumed form of the clean-image spectrum,
\[
\mathbb{E}[P_0(\omega)] = C\omega^{-\beta}+c,
\]
and the flat spectrum of white Gaussian noise,
\[
\mathbb{E}[P_\epsilon(\omega)] = c_\epsilon,
\]
into Eq.~\ref{eq:expected_pt_decomposition}, we obtain
\[
\mathbb{E}[P_t(\omega)]
=
\alpha_t^2(C\omega^{-\beta}+c)
+
\sigma_t^2 c_\epsilon .
\]
Expanding the terms gives
\[
\mathbb{E}[P_t(\omega)]
=
\alpha_t^2 C\omega^{-\beta}
+
\alpha_t^2 c
+
\sigma_t^2 c_\epsilon .
\]
Finally, defining
\[
\lambda_t := \alpha_t^2 C,
\qquad
\gamma_t := \alpha_t^2 c+\sigma_t^2 c_\epsilon,
\]
gives
\[
\mathbb{E}[P_t(\omega)]
=
\lambda_t \omega^{-\beta}
+
\gamma_t .
\]
This completes the proof.
\end{proof}

\clearpage 

\section{Computational Cost}
\label{app:computational_cost}

To better understand the computational overhead of the proposed correction, we compare the peak GPU memory usage and per-image generation time of our method with those of standard CFG over 100 images.
As shown in Table~\ref{tab:computational_cost}, peak memory usage remains comparable.
The proposed correction requires no backpropagation or additional denoiser evaluations.
It adds approximately 1.7 seconds per image on SDXL and 2.6 seconds on SD3.5.
The reduced-NFE results in Table~\ref{tab:nfe_ablation} suggest that reducing the number of denoising steps could offset this overhead while maintaining generation quality.

\begin{table}[h]
\centering
\caption{Computational cost of standard CFG and our method, measured over 100 images.}
\label{tab:computational_cost}
\resizebox{0.5\linewidth}{!}{
\begin{tabular}{llcc}
\toprule
Model & Method & Peak VRAM (GB) & Time (s/image) \\
\midrule\midrule
\multirow{2}{*}{SDXL}
& \cellcolor{gray!20}Ours
& \cellcolor{gray!20}12.70
& \cellcolor{gray!20}15.442 \\
& CFG & 12.84 & 13.775 \\
\midrule
\multirow{2}{*}{SD3.5}
& \cellcolor{gray!20}Ours
& \cellcolor{gray!20}12.68
& \cellcolor{gray!20}25.236 \\
& CFG & 12.67 & 22.628 \\
\bottomrule
\end{tabular}
}
\end{table}

\section{Human Evaluation}
\label{app:human_evaluation}

We conduct a human preference study comparing our method with standard CFG on SDXL.
We randomly sample 50 prompts and ask 20 participants to compare the generated images in terms of visual quality and prompt alignment.
For each prompt, we randomize the left-right placement of the two methods to reduce positional bias.
As shown in Table~\ref{tab:human_evaluation}, our method receives 595 out of 1,000 votes for visual quality and 588 out of 1,000 votes for prompt alignment, corresponding to win rates of 59.50\% and 58.80\%, respectively.
Standard CFG already produces high-quality images, yet participants prefer the outputs of our method on both criteria.
These gains come from sampling-time correction without additional training or changes to the underlying model.

\begin{table}[h]
\centering
\caption{Human preference study comparing our method with CFG on SDXL. Each criterion receives 1,000 votes from 20 participants evaluating 50 prompts.}
\label{tab:human_evaluation}
\resizebox{0.5\linewidth}{!}{
\begin{tabular}{lccc}
\toprule
  & Ours & CFG & Win Rate (Ours) \\
\midrule\midrule
Visual Quality & \textbf{595} & 405 & \textbf{59.50\%} \\
Prompt Alignment & \textbf{588} & 412 & \textbf{58.80\%} \\
\bottomrule
\end{tabular}
}
\end{table}

\section{Robustness Across Random Seeds}
\label{app:multi_seed}

To assess robustness to random seed variation, we compare our method with CFG on SDXL across 10 seeds using 1K randomly sampled prompts per seed.
We report the mean and standard deviation of each metric across seeds.
As shown in Table~\ref{tab:multi_seed}, our method achieves higher mean HPSv3, ImageReward, PickScore, and HPSv2 scores and lower mean FID than CFG.
CFG obtains slightly better mean CLIP-T and sFID scores.
The gains in HPSv3, ImageReward, and HPSv2 are large relative to the observed standard deviations, supporting the robustness of these improvements across random seeds.

\begin{table}[h]
\centering
\caption{Multi-seed comparison of our method and CFG on SDXL using 1K prompts per seed. Results are reported as mean $\pm$ standard deviation across 10 random seeds.}
\label{tab:multi_seed}
\resizebox{0.98\linewidth}{!}{
\begin{tabular}{lccccccc}
\toprule
Method & HPSv3$\uparrow$ & ImageReward$\uparrow$ & PickScore$\uparrow$ & HPSv2$\uparrow$ & CLIP-T$\uparrow$ & FID$\downarrow$ & sFID$\downarrow$ \\
\midrule\midrule
\cellcolor{gray!20}Ours
& \cellcolor{gray!20}$\mathbf{8.375 \pm 0.045}$
& \cellcolor{gray!20}$\mathbf{0.7368 \pm 0.0088}$
& \cellcolor{gray!20}$\mathbf{22.463 \pm 0.021}$
& \cellcolor{gray!20}$\mathbf{0.2800 \pm 0.0004}$
& \cellcolor{gray!20}$0.2651 \pm 0.0004$
& \cellcolor{gray!20}$\mathbf{66.572 \pm 0.557}$
& \cellcolor{gray!20}$344.398 \pm 0.434$ \\
CFG
& $8.068 \pm 0.040$
& $0.7020 \pm 0.0137$
& $22.449 \pm 0.014$
& $0.2750 \pm 0.0003$
& $\mathbf{0.2660 \pm 0.0005}$
& $67.201 \pm 0.504$
& $\mathbf{343.793 \pm 0.861}$ \\
\bottomrule
\end{tabular}
}
\end{table}

\section{Comparison with Output-Only Correction}
\label{app:output_only_correction}

To examine whether the gains can be explained by spectral correction of the final output alone, we compare our method with an output-only correction baseline on SDXL using 1K randomly sampled prompts.
This baseline applies spectral correction only after generation, treating it as a post-processing step.
As shown in Table~\ref{tab:output_only_correction}, output-only correction yields modest improvements over CFG in preference-based metrics.
Our method achieves larger gains in HPSv3, ImageReward, PickScore, and HPSv2, together with lower FID and sFID.
These results support the benefit of correcting spectral deviations during sampling beyond the improvements obtained by adjusting the final spectrum alone.

\begin{table}[h]
\centering
\caption{Comparison with output-only spectral correction on SDXL using 1K randomly sampled prompts.}
\label{tab:output_only_correction}
\resizebox{0.9\linewidth}{!}{
\begin{tabular}{lccccccc}
\toprule
Method & HPSv3$\uparrow$ & ImageReward$\uparrow$ & PickScore$\uparrow$ & HPSv2$\uparrow$ & CLIP-T$\uparrow$ & FID$\downarrow$ & sFID$\downarrow$ \\
\midrule\midrule
\cellcolor{gray!20}Ours
& \cellcolor{gray!20}\textbf{8.437}
& \cellcolor{gray!20}\textbf{0.738}
& \cellcolor{gray!20}\textbf{22.499}
& \cellcolor{gray!20}\textbf{0.280}
& \cellcolor{gray!20}0.265
& \cellcolor{gray!20}\textbf{66.818}
& \cellcolor{gray!20}\textbf{345.093} \\
Output-only correction
& 8.218
& 0.710
& 22.482
& 0.276
& \textbf{0.267}
& 67.557
& 345.246 \\
CFG
& 8.104
& 0.708
& 22.473
& 0.275
& \textbf{0.267}
& 67.430
& 345.259 \\
\bottomrule
\end{tabular}
}
\end{table}

\section{Non-Photorealistic and Stylized Generation}
\label{app:stylized_generation}

To evaluate our method on stylized generation, we append ``anime illustration'' or ``oil painting'' to each of 1K prompts and generate images using SDXL.
As shown in Table~\ref{tab:stylized_generation}, our method improves HPSv3, ImageReward, PickScore, and HPSv2 over standard CFG for both styles.
CLIP-T improves for anime illustration and remains unchanged at the reported precision for oil painting.
These findings are consistent with the motivation for correction relaxation discussed in Section~\ref{sec:abl}.
The correction constrains spectral deviations during earlier sampling stages and relaxes toward the end, reducing spectral constraints on the formation of style-specific details.
The results support the applicability of the proposed correction beyond standard photorealistic generation.

\begin{table}[h]
\centering
\caption{Comparison of our method and CFG on SDXL for stylized generation using 1K prompts per style.}
\label{tab:stylized_generation}
\resizebox{0.9\linewidth}{!}{
\begin{tabular}{llccccc}
\toprule
Style & Method & HPSv3$\uparrow$ & ImageReward$\uparrow$ & PickScore$\uparrow$ & HPSv2$\uparrow$ & CLIP-T$\uparrow$ \\
\midrule\midrule
\multirow{2}{*}{Anime illustration}
& \cellcolor{gray!20}Ours
& \cellcolor{gray!20}\textbf{8.065}
& \cellcolor{gray!20}\textbf{1.045}
& \cellcolor{gray!20}\textbf{23.357}
& \cellcolor{gray!20}\textbf{0.317}
& \cellcolor{gray!20}\textbf{0.337} \\
& CFG
& 7.995 & 1.011 & 23.331 & 0.314 & 0.336 \\
\midrule
\multirow{2}{*}{Oil painting}
& \cellcolor{gray!20}Ours
& \cellcolor{gray!20}\textbf{8.646}
& \cellcolor{gray!20}\textbf{1.016}
& \cellcolor{gray!20}\textbf{22.822}
& \cellcolor{gray!20}\textbf{0.300}
& \cellcolor{gray!20}\textbf{0.320} \\
& CFG
& 8.473 & 1.011 & 22.775 & 0.294 & \textbf{0.320} \\
\bottomrule
\end{tabular}
}
\end{table}

\section{Additional Experiments across Guidance Settings}
\label{app:guidance-settings}

Section~\ref{sec:t2i_generation} evaluates text-to-image generation under the standard and high guidance settings.
We further examine whether the proposed correction remains effective when the guidance scale is reduced.
Table~\ref{tab:scale_abl} reports the results on SDXL at $w=2.5$, which is half of the standard guidance scale used in our main experiments.
Our method achieves the best performance on all metrics except sFID, improving preference-based scores and FID over standard CFG.
In contrast, existing guidance correction methods often degrade performance compared to CFG in this low-guidance regime. This suggests that correction strategies designed mainly to suppress over-guidance artifacts do not effectively address settings where the guidance signal is weak, whereas our spectral correction remains beneficial across guidance settings.

Figure~\ref{fig:lower_guidance_sdxl} shows qualitative examples under reduced guidance.
Generation at this guidance scale is challenging, and the images are not always fully satisfactory across methods.
Nevertheless, our method shows relatively improved prompt alignment and visual coherence.
For example, it better reflects the racket held in one hand in the first prompt and the feeding interaction with the giraffe in the second prompt.
These results further support the robustness of the proposed correction across different guidance scales.




\begin{table}[h]
\centering
\caption{Quantitative comparison under reduced guidance on SDXL.
At half of the standard guidance scale ($w=2.5$), our method improves most metrics, while existing guidance correction methods provide limited benefit.}
\label{tab:scale_abl}
\resizebox{0.85\linewidth}{!}{
\begin{tabular}{llcccccc}
\toprule
Setting & Method & ImageReward$\uparrow$ & PickScore$\uparrow$ & HPSv2$\uparrow$ & CLIP-T$\uparrow$ & FID$\downarrow$ & sFID$\downarrow$ \\
\midrule\midrule
\multirow{5}{*}{$w=2.5$}
& Ours\cellcolor{gray!20}
& \cellcolor{gray!20}\textbf{0.391}
& \cellcolor{gray!20}\textbf{21.86}
& \cellcolor{gray!20}\textbf{0.252}
& \cellcolor{gray!20}\textbf{0.257}
& \cellcolor{gray!20}\textbf{18.28}
& \cellcolor{gray!20}52.82 \\
& CFG
& 0.362 & 21.83 & 0.249 & \textbf{0.257} & 18.76 & \textbf{52.26} \\
& CFG++
& 0.329 & 21.77 & 0.247 & \textbf{0.257} & 19.30 & 52.75 \\
& TV-CFG
& 0.104 & 21.43 & 0.235 & 0.253 & 24.85 & 56.01 \\
& LF-CFG
& 0.022 & 21.23 & 0.232 & 0.248 & 26.42 & 60.18 \\
\bottomrule
\end{tabular}
}
\end{table}

\begin{figure*}[t]
    \centering
    \setlength{\tabcolsep}{2pt}
    \renewcommand{\arraystretch}{1.0}
    \scriptsize

    \begin{tabular}{c ccccc}
        & (a) Ours & (b) CFG & (c) CFG++ & (d) TV-CFG & (e) LF-CFG \\[0.25em]

        \multirow{4}{*}{\rotatebox{90}{\textbf{SDXL}}}
        & \includegraphics[width=0.18\linewidth]{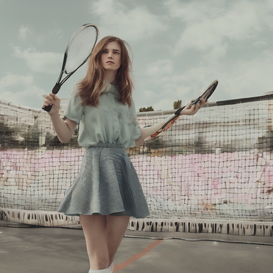}
        & \includegraphics[width=0.18\linewidth]{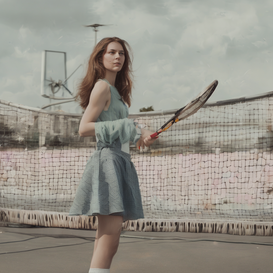}
        & \includegraphics[width=0.18\linewidth]{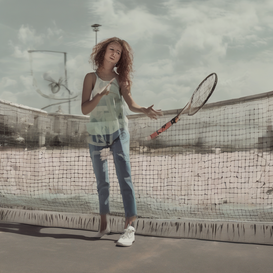}
        & \includegraphics[width=0.18\linewidth]{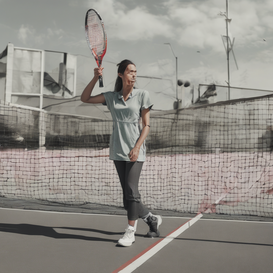}
        & \includegraphics[width=0.18\linewidth]{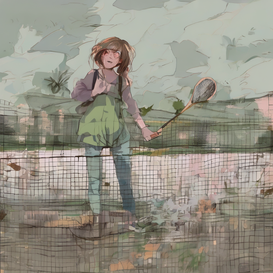} \\[-0.1em]
        & \multicolumn{5}{c}{\scriptsize \textbf{Prompt:} ``A girl with a racket in one hand.''} \\[0.45em]

        & \includegraphics[width=0.18\linewidth]{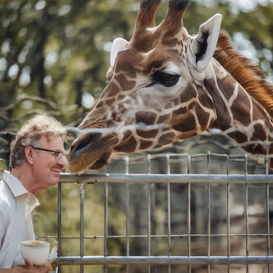}
        & \includegraphics[width=0.18\linewidth]{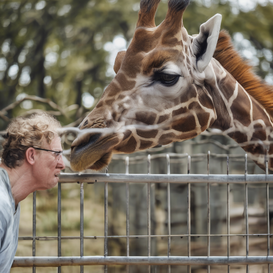}
        & \includegraphics[width=0.18\linewidth]{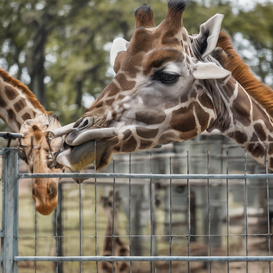}
        & \includegraphics[width=0.18\linewidth]{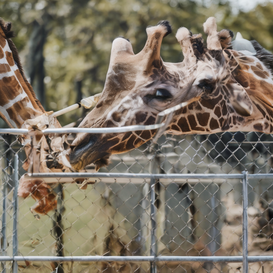}
        & \includegraphics[width=0.18\linewidth]{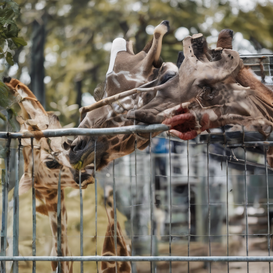} \\[-0.1em]
        & \multicolumn{5}{c}{\scriptsize \textbf{Prompt:} ``A man feeding a giraffe through a metal fence''}
    \end{tabular}

    \caption{Qualitative comparison under reduced guidance on SDXL.
    At half of the standard guidance scale ($w=2.5$), our method better preserves prompt-relevant content and visual coherence, while existing guidance correction methods provide limited qualitative improvement.}
    \label{fig:lower_guidance_sdxl}
\end{figure*}


\section{Additional Experiments on Diverse Backbone Models}
\label{app:diverse-backbones}

We further evaluate whether the proposed spectral correction can be applied beyond the DiT setting studied in Section~\ref{sec:t2i_generation}.
To this end, we conduct additional experiments on other class-conditional image generation backbones, including SiT and JiT.

\paragraph{SiT.}
We first evaluate our method on SiT-XL/2~\citep{ma2024sit}.
We use 32 radial bins for power spectrum estimation, with $\lambda=0.005$ and $\rho=0.5$.
Figure~\ref{fig:class_conditional_compare_sit} shows qualitative results.
The results suggest that our method can improve generation quality also on SiT.
Our method produces a more object-centric image for the European fire salamander class and reveals a clearer object shape for the Indian cobra class.

\paragraph{JiT.}
We also evaluate our method on JiT-L/16~\citep{li2025back}.
We use 128 radial bins for power spectrum estimation, with $\lambda=0.5$ and $\rho=0.5$.
Figure~\ref{fig:class_conditional_compare_jit} presents the qualitative comparison.
In the water ouzel example, our method produces a sharper image.
In the iguana example, class-specific visual details are more visible.
However, we observe that JiT exhibits larger sample-level variation than the other backbones.
Consistent with the ablation results in Section~\ref{sec:abl}, the effect of stronger correction is not uniform across samples, as it can introduce artifacts in some cases while producing only marginal changes in others.
This behavior may be related to JiT operating directly in the raw-pixel space, where spectral statistics can exhibit greater sample-level variability. The observation suggests that, although our method is motivated by spectral assumptions in pixel space, compressed latent spaces may provide a more stable domain for applying the correction by reducing sample-level spectral variability.





\begin{figure*}[t]
    \centering
    \setlength{\tabcolsep}{2pt}
    \renewcommand{\arraystretch}{1.0}

    \begin{tabular}{cccc}
        (a) CFG & (b) Ours & (c) CFG & (d) Ours \\[0.25em]

        \includegraphics[width=0.24\linewidth]{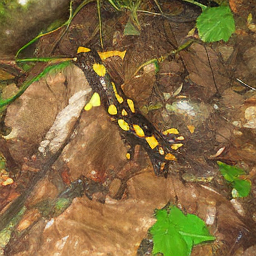} &
        \includegraphics[width=0.24\linewidth]{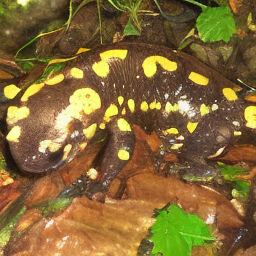} &
        \includegraphics[width=0.24\linewidth]{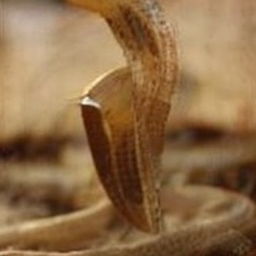} &
        \includegraphics[width=0.24\linewidth]{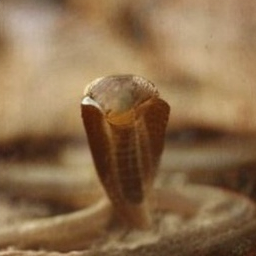} \\[-0.1em]

        \multicolumn{2}{c}{\small \textbf{Class:} ``European fire salamander''} &
        \multicolumn{2}{c}{\small \textbf{Class:} ``Indian cobra''}
    \end{tabular}

    \caption{Qualitative comparison on SiT for class-conditional generation.
Compared to CFG, our method yields clearer structures aligned with the target class and better preserves class-relevant visual attributes.}
    \label{fig:class_conditional_compare_sit}
\end{figure*}

\begin{figure*}[t]
    \centering
    \setlength{\tabcolsep}{2pt}
    \renewcommand{\arraystretch}{1.0}

    \begin{tabular}{cccc}
        (a) CFG & (b) Ours & (c) CFG & (d) Ours \\[0.25em]

        \includegraphics[width=0.24\linewidth]{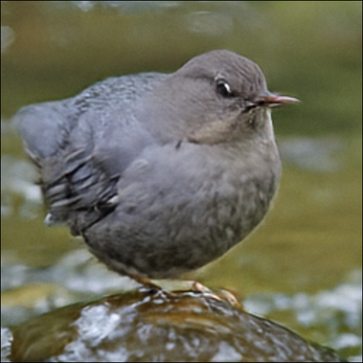} &
        \includegraphics[width=0.24\linewidth]{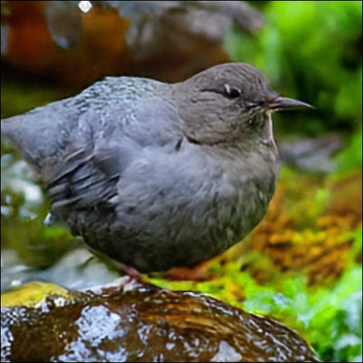} &
        \includegraphics[width=0.24\linewidth]{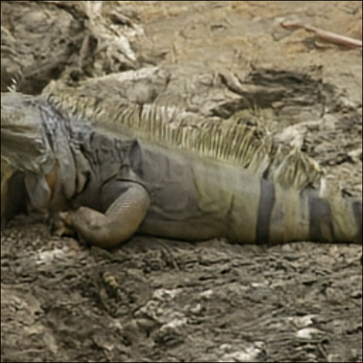} &
        \includegraphics[width=0.24\linewidth]{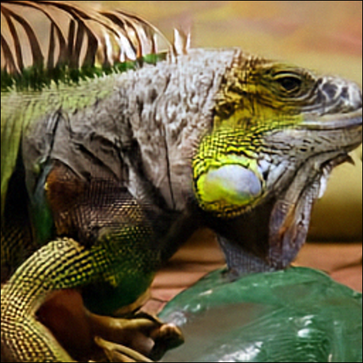} \\[-0.1em]

        \multicolumn{2}{c}{\small \textbf{Class:} ``Water ouzel''} &
        \multicolumn{2}{c}{\small \textbf{Class:} ``Iguana''}
    \end{tabular}

    \caption{Qualitative comparison on JiT for class-conditional generation.
Our method improves image sharpness and fine visual details, producing more recognizable samples than CFG.}
    \label{fig:class_conditional_compare_jit}
\end{figure*}

\section{Additional Analysis of Power Spectrum Modeling}
\label{app:power-spectrum-estimation}

Our method extends a spectral principle from pixel space to latent space. 
To empirically validate this assumption, we examine whether the power-law behavior of natural image spectra is preserved after encoding images into the latent spaces of the models used in our experiments.
Figure~\ref{fig:z0_power} shows the mean power spectrum of encoded COCO val 50K images~\citep{lin2014microsoft} in the latent space of each model, along with the fitted curve based on Eq.~\ref{eq:Ruderman}.
The fitted curves achieve $R^2 > 0.99$ across models, indicating that the $z_0$ spaces of the image generation models considered in our experiments can be well characterized by this power-law structure.
This result provides empirical justification for applying the proposed spectral correction in these spaces.




We also analyze the effect of binning resolution when estimating the power spectrum for the proposed correction.
The binning resolution determines how many frequency components are grouped into each radial bin.
To make the effect of this design choice more visible, we vary the number of radial bins under a relatively strong correction strength of $\lambda=0.5$.
Figure~\ref{fig:bin_ablation} shows the qualitative results.
Coarse binning, such as 8 radial bins, covers a broad range of frequencies within each bin and mainly produces global changes, such as changes in overall brightness.
By contrast, finer binning, such as 128 radial bins, produces more noticeable changes in image structure.
These results suggest that binning resolution is one of the design choices that can affect the behavior of spectral correction.

\begin{figure}
    \centering
    \includegraphics[width=1\linewidth]{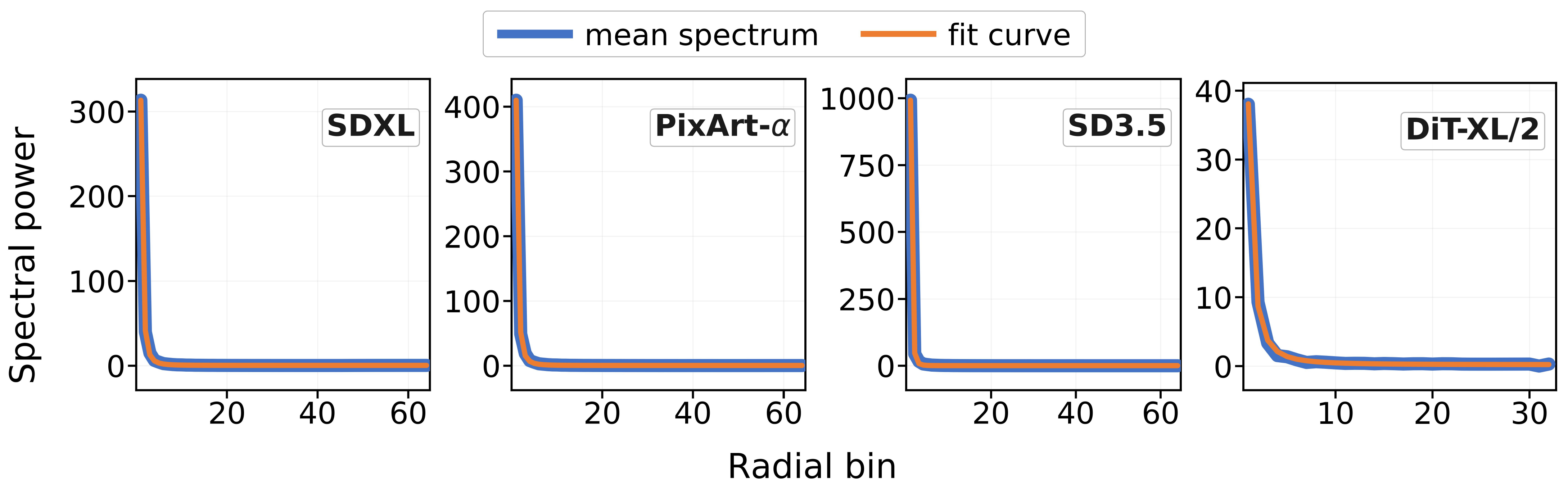}
\caption{Power spectrum of $z_0$ from baseline diffusion models with fitted curves.
Although the spectral scale varies across models, the spectra consistently follow a power-law statistical structure. The fit based on Eq.~\ref{eq:Ruderman} achieves $R^2 > 0.99$ across models.}
    \label{fig:z0_power}
\end{figure}

\begin{figure*}[t]
    \centering
    \setlength{\tabcolsep}{2pt}
    \renewcommand{\arraystretch}{1.0}
    \scriptsize

    \begin{tabular}{c cccc}
        & CFG & 8 bins & 32 bins & 128 bins \\[0.25em]

        \multirow{4}{*}{\rotatebox{90}{\textbf{SDXL}}}
        & \includegraphics[width=0.225\linewidth]{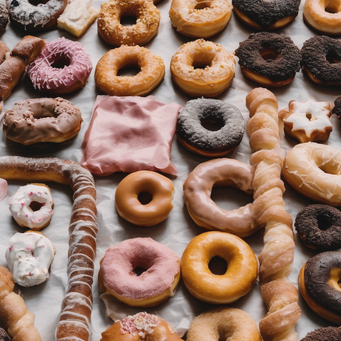}
        & \includegraphics[width=0.225\linewidth]{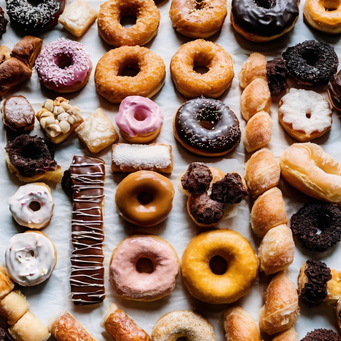}
        & \includegraphics[width=0.225\linewidth]{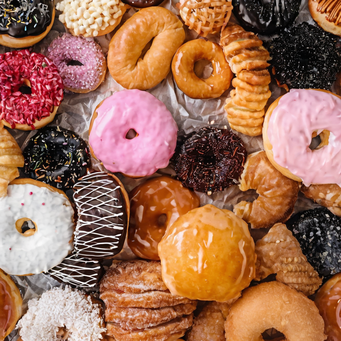}
        & \includegraphics[width=0.225\linewidth]{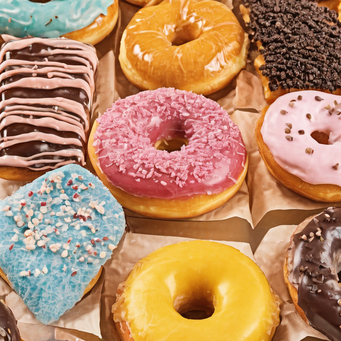} \\[-0.1em]
        & \multicolumn{4}{c}{\scriptsize \textbf{Prompt:} ``A selection of doughnuts and pastry sitting on waxed sheets''} \\[0.45em]

        & \includegraphics[width=0.225\linewidth]{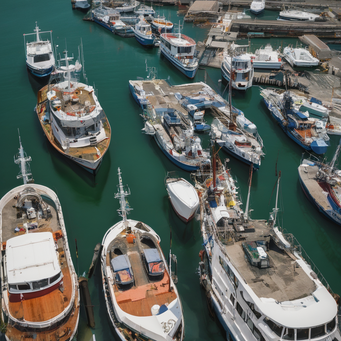}
        & \includegraphics[width=0.225\linewidth]{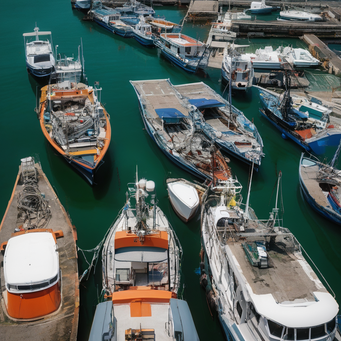}
        & \includegraphics[width=0.225\linewidth]{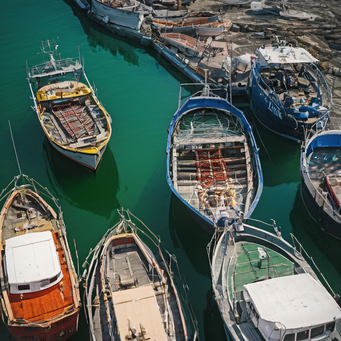}
        & \includegraphics[width=0.225\linewidth]{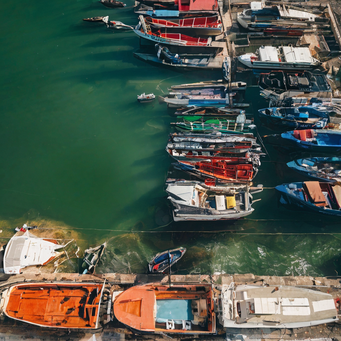} \\[-0.1em]
        & \multicolumn{4}{c}{\scriptsize \textbf{Prompt:} ``Several boats in the water docked at a port.''}
    \end{tabular}

    \caption{Ablation on the number of radial bins for power spectrum estimation on SDXL.
We use a relatively strong correction strength of $\lambda=0.5$ to better highlight the effect of binning.
}
    \label{fig:bin_ablation}
\end{figure*}

\newpage

\begin{figure}[t]
    \centering
    \begin{subfigure}[b]{0.3\linewidth}
        \centering
        \captionsetup{labelformat=empty}
        \caption{\quad (a) PickScore}
        \label{fig:rho_pickscore}
        \includegraphics[width=0.9\linewidth]{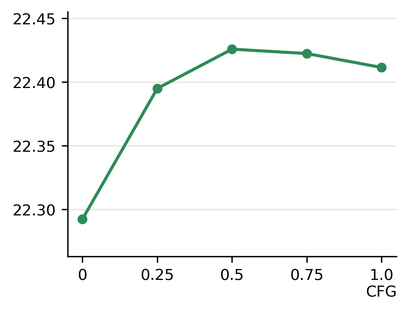}
    \end{subfigure}%
    \hfill
    \begin{subfigure}[b]{0.3\linewidth}
        \centering
        \captionsetup{labelformat=empty}
        \caption{\quad (b) FID}
        \label{fig:rho_fid}
        \includegraphics[width=0.9\linewidth]{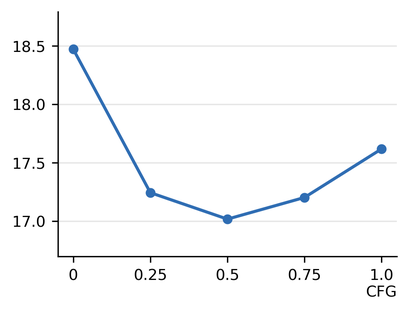}
    \end{subfigure}%
    \hfill
    \begin{subfigure}[b]{0.3\linewidth}
        \centering
        \captionsetup{labelformat=empty}
        \caption{\quad (c) sFID}
        \label{fig:rho_sfid}
        \includegraphics[width=0.9\linewidth]{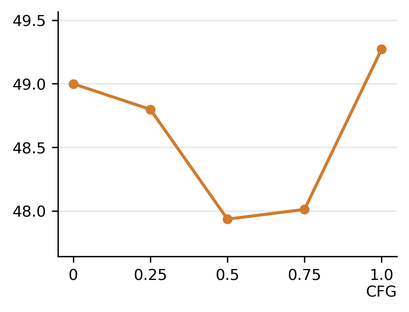}
    \end{subfigure}
    \caption{Effect of the relaxation factor $\rho$ on PickScore, FID, and sFID.
    The x-axis is the relaxation factor and the y-axis is the metric value.
    Overall, $\rho=0.5$ yields the most balanced performance across the three metrics.}
    \vspace{-0.2em}
    \label{fig:rho_ablation_metrics}
\end{figure}

\section{Additional Analysis of Correction Relaxation}
\label{app:correction-relaxation}
\vspace{-0.2em}
We further analyze the effect of the relaxation factor $\rho$ used in the proposed correction.
Figure~\ref{fig:rho_ablation_metrics} shows the results on SDXL, where we vary $\rho$ and evaluate PickScore, FID, and sFID.
The value $\rho=0.5$ provides the most balanced results across the three metrics.
This provides quantitative support for the role of correction relaxation in balancing alignment and fidelity.
We also observe that the benefits of relaxation are often more apparent in visual quality than in metrics, with corrected samples showing a more natural appearance and fewer visible artifacts.

\vspace{-0.2em}
\section{Additional Qualitative Results}
\label{app:qualitative-results}
\vspace{-0.2em}
Figures~\ref{fig:additional_qualitative_sdxl}, \ref{fig:additional_qualitative_sdxl_2}, 
\ref{fig:additional_qualitative_pixart_2}, \ref{fig:additional_qualitative_pixart}, 
\ref{fig:additional_qualitative_sd35}, and~\ref{fig:additional_qualitative_sd35_2} 
show additional qualitative results on SDXL, PixArt-$\alpha$, and SD3.5.
Figures~\ref{fig:additional_high_guidance_sdxl}, \ref{fig:additional_high_guidance_sdxl_2}, 
\ref{fig:additional_high_guidance_pixart}, \ref{fig:additional_high_guidance_pixart_2}, 
\ref{fig:additional_high_guidance_sd35}, and~\ref{fig:additional_high_guidance_sd35_2} 
show additional results under high guidance.
Across these examples, our method produces more favorable outputs than the baselines.
It mitigates artifacts observed in CFG, improves brightness and sharpness, and better preserves text-aligned visual content. Some outputs remain visually close to those of CFG. When intermediate spectra are already close to the analytic reference, our method applies only small spectral adjustments, helping preserve the quality already achieved by CFG. We recommend zooming in on the figures to examine fine-grained differences in visual quality and text alignment.


\paragraph{Analysis of Failure Cases.}
Despite the overall effectiveness of the proposed correction, the additional qualitative results also reveal two common failure patterns.
These include cases where the correction only partially mitigates artifacts observed in CFG, and cases where improvements in visual quality or prompt alignment are accompanied by undesired changes in other aspects of the image.
First, in the first example of Figure~\ref{fig:additional_qualitative_sdxl_2}, our method produces a cat with a more natural composition, but the generated cat still exhibits an abnormal tail structure.
Similarly, in the fifth example of Figure~\ref{fig:additional_qualitative_pixart}, the face of the person near the center is improved, while some details such as the hands remain unnatural. In these examples, our method retains much of the quality already achieved by CFG while refining some fine-grained details, though certain artifacts persist.
In the last example of Figure~\ref{fig:additional_qualitative_sd35}, our method improves visual quality over CFG, but some artifacts remain.
Second, some cases show a trade-off between different aspects of generation quality.
In the last example of Figure~\ref{fig:additional_high_guidance_sdxl_2}, the overall color tone is improved, but the number of people increases, making the result less consistent with the text condition.
In the last example of Figure~\ref{fig:additional_high_guidance_pixart_2}, the luggage shape becomes more natural, but the interaction between objects is weakened during the correction.
These examples suggest that, while the proposed method can improve generation quality in many cases, there remains room for further improving its robustness.



\vspace{-0.2em}
\section{Additional Discussion}
\label{app:additional-discussion}
\vspace{-0.2em}
The proposed method improves guided generation by normalizing spectral deviations from the expected denoising dynamics.
This correction keeps the guided trajectory closer to the spectral statistics induced by the forward process, which can improve image quality and prompt alignment.
At the same time, the correction does not provide explicit feedback toward a desired semantic direction.
As a result, its ability to control generation toward a specific target remains limited.
More direct semantic control may require additional learning signals, such as external rewards or task-specific supervision, but the use of such signals introduces separate challenges. These observations suggest that more fundamental improvements may require aligning the denoising dynamics from the training stage.
This direction calls for a deeper understanding of the representation space and generation dynamics formed by diffusion models.


\newpage

\begin{figure*}[!t]
    \centering
    \setlength{\tabcolsep}{2pt}
    \renewcommand{\arraystretch}{1.0}
    \scriptsize

    \begin{tabular}{c ccccc}
        & (a) Ours & (b) CFG & (c) CFG++ & (d) TV-CFG & (e) LF-CFG \\[0.25em]

        \multirow{4}{*}
        & \includegraphics[width=0.18\linewidth]{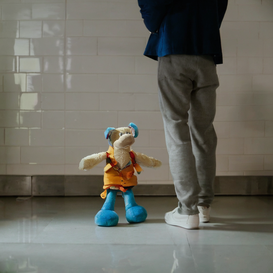}
        & \includegraphics[width=0.18\linewidth]{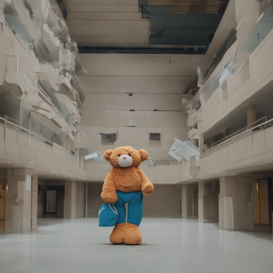}
        & \includegraphics[width=0.18\linewidth]{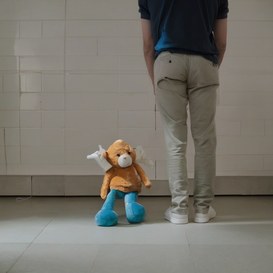}
        & \includegraphics[width=0.18\linewidth]{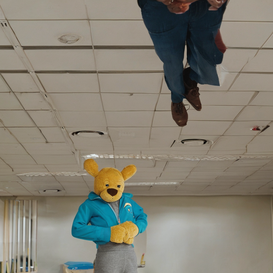}
        & \includegraphics[width=0.18\linewidth]{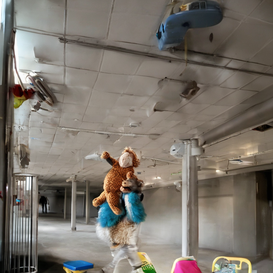} \\[-0.1em]
        & \multicolumn{5}{c}{\scriptsize \textbf{Prompt:} ``a man standing in a building with a stuffed animal''} \\[0.45em]

        & \includegraphics[width=0.18\linewidth]{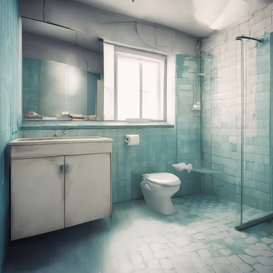}
        & \includegraphics[width=0.18\linewidth]{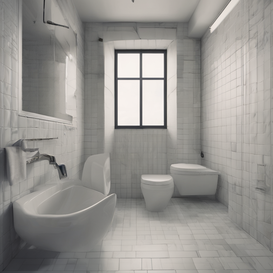}
        & \includegraphics[width=0.18\linewidth]{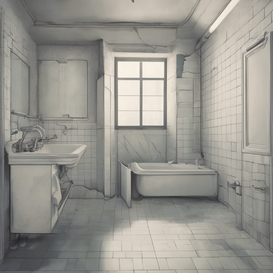}
        & \includegraphics[width=0.18\linewidth]{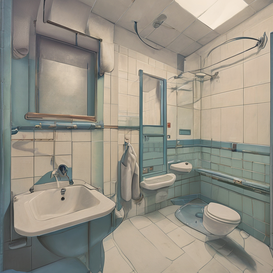}
        & \includegraphics[width=0.18\linewidth]{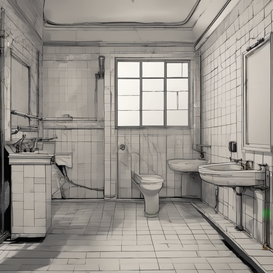} \\[-0.1em]
        & \multicolumn{5}{c}{\scriptsize \textbf{Prompt:} ``A bathroom that is empty and not in use.''} \\[0.8em]

        \multirow{4}{*}
        & \includegraphics[width=0.18\linewidth]{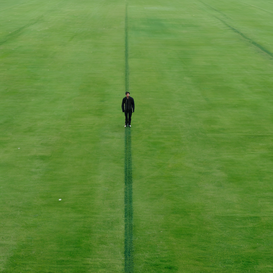}
        & \includegraphics[width=0.18\linewidth]{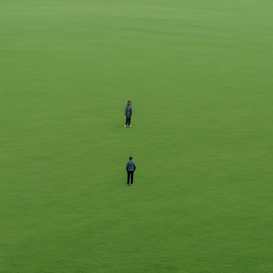}
        & \includegraphics[width=0.18\linewidth]{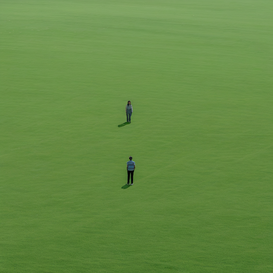}
        & \includegraphics[width=0.18\linewidth]{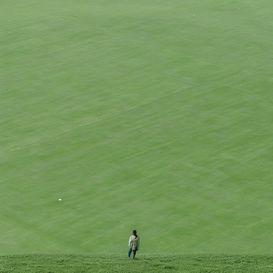}
        & \includegraphics[width=0.18\linewidth]{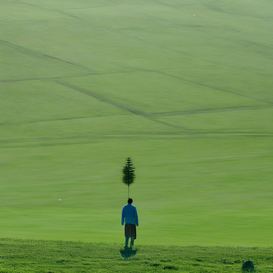} \\[-0.1em]
        & \multicolumn{5}{c}{\scriptsize \textbf{Prompt:} ``a person standing on a large green field.''} \\[0.45em]

        & \includegraphics[width=0.18\linewidth]{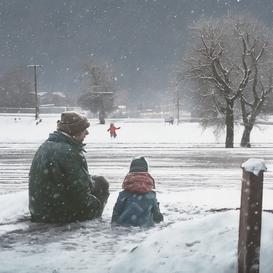}
        & \includegraphics[width=0.18\linewidth]{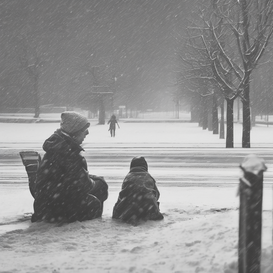}
        & \includegraphics[width=0.18\linewidth]{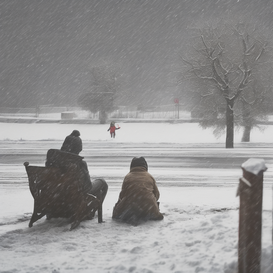}
        & \includegraphics[width=0.18\linewidth]{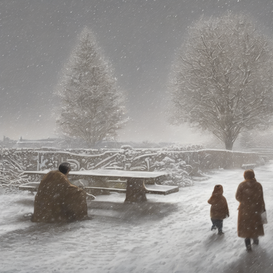}
        & \includegraphics[width=0.18\linewidth]{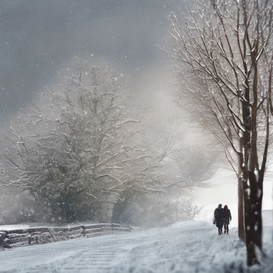} \\[-0.1em]
        & \multicolumn{5}{c}{\scriptsize \textbf{Prompt:} ``There are two people who are out in the snow.''} \\[0.8em]

        \multirow{4}{*}
        & \includegraphics[width=0.18\linewidth]{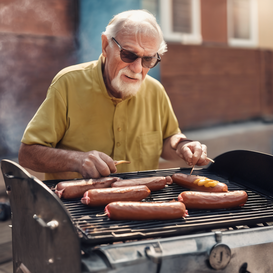}
        & \includegraphics[width=0.18\linewidth]{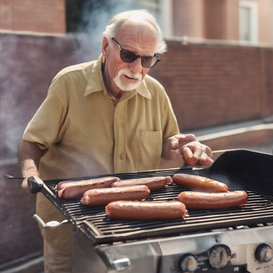}
        & \includegraphics[width=0.18\linewidth]{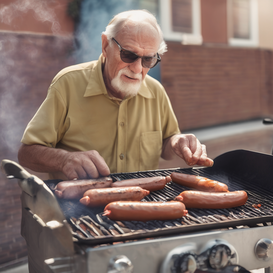}
        & \includegraphics[width=0.18\linewidth]{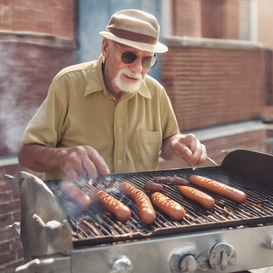}
        & \includegraphics[width=0.18\linewidth]{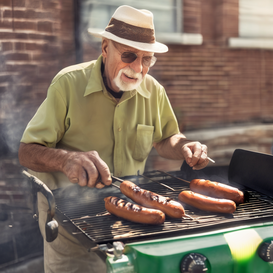} \\[-0.1em]
        & \multicolumn{5}{c}{\scriptsize \textbf{Prompt:} ``The elderly man is roasting hotdogs on a grill.''} \\[0.45em]

        & \includegraphics[width=0.18\linewidth]{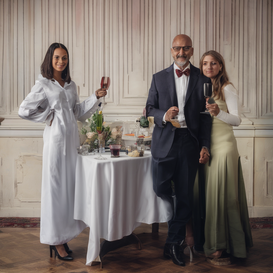}
        & \includegraphics[width=0.18\linewidth]{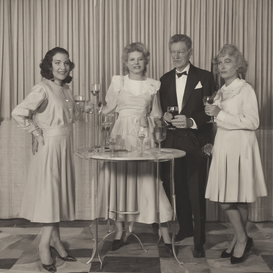}
        & \includegraphics[width=0.18\linewidth]{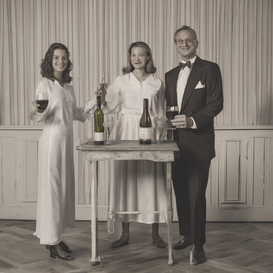}
        & \includegraphics[width=0.18\linewidth]{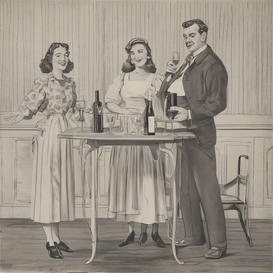}
        & \includegraphics[width=0.18\linewidth]{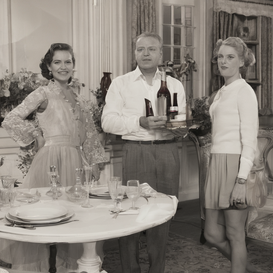} \\[-0.1em]
        & \multicolumn{5}{c}{\scriptsize \textbf{Prompt:} ``A man and two women standing near a table with wine glasses.''}
    \end{tabular}

    \caption{Additional qualitative results on SDXL.}
    \label{fig:additional_qualitative_sdxl}
\end{figure*}

\begin{figure*}[!t]
    \centering
    \setlength{\tabcolsep}{2pt}
    \renewcommand{\arraystretch}{1.0}
    \scriptsize

    \begin{tabular}{c ccccc}
        & (a) Ours & (b) CFG & (c) CFG++ & (d) TV-CFG & (e) LF-CFG \\[0.25em]

        \multirow{4}{*}
        & \includegraphics[width=0.18\linewidth]{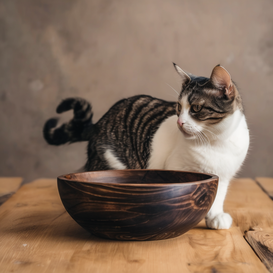}
        & \includegraphics[width=0.18\linewidth]{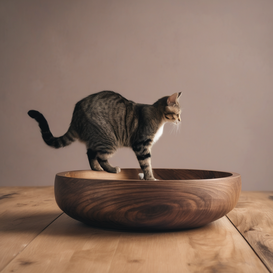}
        & \includegraphics[width=0.18\linewidth]{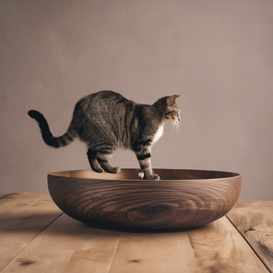}
        & \includegraphics[width=0.18\linewidth]{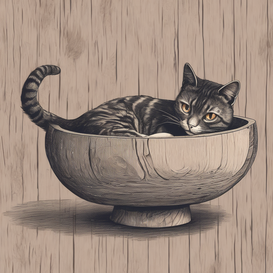}
        & \includegraphics[width=0.18\linewidth]{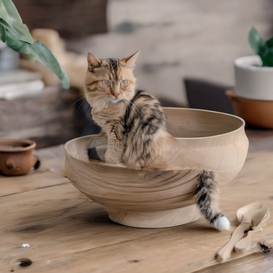} \\[-0.1em]
        & \multicolumn{5}{c}{\scriptsize \textbf{Prompt:} ``A CAT ON A WOODEN SURFACE IS LOOKING AT A WOODEN BOWL''} \\[0.45em]

        & \includegraphics[width=0.18\linewidth]{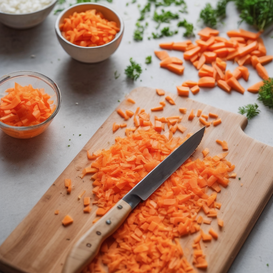}
        & \includegraphics[width=0.18\linewidth]{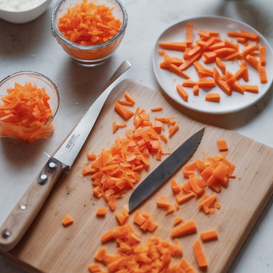}
        & \includegraphics[width=0.18\linewidth]{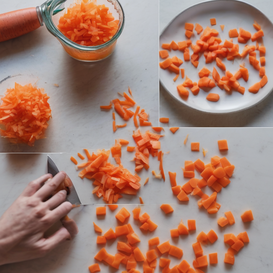}
        & \includegraphics[width=0.18\linewidth]{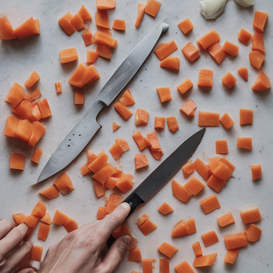}
        & \includegraphics[width=0.18\linewidth]{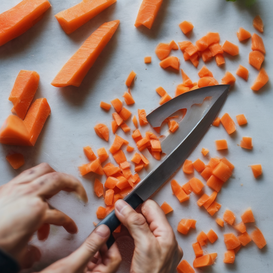} \\[-0.1em]
        & \multicolumn{5}{c}{\scriptsize \textbf{Prompt:} ``A knife and progressively finely chopped up carrots.''} \\[0.8em]

        \multirow{4}{*}
        & \includegraphics[width=0.18\linewidth]{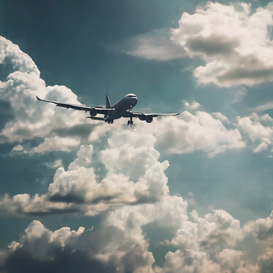}
        & \includegraphics[width=0.18\linewidth]{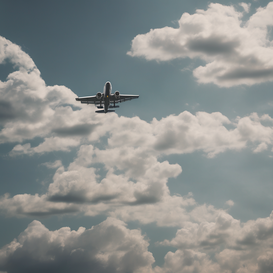}
        & \includegraphics[width=0.18\linewidth]{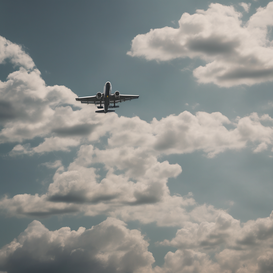}
        & \includegraphics[width=0.18\linewidth]{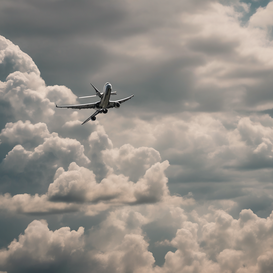}
        & \includegraphics[width=0.18\linewidth]{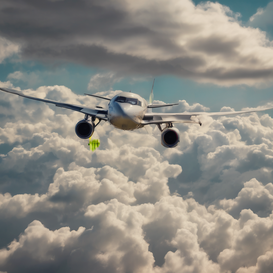} \\[-0.1em]
        & \multicolumn{5}{c}{\scriptsize \textbf{Prompt:} ``An airplane flies in a cloudy sky as a backdrop.''} \\[0.45em]

        & \includegraphics[width=0.18\linewidth]{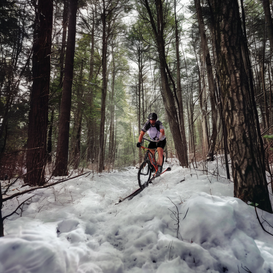}
        & \includegraphics[width=0.18\linewidth]{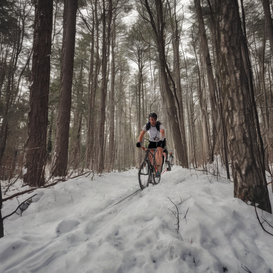}
        & \includegraphics[width=0.18\linewidth]{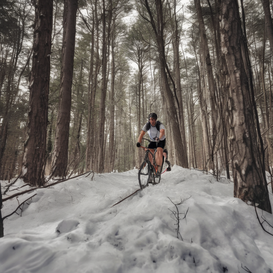}
        & \includegraphics[width=0.18\linewidth]{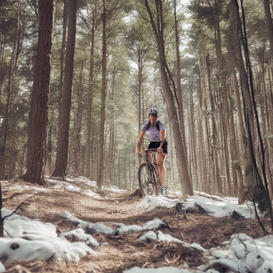}
        & \includegraphics[width=0.18\linewidth]{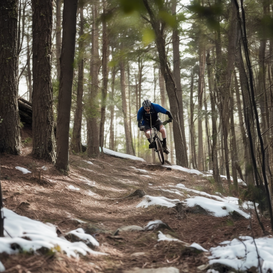} \\[-0.1em]
        & \multicolumn{5}{c}{\scriptsize \textbf{Prompt:} ``Short traing coming down the tacks in the middle of the woods.''} \\[0.8em]

        \multirow{4}{*}
        & \includegraphics[width=0.18\linewidth]{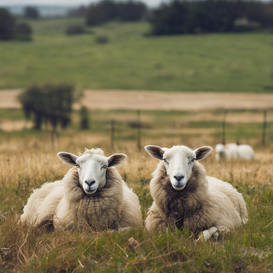}
        & \includegraphics[width=0.18\linewidth]{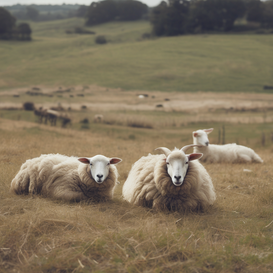}
        & \includegraphics[width=0.18\linewidth]{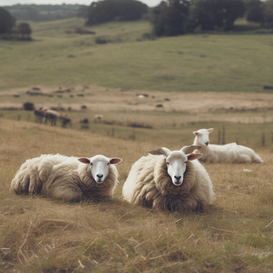}
        & \includegraphics[width=0.18\linewidth]{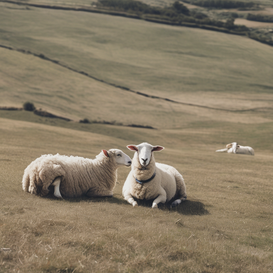}
        & \includegraphics[width=0.18\linewidth]{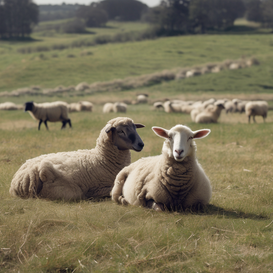} \\[-0.1em]
        & \multicolumn{5}{c}{\scriptsize \textbf{Prompt:} ``Two sheep laying in th middle of an open field.''} \\[0.45em]

        & \includegraphics[width=0.18\linewidth]{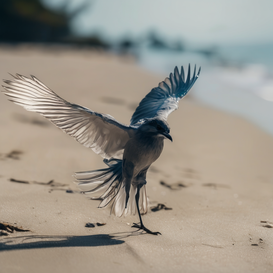}
        & \includegraphics[width=0.18\linewidth]{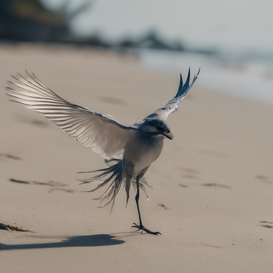}
        & \includegraphics[width=0.18\linewidth]{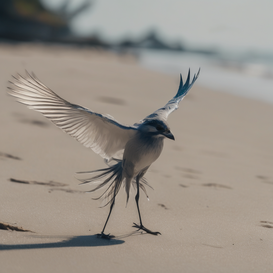}
        & \includegraphics[width=0.18\linewidth]{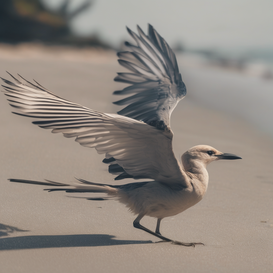}
        & \includegraphics[width=0.18\linewidth]{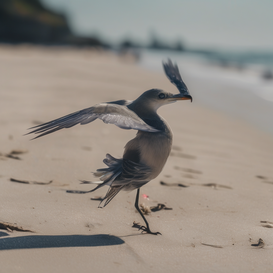} \\[-0.1em]
        & \multicolumn{5}{c}{\scriptsize \textbf{Prompt:} ``A bird that has its wings stretched out on a beach.''}
    \end{tabular}

    \caption{Additional qualitative results on SDXL.}
    \label{fig:additional_qualitative_sdxl_2}
\end{figure*}


\begin{figure*}[!t]
    \centering
    \setlength{\tabcolsep}{2pt}
    \renewcommand{\arraystretch}{1.0}
    \scriptsize

    \begin{tabular}{c ccccc}
        & (a) Ours & (b) CFG & (c) CFG++ & (d) TV-CFG & (e) LF-CFG \\[0.25em]

        \multirow{4}{*}
        & \includegraphics[width=0.18\linewidth]{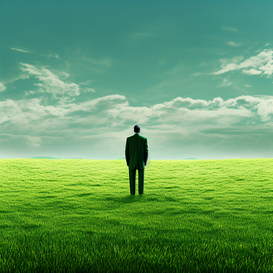}
        & \includegraphics[width=0.18\linewidth]{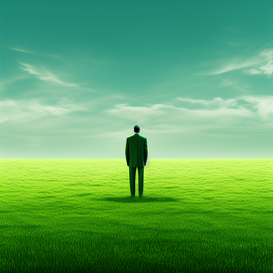}
        & \includegraphics[width=0.18\linewidth]{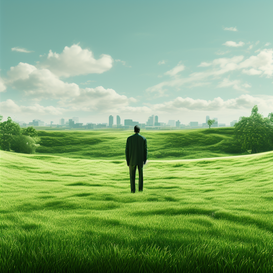}
        & \includegraphics[width=0.18\linewidth]{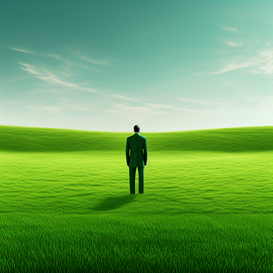}
        & \includegraphics[width=0.18\linewidth]{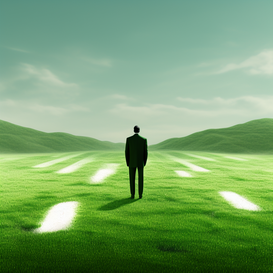} \\[-0.1em]
        & \multicolumn{5}{c}{\scriptsize \textbf{Prompt:} ``a person standing on a large green field.''} \\[0.45em]

        & \includegraphics[width=0.18\linewidth]{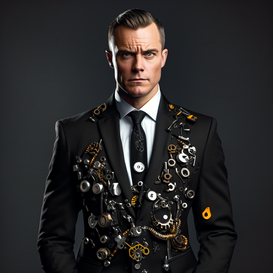}
        & \includegraphics[width=0.18\linewidth]{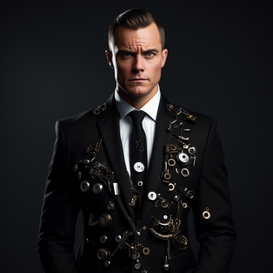}
        & \includegraphics[width=0.18\linewidth]{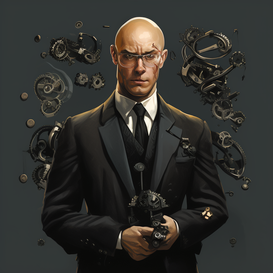}
        & \includegraphics[width=0.18\linewidth]{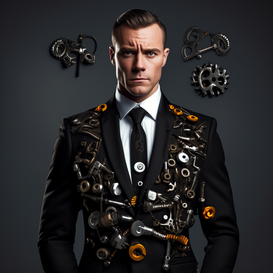}
        & \includegraphics[width=0.18\linewidth]{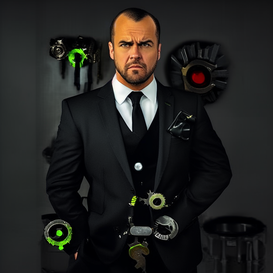} \\[-0.1em]
        & \multicolumn{5}{c}{\scriptsize \textbf{Prompt:} ``A black jacket and a tie with nuts and bolts on it.''} \\[0.8em]

        \multirow{4}{*}
        & \includegraphics[width=0.18\linewidth]{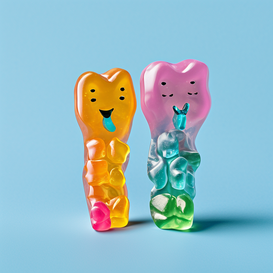}
        & \includegraphics[width=0.18\linewidth]{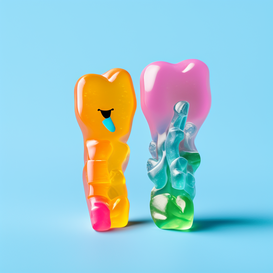}
        & \includegraphics[width=0.18\linewidth]{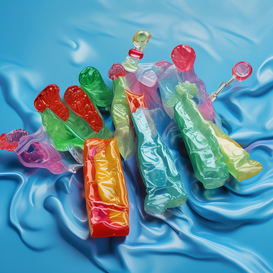}
        & \includegraphics[width=0.18\linewidth]{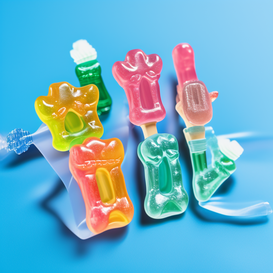}
        & \includegraphics[width=0.18\linewidth]{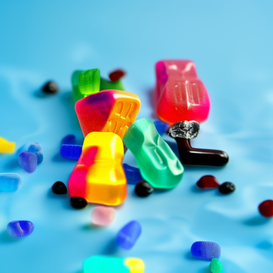} \\[-0.1em]
        & \multicolumn{5}{c}{\scriptsize \textbf{Prompt:} ``Some gummy teeth and gummy toothbrushes on a wrapper.''} \\[0.45em]

        & \includegraphics[width=0.18\linewidth]{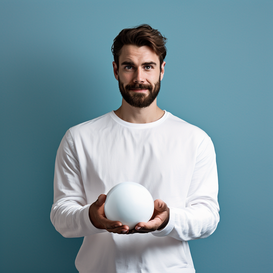}
        & \includegraphics[width=0.18\linewidth]{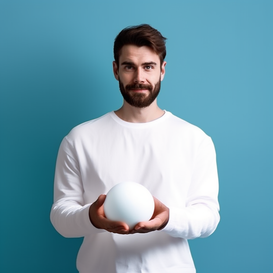}
        & \includegraphics[width=0.18\linewidth]{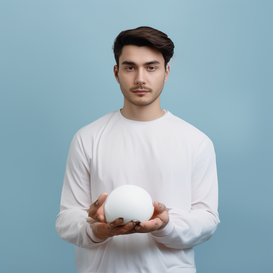}
        & \includegraphics[width=0.18\linewidth]{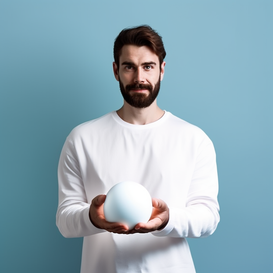}
        & \includegraphics[width=0.18\linewidth]{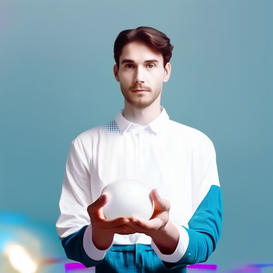} \\[-0.1em]
        & \multicolumn{5}{c}{\scriptsize \textbf{Prompt:} ``A man holding a white object in his hand.''} \\[0.8em]

        \multirow{4}{*}
        & \includegraphics[width=0.18\linewidth]{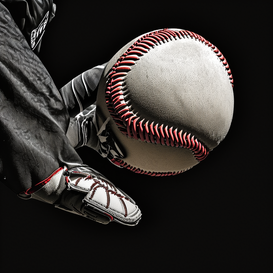}
        & \includegraphics[width=0.18\linewidth]{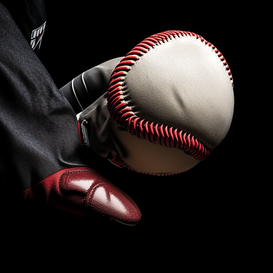}
        & \includegraphics[width=0.18\linewidth]{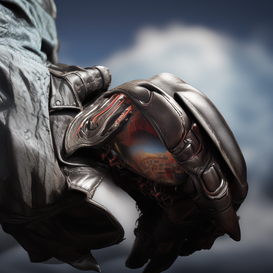}
        & \includegraphics[width=0.18\linewidth]{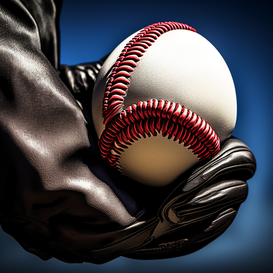}
        & \includegraphics[width=0.18\linewidth]{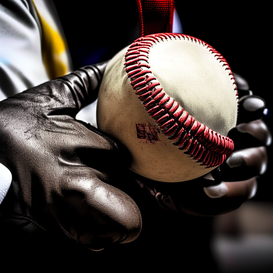} \\[-0.1em]
        & \multicolumn{5}{c}{\scriptsize \textbf{Prompt:} ``The baseball was caught in the umpires glove.''} \\[0.45em]

        & \includegraphics[width=0.18\linewidth]{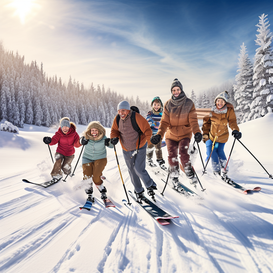}
        & \includegraphics[width=0.18\linewidth]{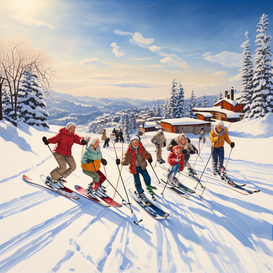}
        & \includegraphics[width=0.18\linewidth]{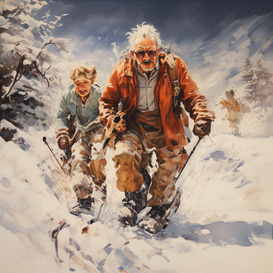}
        & \includegraphics[width=0.18\linewidth]{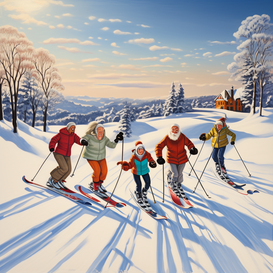}
        & \includegraphics[width=0.18\linewidth]{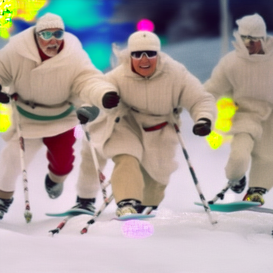} \\[-0.1em]
        & \multicolumn{5}{c}{\scriptsize \textbf{Prompt:} ``A group of young and old are skiing on the snow.''}
    \end{tabular}

    \caption{Additional qualitative results on PixArt-$\alpha$.}
    \label{fig:additional_qualitative_pixart_2}
\end{figure*}

\begin{figure*}[!t]
    \centering
    \setlength{\tabcolsep}{2pt}
    \renewcommand{\arraystretch}{1.0}
    \scriptsize

    \begin{tabular}{c ccccc}
        & (a) Ours & (b) CFG & (c) CFG++ & (d) TV-CFG & (e) LF-CFG \\[0.25em]

        \multirow{4}{*}
        & \includegraphics[width=0.18\linewidth]{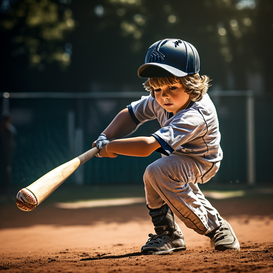}
        & \includegraphics[width=0.18\linewidth]{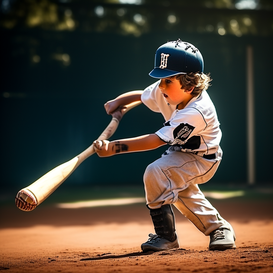}
        & \includegraphics[width=0.18\linewidth]{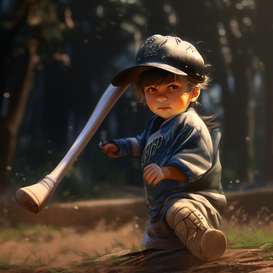}
        & \includegraphics[width=0.18\linewidth]{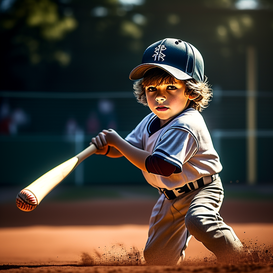}
        & \includegraphics[width=0.18\linewidth]{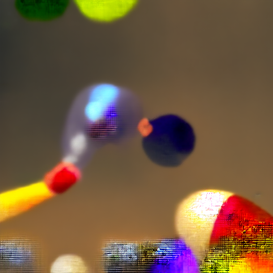} \\[-0.1em]
        & \multicolumn{5}{c}{\scriptsize \textbf{Prompt:} ``a boy swinging his baseball bat at a baseball''} \\[0.45em]

        & \includegraphics[width=0.18\linewidth]{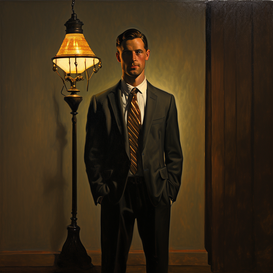}
        & \includegraphics[width=0.18\linewidth]{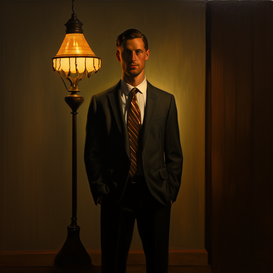}
        & \includegraphics[width=0.18\linewidth]{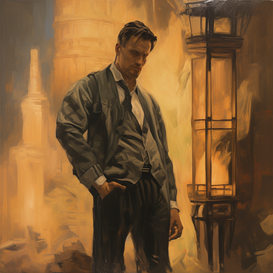}
        & \includegraphics[width=0.18\linewidth]{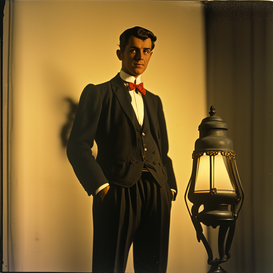}
        & \includegraphics[width=0.18\linewidth]{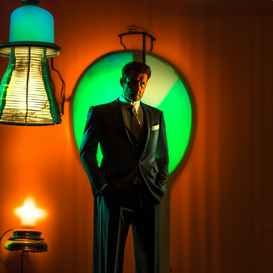} \\[-0.1em]
        & \multicolumn{5}{c}{\scriptsize \textbf{Prompt:} ``A man wearing only a tie standing next to a lamp.''} \\[0.8em]

        \multirow{4}{*}
        & \includegraphics[width=0.18\linewidth]{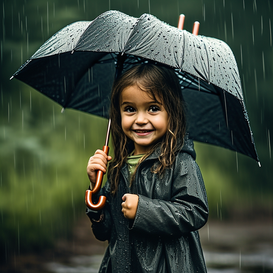}
        & \includegraphics[width=0.18\linewidth]{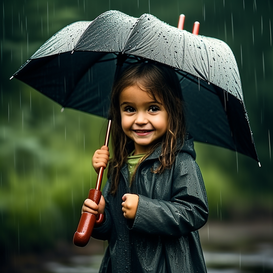}
        & \includegraphics[width=0.18\linewidth]{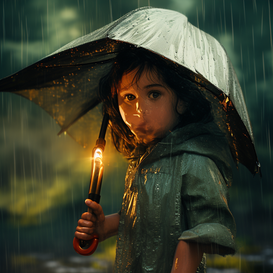}
        & \includegraphics[width=0.18\linewidth]{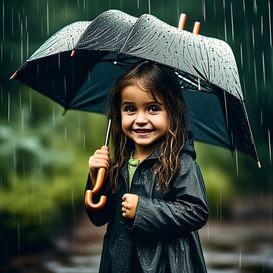}
        & \includegraphics[width=0.18\linewidth]{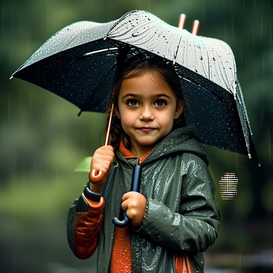} \\[-0.1em]
        & \multicolumn{5}{c}{\scriptsize \textbf{Prompt:} ``A little girl is holding an umbrella on a wet day.''} \\[0.45em]

        & \includegraphics[width=0.18\linewidth]{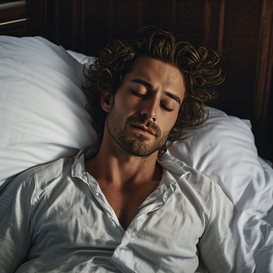}
        & \includegraphics[width=0.18\linewidth]{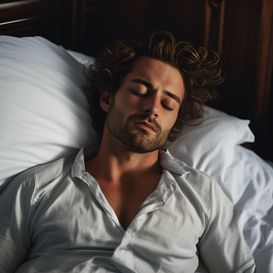}
        & \includegraphics[width=0.18\linewidth]{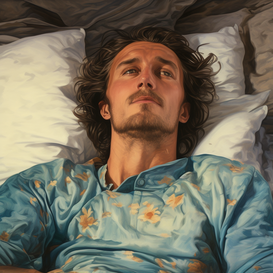}
        & \includegraphics[width=0.18\linewidth]{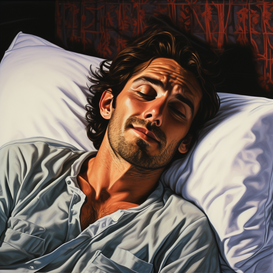}
        & \includegraphics[width=0.18\linewidth]{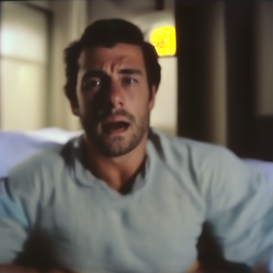} \\[-0.1em]
        & \multicolumn{5}{c}{\scriptsize \textbf{Prompt:} ``A man with his shirt open lying in bed.''} \\[0.8em]

        \multirow{4}{*}
        & \includegraphics[width=0.18\linewidth]{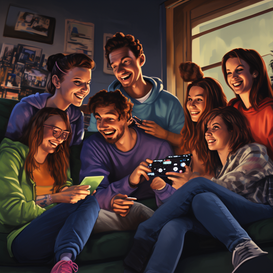}
        & \includegraphics[width=0.18\linewidth]{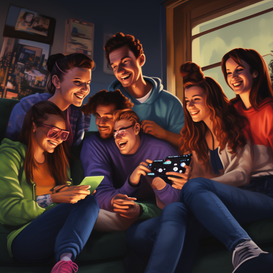}
        & \includegraphics[width=0.18\linewidth]{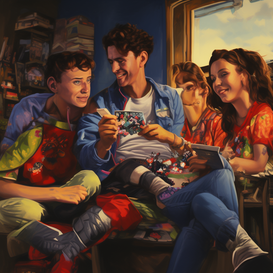}
        & \includegraphics[width=0.18\linewidth]{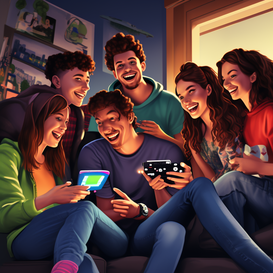}
        & \includegraphics[width=0.18\linewidth]{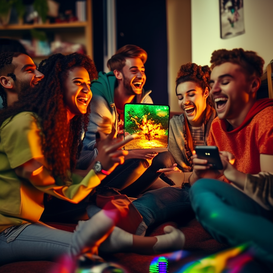} \\[-0.1em]
        & \multicolumn{5}{c}{\scriptsize \textbf{Prompt:} ``A group of young people sitting on a couch next to a guy playing a Nintendo Wii.''} \\[0.45em]

        & \includegraphics[width=0.18\linewidth]{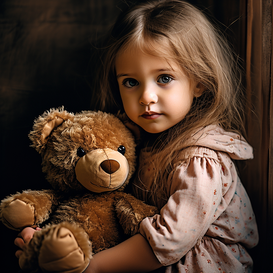}
        & \includegraphics[width=0.18\linewidth]{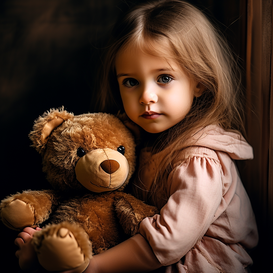}
        & \includegraphics[width=0.18\linewidth]{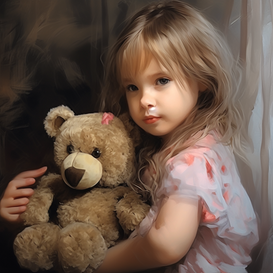}
        & \includegraphics[width=0.18\linewidth]{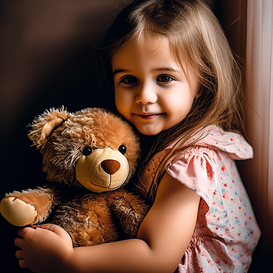}
        & \includegraphics[width=0.18\linewidth]{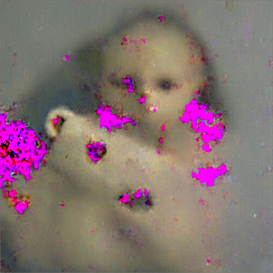} \\[-0.1em]
        & \multicolumn{5}{c}{\scriptsize \textbf{Prompt:} ``A little girl is holding a teddy bear.''}
    \end{tabular}

    \caption{Additional qualitative results on PixArt-$\alpha$.}
    \label{fig:additional_qualitative_pixart}
\end{figure*}


\begin{figure*}[!t]
    \centering
    \setlength{\tabcolsep}{2pt}
    \renewcommand{\arraystretch}{1.0}
    \scriptsize

    \begin{tabular}{c ccccc}
        & (a) Ours & (b) CFG & (c) CFG++ & (d) TV-CFG & (e) LF-CFG \\[0.25em]

        \multirow{4}{*}
        & \includegraphics[width=0.18\linewidth]{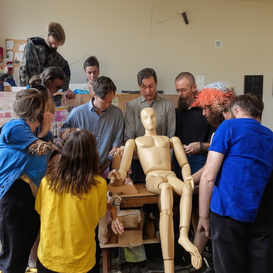}
        & \includegraphics[width=0.18\linewidth]{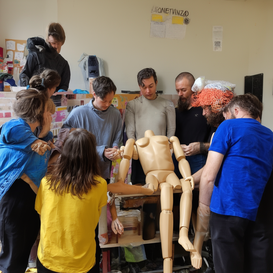}
        & \includegraphics[width=0.18\linewidth]{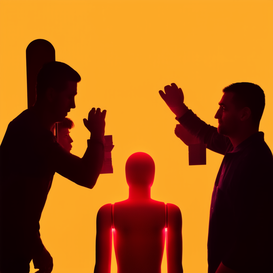}
        & \includegraphics[width=0.18\linewidth]{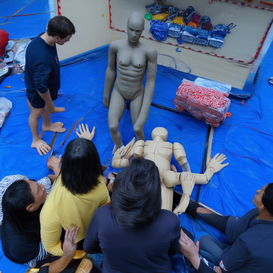}
        & \includegraphics[width=0.18\linewidth]{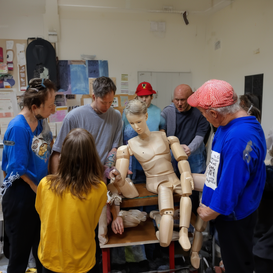} \\[-0.1em]
        & \multicolumn{5}{c}{\scriptsize \textbf{Prompt:} ``Some people gathered together around a dummy on a board.''} \\[0.45em]

        & \includegraphics[width=0.18\linewidth]{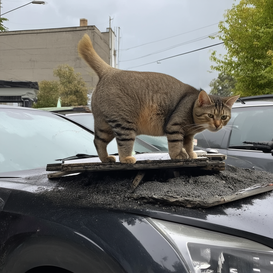}
        & \includegraphics[width=0.18\linewidth]{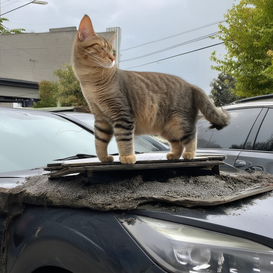}
        & \includegraphics[width=0.18\linewidth]{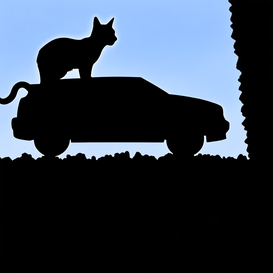}
        & \includegraphics[width=0.18\linewidth]{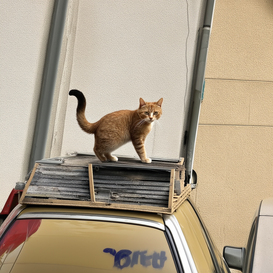}
        & \includegraphics[width=0.18\linewidth]{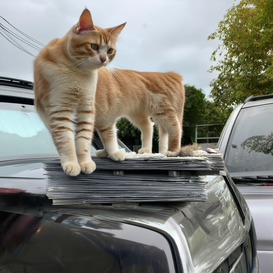} \\[-0.1em]
        & \multicolumn{5}{c}{\scriptsize \textbf{Prompt:} ``a cat standing on top of a cars ash board''} \\[0.8em]

        \multirow{4}{*}
        & \includegraphics[width=0.18\linewidth]{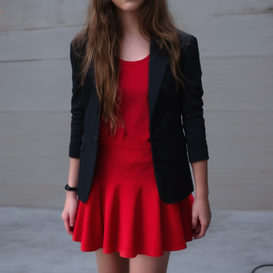}
        & \includegraphics[width=0.18\linewidth]{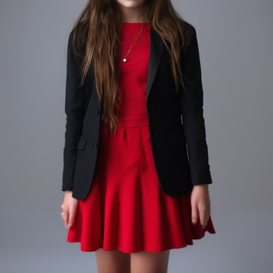}
        & \includegraphics[width=0.18\linewidth]{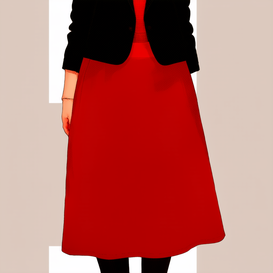}
        & \includegraphics[width=0.18\linewidth]{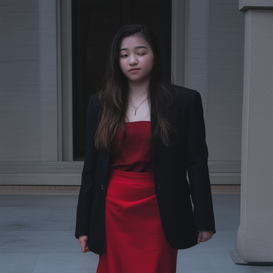}
        & \includegraphics[width=0.18\linewidth]{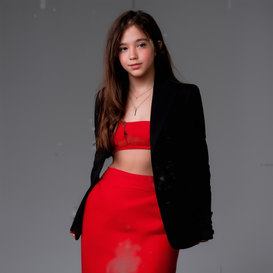} \\[-0.1em]
        & \multicolumn{5}{c}{\scriptsize \textbf{Prompt:} ``A girl in a red skirt and red tye with a blazer.''} \\[0.45em]

        & \includegraphics[width=0.18\linewidth]{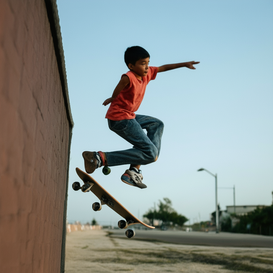}
        & \includegraphics[width=0.18\linewidth]{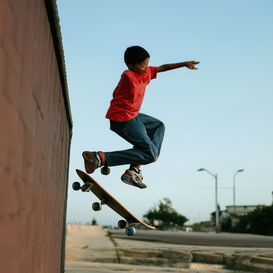}
        & \includegraphics[width=0.18\linewidth]{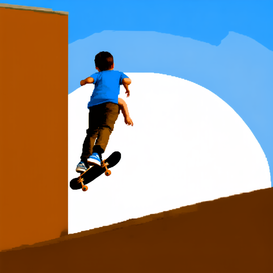}
        & \includegraphics[width=0.18\linewidth]{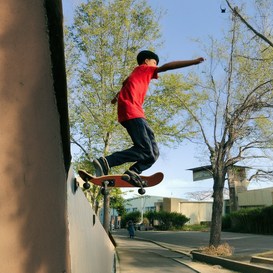}
        & \includegraphics[width=0.18\linewidth]{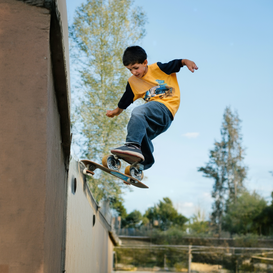} \\[-0.1em]
        & \multicolumn{5}{c}{\scriptsize \textbf{Prompt:} ``A boy doing a jump off a wall with a skateboard.''} \\[0.8em]

        \multirow{4}{*}
        & \includegraphics[width=0.18\linewidth]{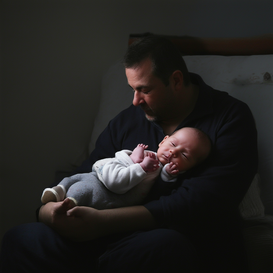}
        & \includegraphics[width=0.18\linewidth]{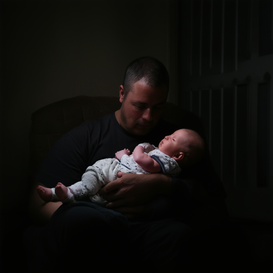}
        & \includegraphics[width=0.18\linewidth]{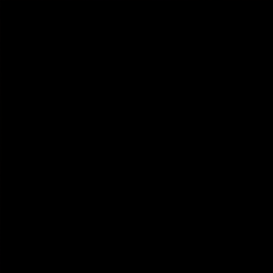}
        & \includegraphics[width=0.18\linewidth]{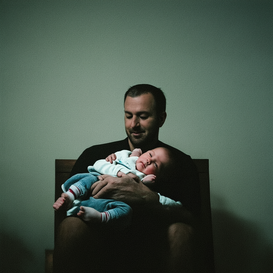}
        & \includegraphics[width=0.18\linewidth]{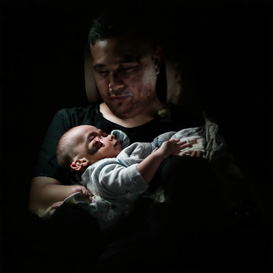} \\[-0.1em]
        & \multicolumn{5}{c}{\scriptsize \textbf{Prompt:} ``A man sitting in the dark holding a baby.''} \\[0.45em]

        & \includegraphics[width=0.18\linewidth]{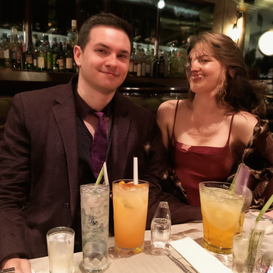}
        & \includegraphics[width=0.18\linewidth]{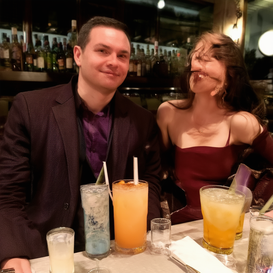}
        & \includegraphics[width=0.18\linewidth]{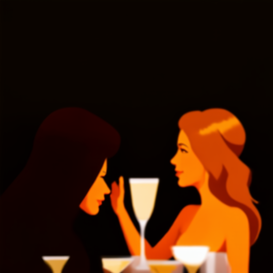}
        & \includegraphics[width=0.18\linewidth]{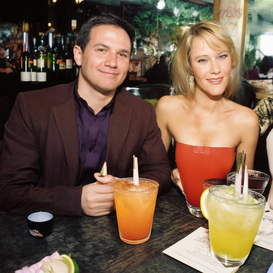}
        & \includegraphics[width=0.18\linewidth]{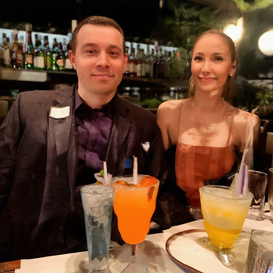} \\[-0.1em]
        & \multicolumn{5}{c}{\scriptsize \textbf{Prompt:} ``A man and woman at a table with several drinks.''}
    \end{tabular}

    \caption{Additional qualitative results on SD3.5.
    }
    \label{fig:additional_qualitative_sd35}
\end{figure*}

\begin{figure*}[!t]
    \centering
    \setlength{\tabcolsep}{2pt}
    \renewcommand{\arraystretch}{1.0}
    \scriptsize

    \begin{tabular}{c ccccc}
        & (a) Ours & (b) CFG & (c) CFG++ & (d) TV-CFG & (e) LF-CFG \\[0.25em]

        \multirow{4}{*}
        & \includegraphics[width=0.18\linewidth]{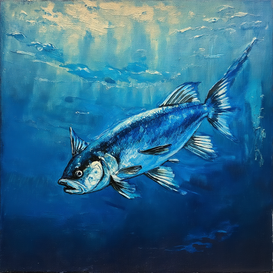}
        & \includegraphics[width=0.18\linewidth]{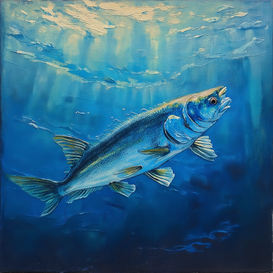}
        & \includegraphics[width=0.18\linewidth]{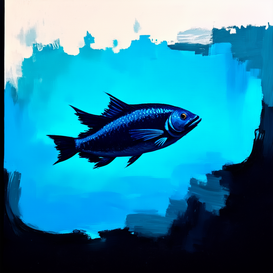}
        & \includegraphics[width=0.18\linewidth]{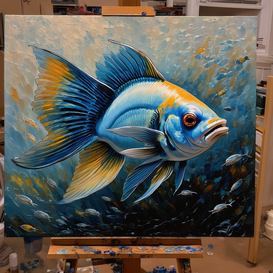}
        & \includegraphics[width=0.18\linewidth]{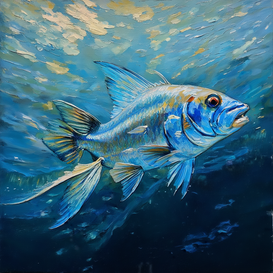} \\[-0.1em]
        & \multicolumn{5}{c}{\scriptsize \textbf{Prompt:} ``A painting of a blue fish flying through the canvas''} \\[0.45em]

        & \includegraphics[width=0.18\linewidth]{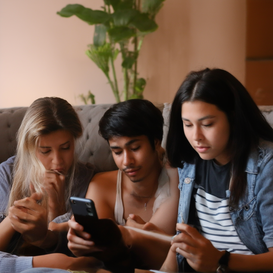}
        & \includegraphics[width=0.18\linewidth]{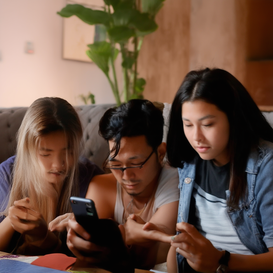}
        & \includegraphics[width=0.18\linewidth]{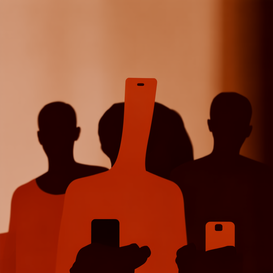}
        & \includegraphics[width=0.18\linewidth]{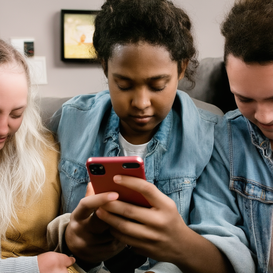}
        & \includegraphics[width=0.18\linewidth]{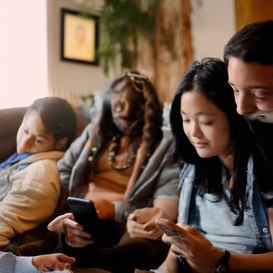} \\[-0.1em]
        & \multicolumn{5}{c}{\scriptsize \textbf{Prompt:} ``three friends watching something attentively on a phone''} \\[0.8em]

        \multirow{4}{*}
        & \includegraphics[width=0.18\linewidth]{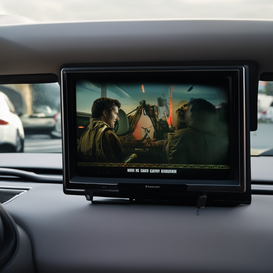}
        & \includegraphics[width=0.18\linewidth]{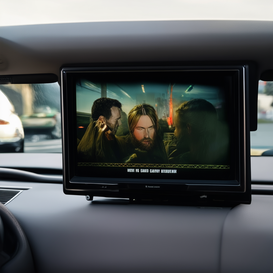}
        & \includegraphics[width=0.18\linewidth]{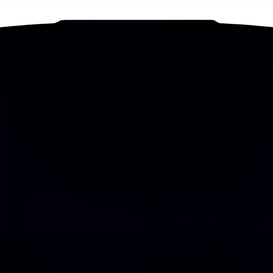}
        & \includegraphics[width=0.18\linewidth]{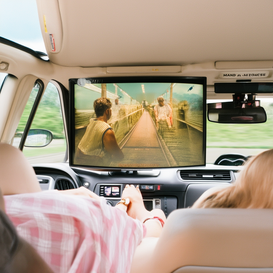}
        & \includegraphics[width=0.18\linewidth]{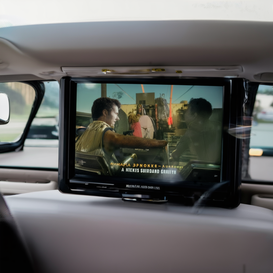} \\[-0.1em]
        & \multicolumn{5}{c}{\scriptsize \textbf{Prompt:} ``A tv mounted inside of a vehicle displaying a movie.''} \\[0.45em]

        & \includegraphics[width=0.18\linewidth]{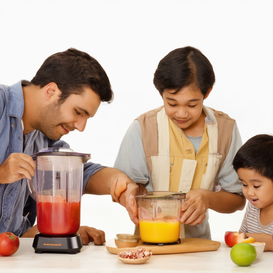}
        & \includegraphics[width=0.18\linewidth]{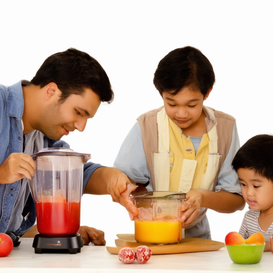}
        & \includegraphics[width=0.18\linewidth]{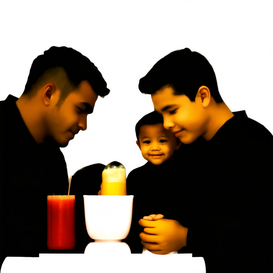}
        & \includegraphics[width=0.18\linewidth]{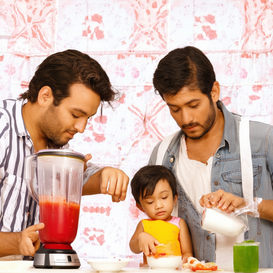}
        & \includegraphics[width=0.18\linewidth]{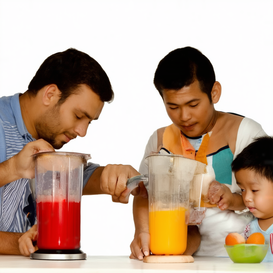} \\[-0.1em]
        & \multicolumn{5}{c}{\scriptsize \textbf{Prompt:} ``Two men and two children under a flap, and one of the men is mixing juice in a blender.''} \\[0.8em]

        \multirow{4}{*}
        & \includegraphics[width=0.18\linewidth]{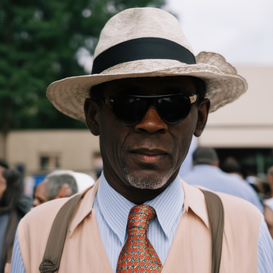}
        & \includegraphics[width=0.18\linewidth]{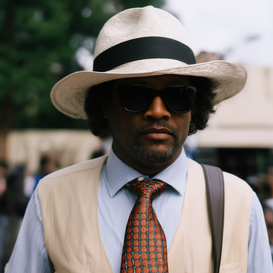}
        & \includegraphics[width=0.18\linewidth]{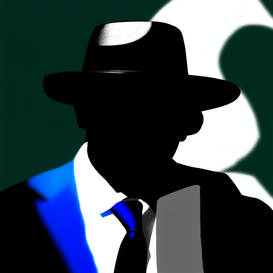}
        & \includegraphics[width=0.18\linewidth]{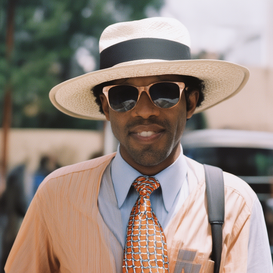}
        & \includegraphics[width=0.18\linewidth]{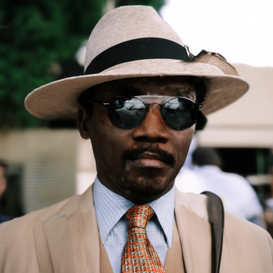} \\[-0.1em]
        & \multicolumn{5}{c}{\scriptsize \textbf{Prompt:} ``A man with a large tie, a hat, and sunglasses.''} \\[0.45em]

        & \includegraphics[width=0.18\linewidth]{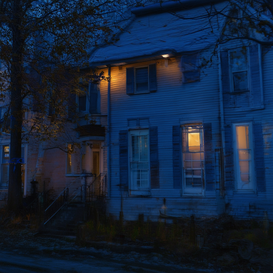}
        & \includegraphics[width=0.18\linewidth]{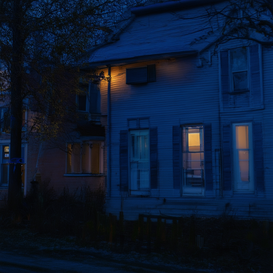}
        & \includegraphics[width=0.18\linewidth]{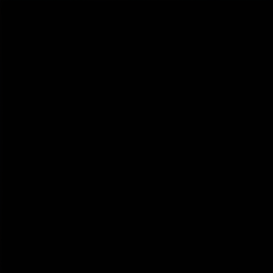}
        & \includegraphics[width=0.18\linewidth]{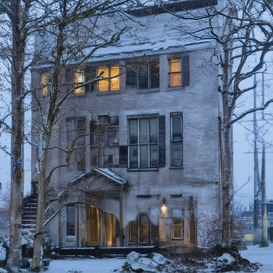}
        & \includegraphics[width=0.18\linewidth]{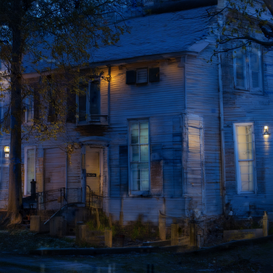} \\[-0.1em]
        & \multicolumn{5}{c}{\scriptsize \textbf{Prompt:} ``At almost 6:30 the place is almost dark''}
    \end{tabular}

    \caption{Additional qualitative results on SD3.5.
    }
    \label{fig:additional_qualitative_sd35_2}
\end{figure*}


\begin{figure*}[!t]
    \centering
    \setlength{\tabcolsep}{2pt}
    \renewcommand{\arraystretch}{1.0}
    \scriptsize

    \begin{tabular}{c cc}
        & (a) Ours & (b) CFG \\[0.25em]

        \multirow{6}{*}
        & \includegraphics[width=0.42\linewidth]{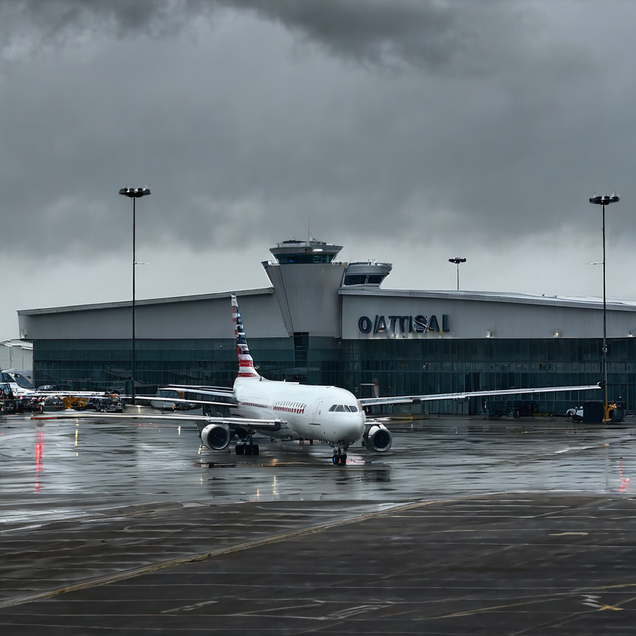}
        & \includegraphics[width=0.42\linewidth]{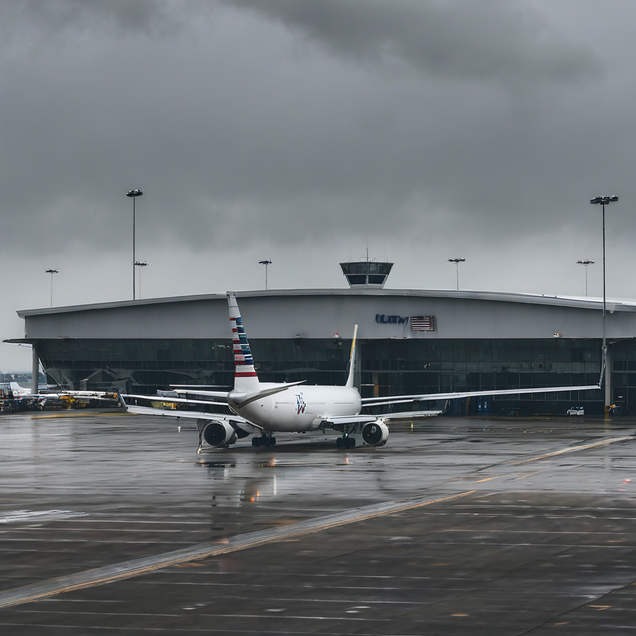} \\[-0.1em]
        & \multicolumn{2}{c}{\scriptsize \textbf{Prompt:} ``An airplane parked at a airport terminal under gray skies.''} \\[0.45em]

        & \includegraphics[width=0.42\linewidth]{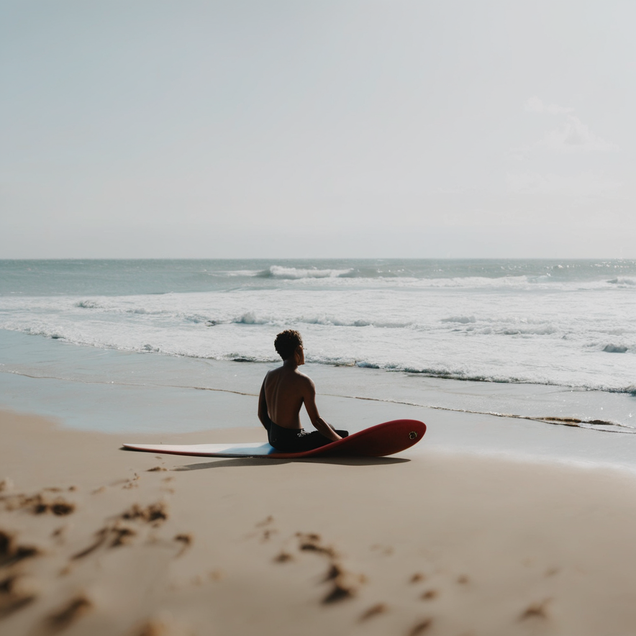}
        & \includegraphics[width=0.42\linewidth]{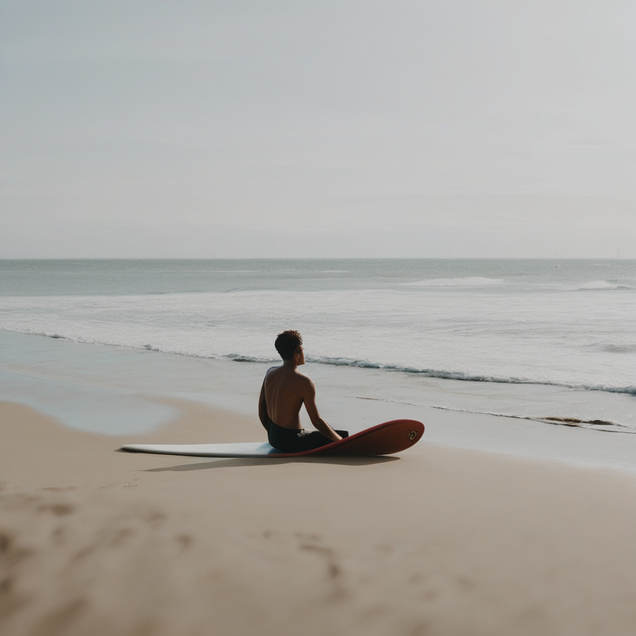} \\[-0.1em]
        & \multicolumn{2}{c}{\scriptsize \textbf{Prompt:} ``A person sitting on a surfboard on a beach.''} \\[0.45em]

        & \includegraphics[width=0.42\linewidth]{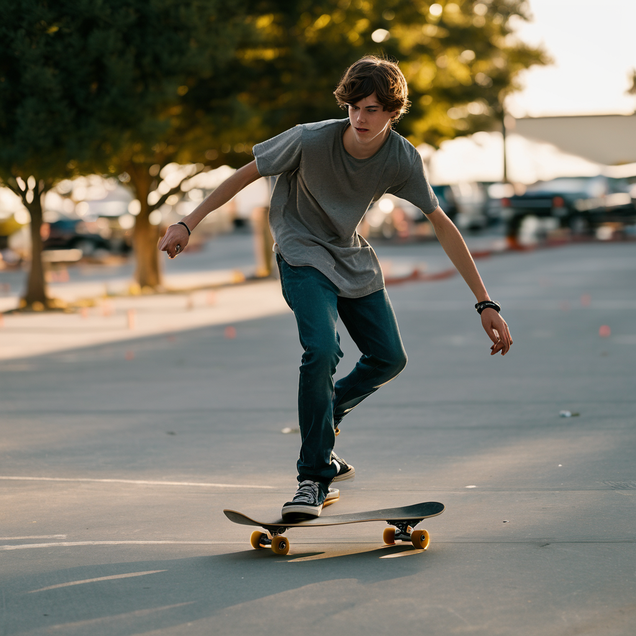}
        & \includegraphics[width=0.42\linewidth]{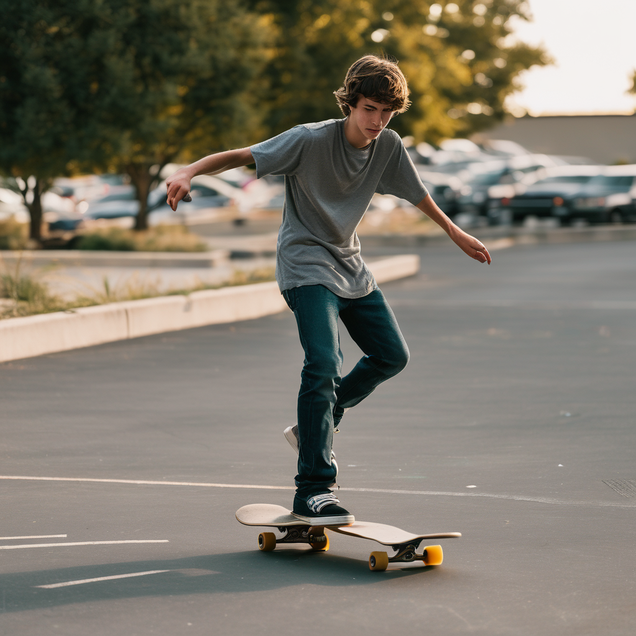} \\[-0.1em]
        & \multicolumn{2}{c}{\scriptsize \textbf{Prompt:} ``A young man skate boarding on a curbed edge in a parking lot.''}
    \end{tabular}

    \caption{Additional qualitative results under high guidance on SDXL.}
    \label{fig:additional_high_guidance_sdxl}
\end{figure*}

\begin{figure*}[!t]
    \centering
    \setlength{\tabcolsep}{2pt}
    \renewcommand{\arraystretch}{1.0}
    \scriptsize

    \begin{tabular}{c cc}
        & (a) Ours & (b) CFG \\[0.25em]

        \multirow{6}{*}
        & \includegraphics[width=0.42\linewidth]{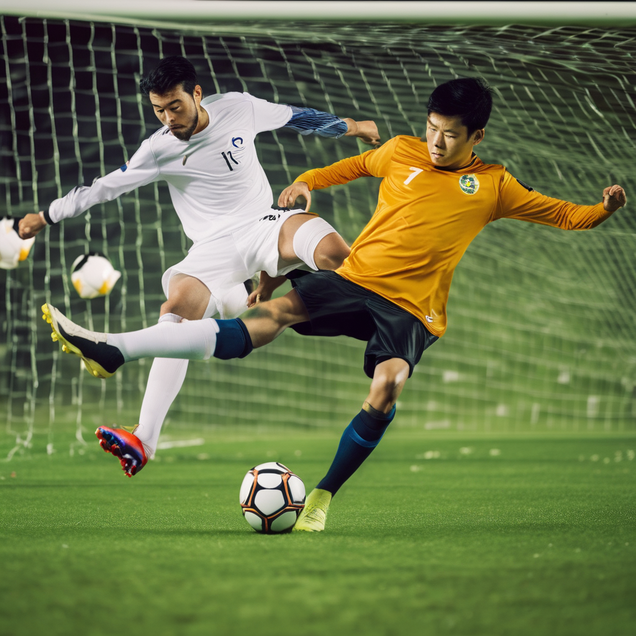}
        & \includegraphics[width=0.42\linewidth]{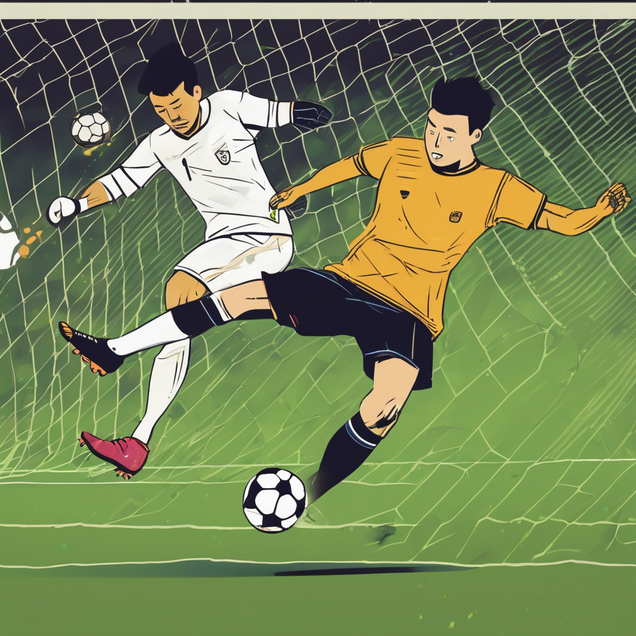} \\[-0.1em]
        & \multicolumn{2}{c}{\scriptsize \textbf{Prompt:} ``a soccer player kicks a ball directly at the goalkeeper.''} \\[0.45em]

        & \includegraphics[width=0.42\linewidth]{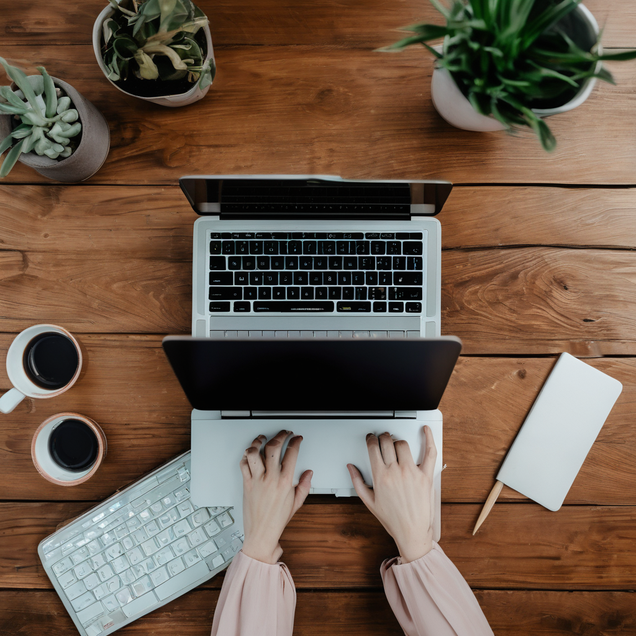}
        & \includegraphics[width=0.42\linewidth]{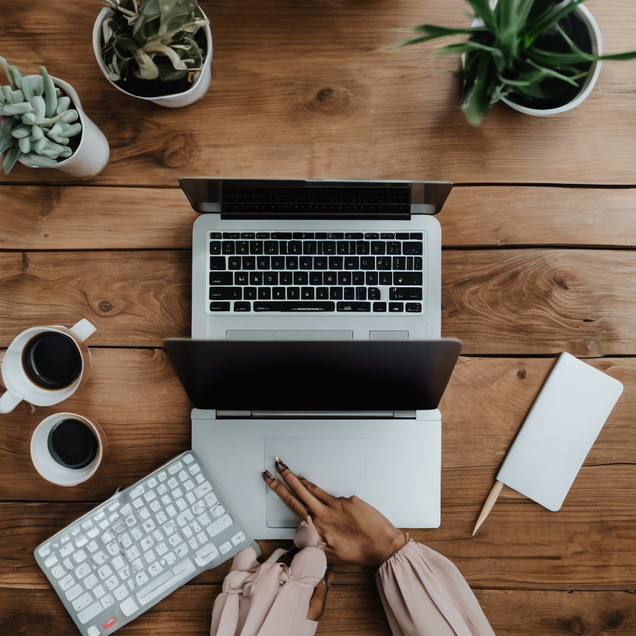} \\[-0.1em]
        & \multicolumn{2}{c}{\scriptsize \textbf{Prompt:} ``A woman is typing on a laptop on a wooden table.''} \\[0.45em]

        & \includegraphics[width=0.42\linewidth]{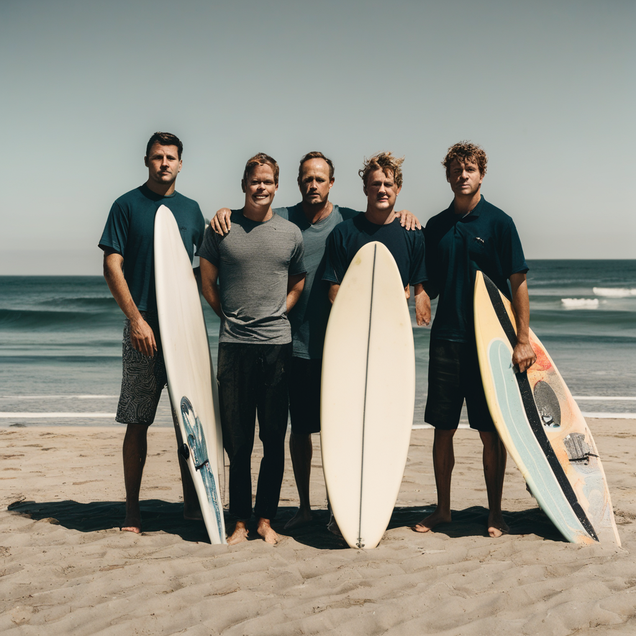}
        & \includegraphics[width=0.42\linewidth]{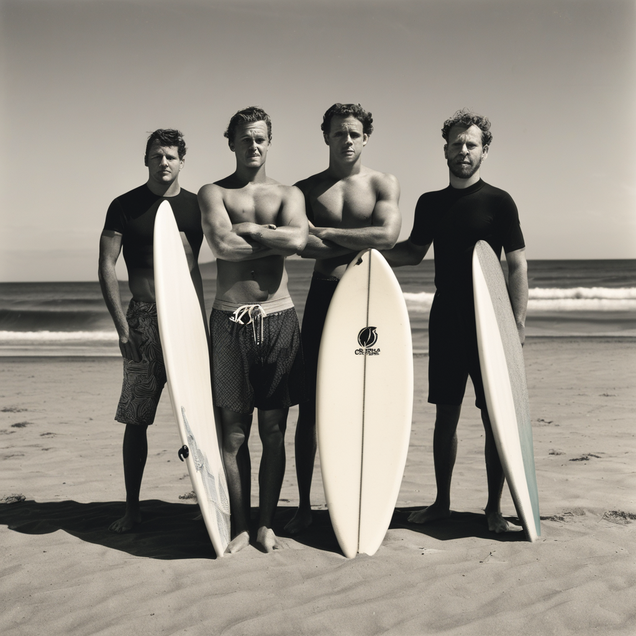} \\[-0.1em]
        & \multicolumn{2}{c}{\scriptsize \textbf{Prompt:} ``Four men with surf boards standing in the front of the beach side.''}
    \end{tabular}

    \caption{Additional qualitative results under high guidance on SDXL.}
    \label{fig:additional_high_guidance_sdxl_2}
\end{figure*}


\begin{figure*}[!t]
    \centering
    \setlength{\tabcolsep}{2pt}
    \renewcommand{\arraystretch}{1.0}
    \scriptsize

    \begin{tabular}{c cc}
        & (a) Ours & (b) CFG \\[0.25em]

        \multirow{6}{*}
        & \includegraphics[width=0.42\linewidth]{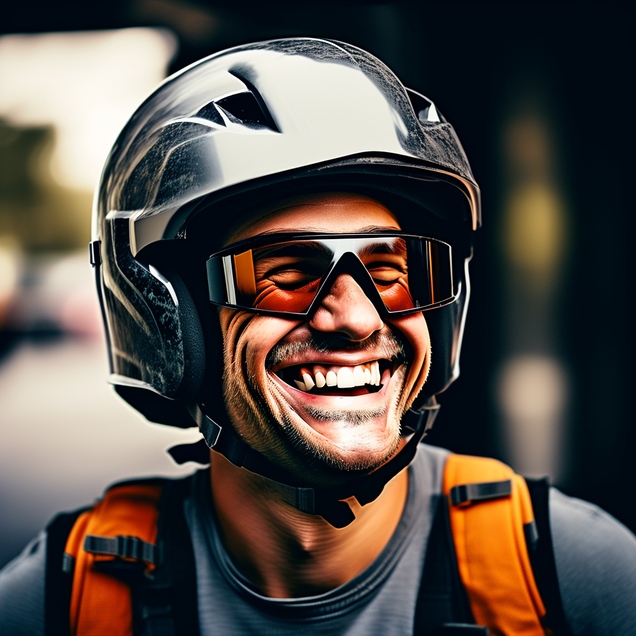}
        & \includegraphics[width=0.42\linewidth]{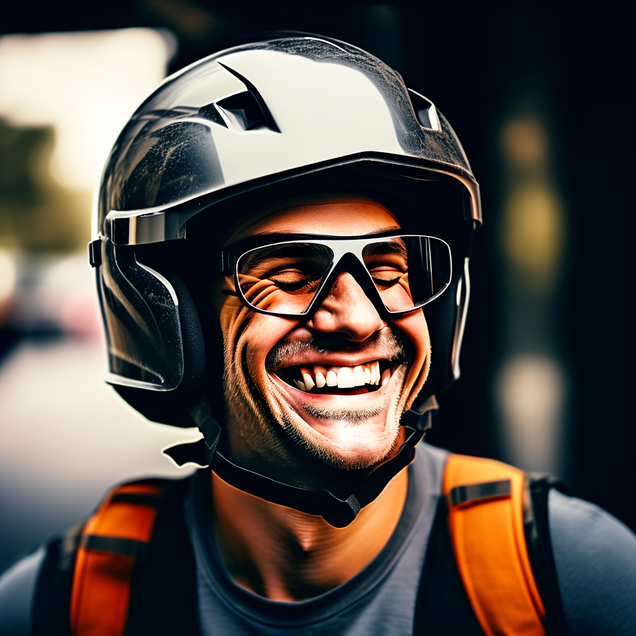} \\[-0.1em]
        & \multicolumn{2}{c}{\scriptsize \textbf{Prompt:} ``A man smiles wearing a helmet and goggles.''} \\[0.45em]

        & \includegraphics[width=0.42\linewidth]{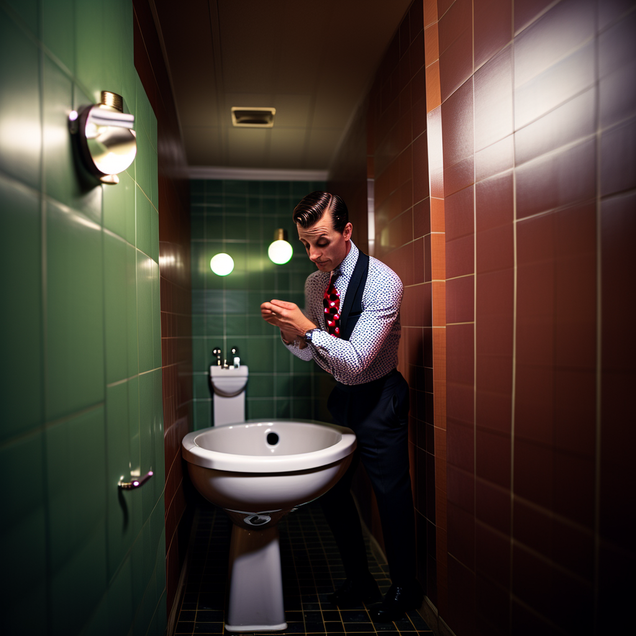}
        & \includegraphics[width=0.42\linewidth]{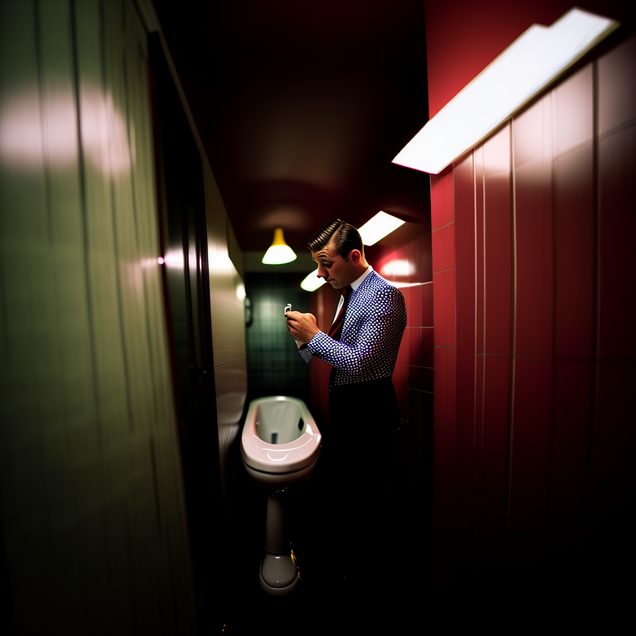} \\[-0.1em]
        & \multicolumn{2}{c}{\scriptsize \textbf{Prompt:} ``A man putting on a polka dot tie in the bathroom.''} \\[0.45em]

        & \includegraphics[width=0.42\linewidth]{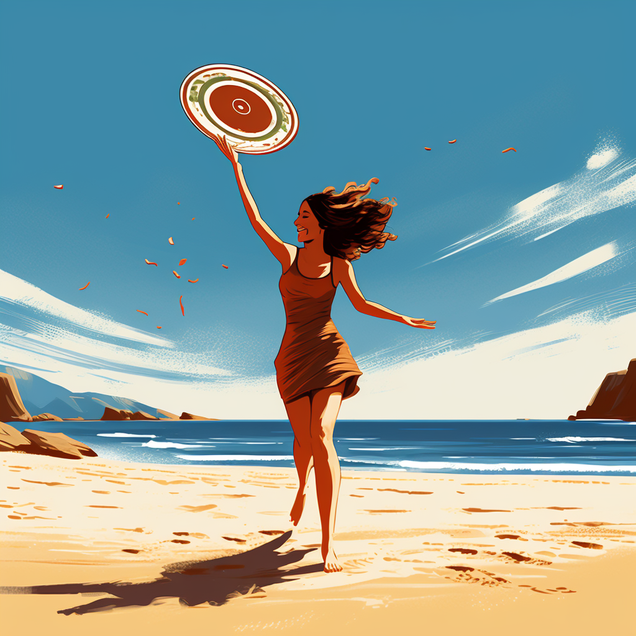}
        & \includegraphics[width=0.42\linewidth]{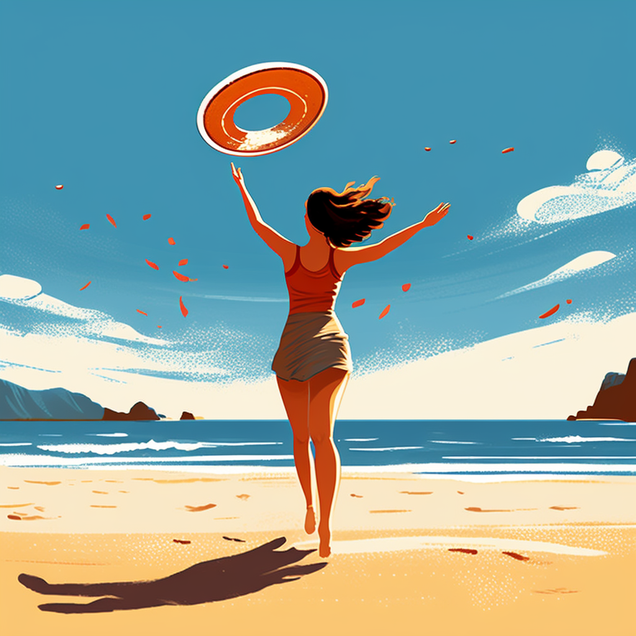} \\[-0.1em]
        & \multicolumn{2}{c}{\scriptsize \textbf{Prompt:} ``A woman is throwing a frisbee on the beach.''}
    \end{tabular}

    \caption{Additional qualitative results under high guidance on PixArt-$\alpha$.}
    \label{fig:additional_high_guidance_pixart}
\end{figure*}

\begin{figure*}[!t]
    \centering
    \setlength{\tabcolsep}{2pt}
    \renewcommand{\arraystretch}{1.0}
    \scriptsize

    \begin{tabular}{c cc}
        & (a) Ours & (b) CFG \\[0.25em]

        \multirow{6}{*}
        & \includegraphics[width=0.42\linewidth]{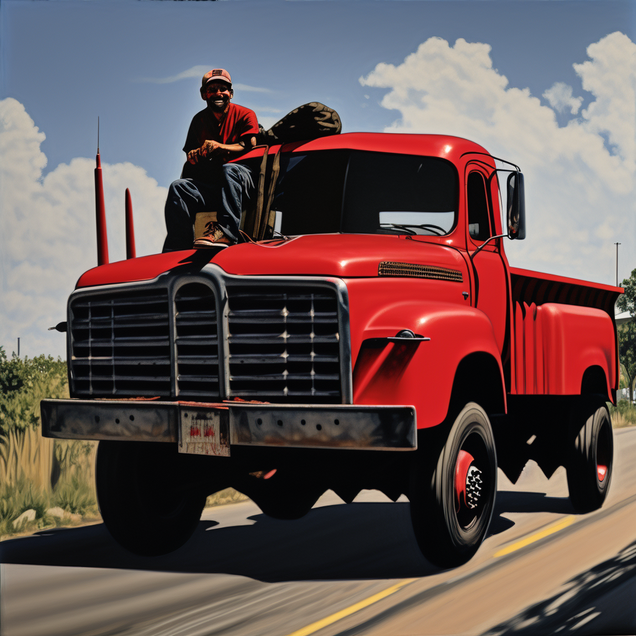}
        & \includegraphics[width=0.42\linewidth]{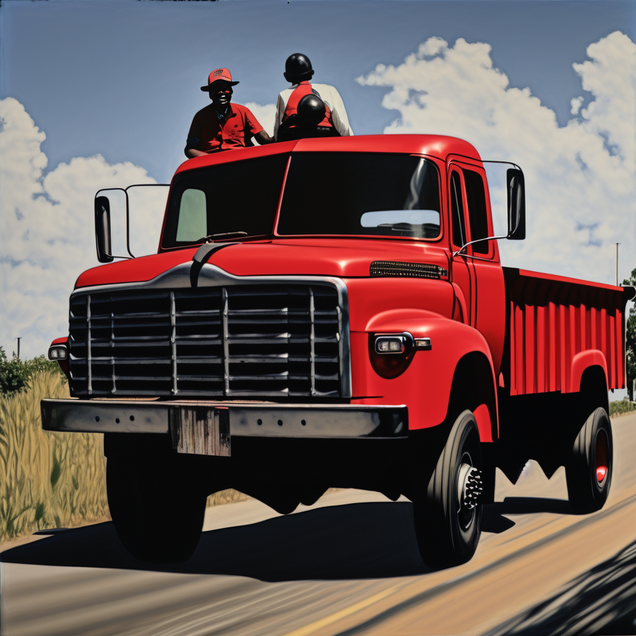} \\[-0.1em]
        & \multicolumn{2}{c}{\scriptsize \textbf{Prompt:} ``A red and black truck with a man riding on back of it.''} \\[0.45em]

        & \includegraphics[width=0.42\linewidth]{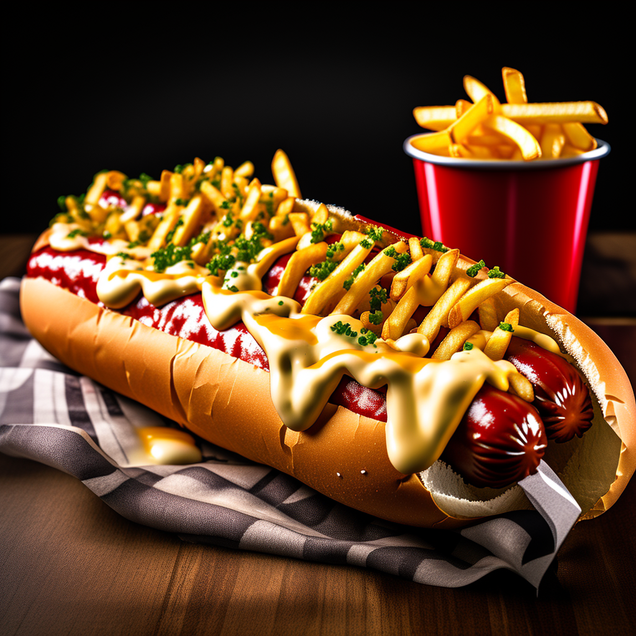}
        & \includegraphics[width=0.42\linewidth]{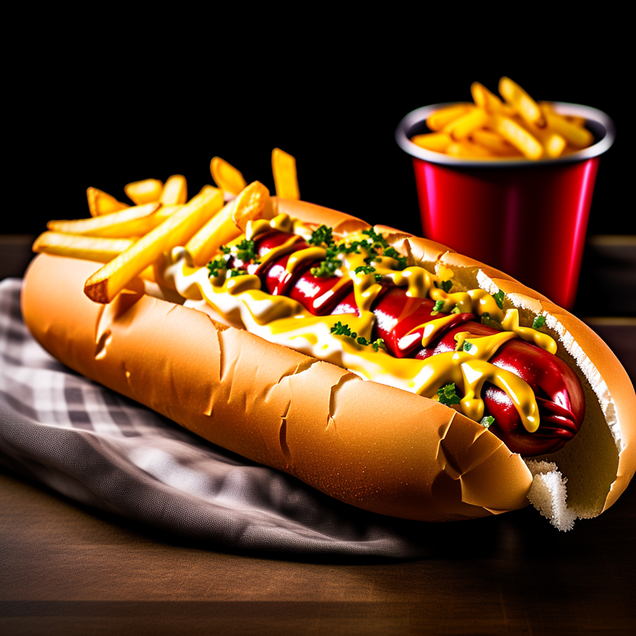} \\[-0.1em]
        & \multicolumn{2}{c}{\scriptsize \textbf{Prompt:} ``Hot dog Philly cheese style next to very large order of fries.''} \\[0.45em]

        & \includegraphics[width=0.42\linewidth]{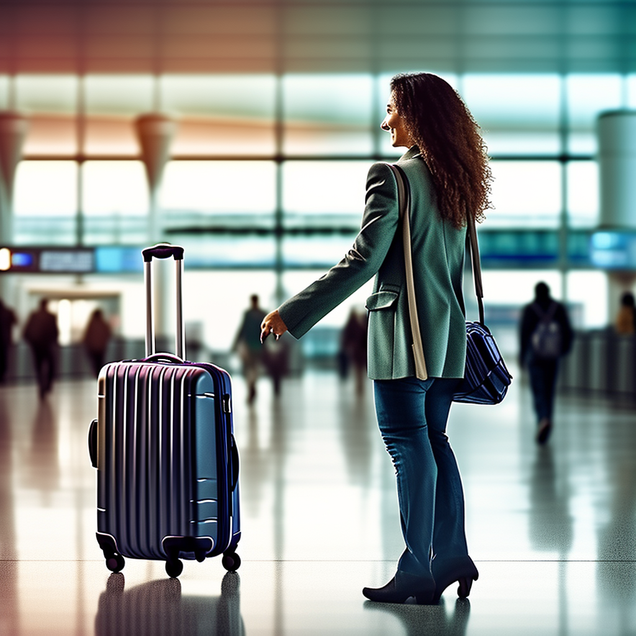}
        & \includegraphics[width=0.42\linewidth]{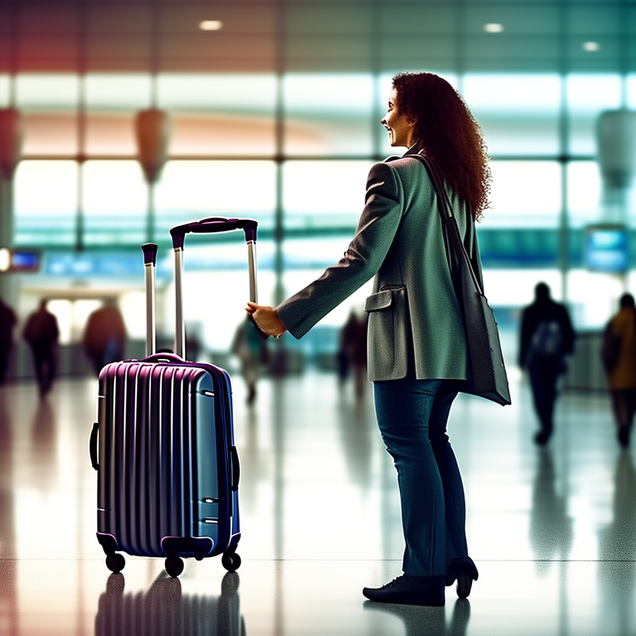} \\[-0.1em]
        & \multicolumn{2}{c}{\scriptsize \textbf{Prompt:} ``Woman grabbing her luggage at an airport terminal.''}
    \end{tabular}

    \caption{Additional qualitative results under high guidance on PixArt-$\alpha$.}
    \label{fig:additional_high_guidance_pixart_2}
\end{figure*}


\begin{figure*}[!t]
    \centering
    \setlength{\tabcolsep}{2pt}
    \renewcommand{\arraystretch}{1.0}
    \scriptsize

    \begin{tabular}{c cc}
        & (a) Ours & (b) CFG \\[0.25em]

        \multirow{6}{*}
        & \includegraphics[width=0.42\linewidth]{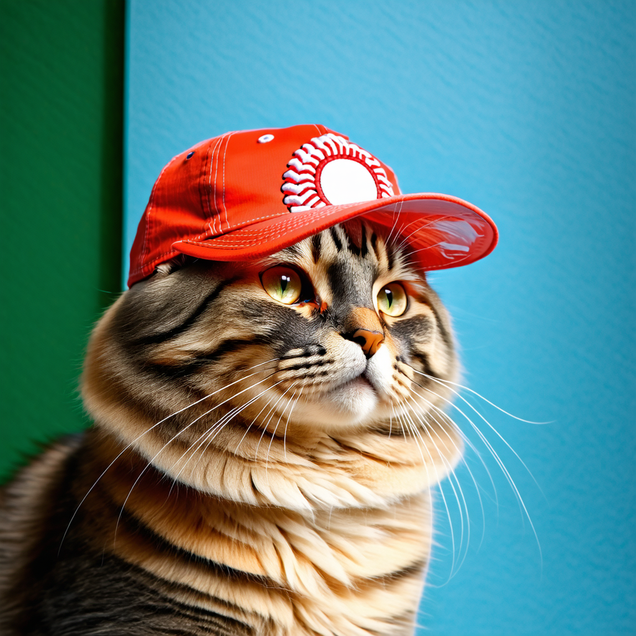}
        & \includegraphics[width=0.42\linewidth]{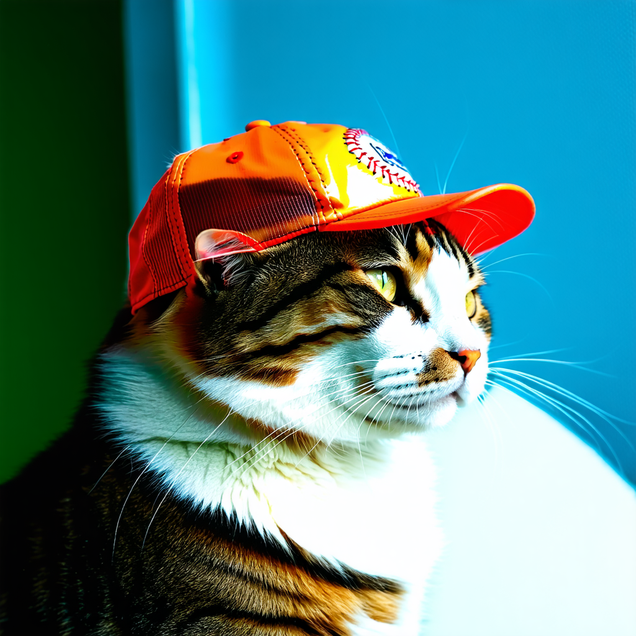} \\[-0.1em]
        & \multicolumn{2}{c}{\scriptsize \textbf{Prompt:} ``A cat is wearing a plastic baseball cap.''} \\[0.45em]

        & \includegraphics[width=0.42\linewidth]{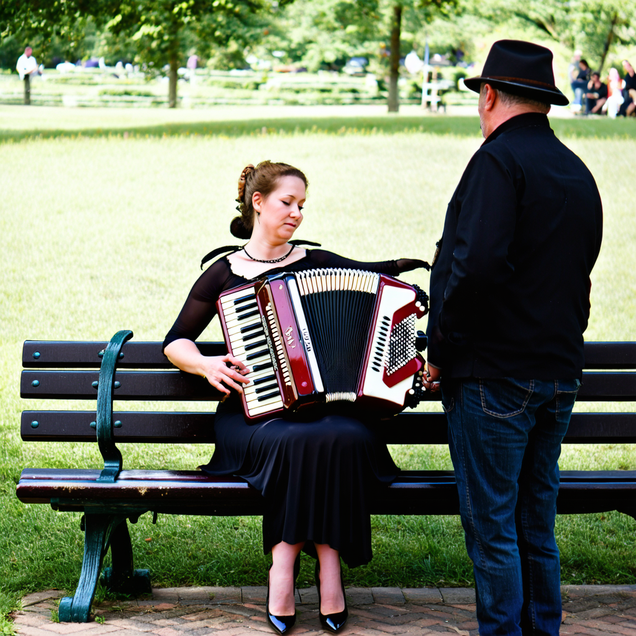}
        & \includegraphics[width=0.42\linewidth]{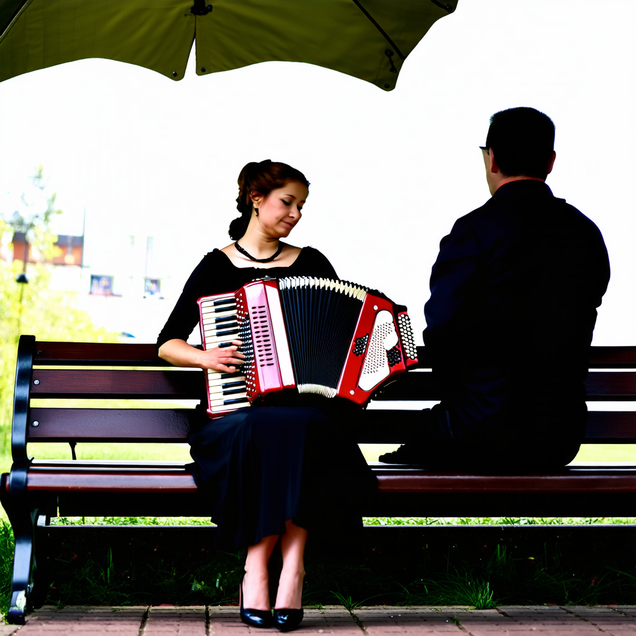} \\[-0.1em]
        & \multicolumn{2}{c}{\scriptsize \textbf{Prompt:} ``A woman on a bench plays an accordion as a man looks on.''} \\[0.45em]

        & \includegraphics[width=0.42\linewidth]{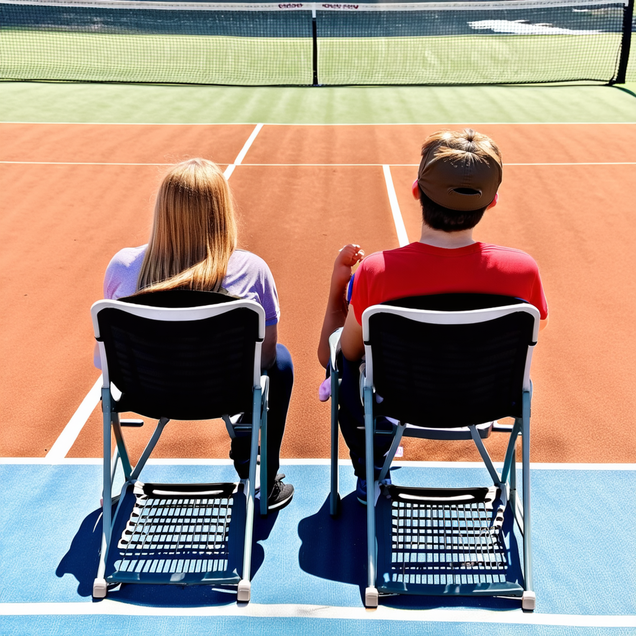}
        & \includegraphics[width=0.42\linewidth]{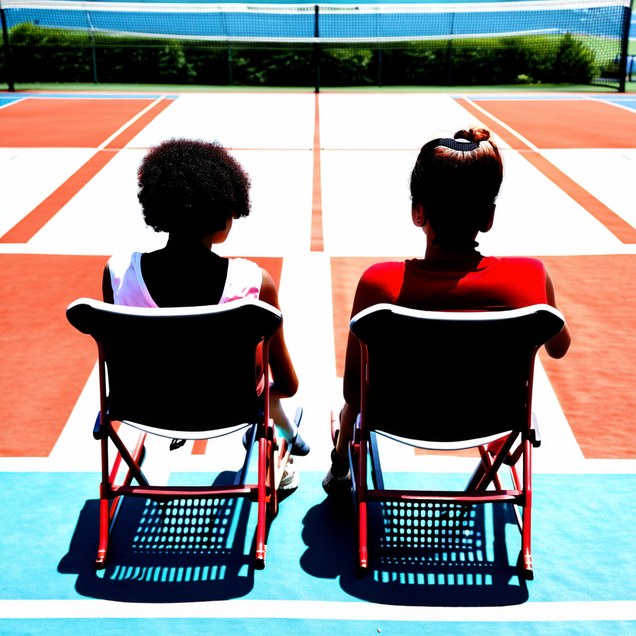} \\[-0.1em]
        & \multicolumn{2}{c}{\scriptsize \textbf{Prompt:} ``two people sitting next to each other on folding chairs on a tennis court''}
    \end{tabular}

    \caption{Additional qualitative results under high guidance on SD3.5.}
    \label{fig:additional_high_guidance_sd35}
\end{figure*}

\begin{figure*}[!t]
    \centering
    \setlength{\tabcolsep}{2pt}
    \renewcommand{\arraystretch}{1.0}
    \scriptsize

    \begin{tabular}{c cc}
        & (a) Ours & (b) CFG \\[0.25em]

        \multirow{6}{*}
        & \includegraphics[width=0.42\linewidth]{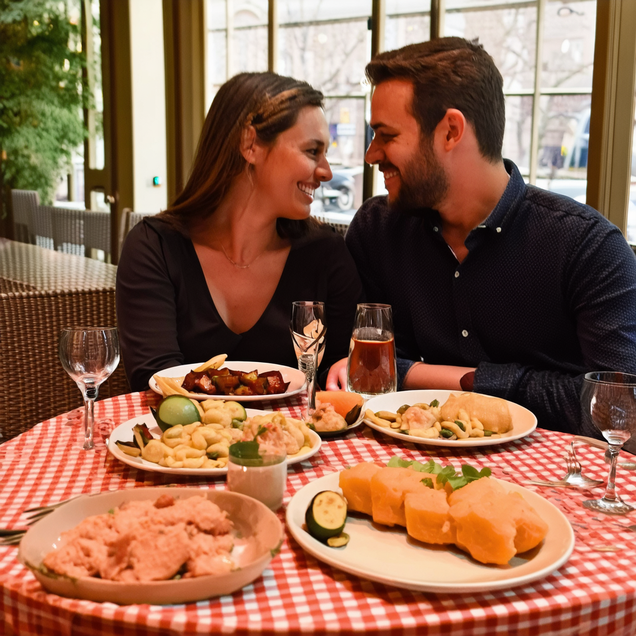}
        & \includegraphics[width=0.42\linewidth]{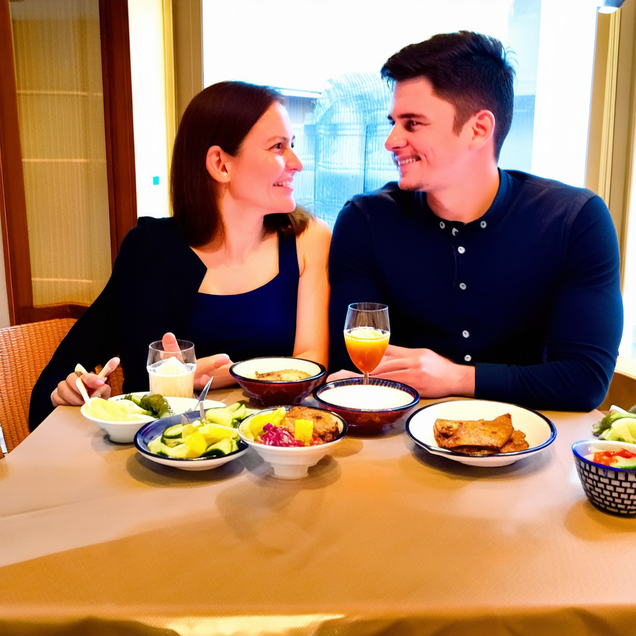} \\[-0.1em]
        & \multicolumn{2}{c}{\scriptsize \textbf{Prompt:} ``A couple is sitting at a table with food.''} \\[0.45em]

        & \includegraphics[width=0.42\linewidth]{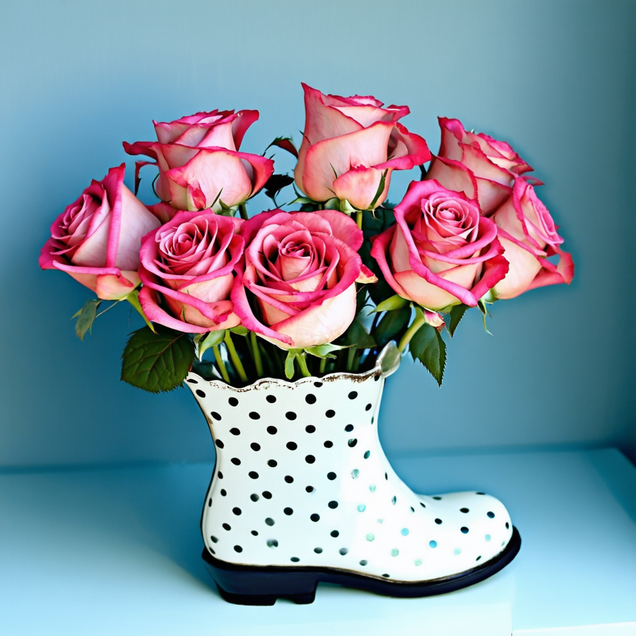}
        & \includegraphics[width=0.42\linewidth]{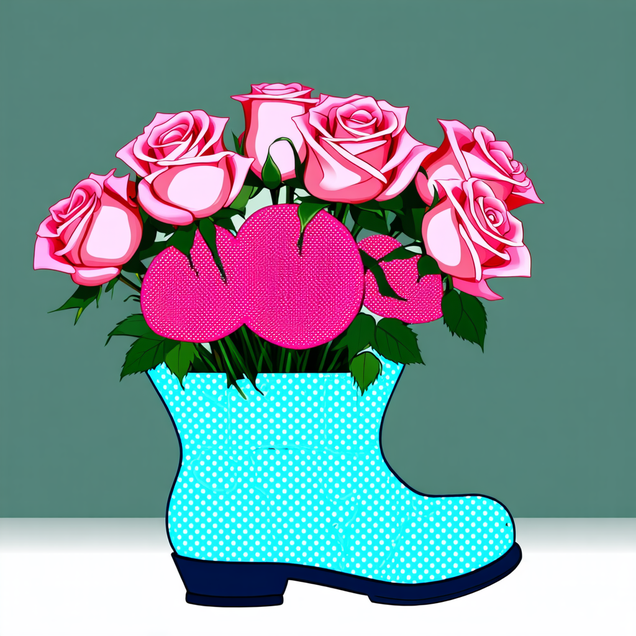} \\[-0.1em]
        & \multicolumn{2}{c}{\scriptsize \textbf{Prompt:} ``Pink roses in a little polka-dotted boot-shaped vase''} \\[0.45em]

        & \includegraphics[width=0.42\linewidth]{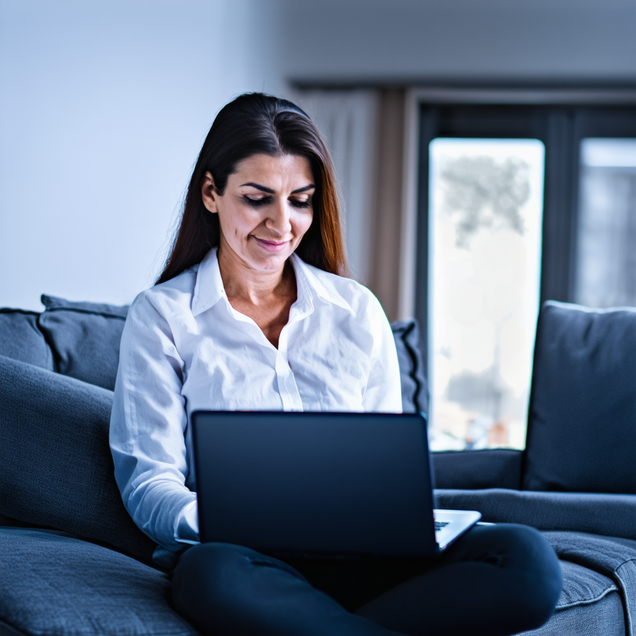}
        & \includegraphics[width=0.42\linewidth]{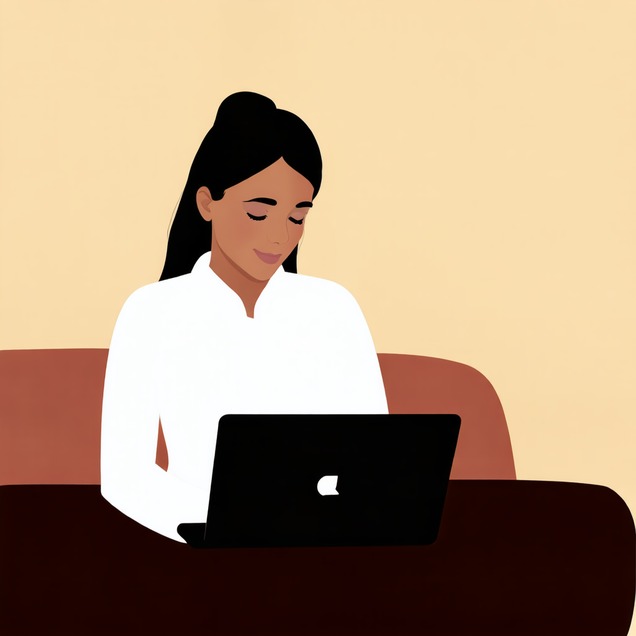} \\[-0.1em]
        & \multicolumn{2}{c}{\scriptsize \textbf{Prompt:} ``A woman in white shirt sitting at couch with a laptop.''}
    \end{tabular}

    \caption{Additional qualitative results under high guidance on SD3.5.}
    \label{fig:additional_high_guidance_sd35_2}
\end{figure*}

\end{document}